\def\eqref#1{equation~\ref{#1}}
\def\1{\bm{1}}
\newcommand{\maximize}{\mathop{\mathbf{max}}}
\newcommand{\minimize}{\mathop{\mathbf{min}}}
\newcommand{\maximizewrt}[1]{\mathop{\underset{#1}{\maximize}}}
\newcommand{\minimizewrt}[1]{\mathop{\underset{#1}{\minimize}}}
\DeclareMathAlphabet{\mathsfit}{\encodingdefault}{\sfdefault}{m}{sl}
\SetMathAlphabet{\mathsfit}{bold}{\encodingdefault}{\sfdefault}{bx}{n}
\DeclareMathOperator*{\argmax}{arg\,max}
\theoremstyle{plain}
\newtheorem{theo}{Theorem}
\newtheorem{prop}{Proposition}
\newtheorem{lem}{Lemma}
\theoremstyle{definition}
\newtheorem{defi}{Definition}
\theoremstyle{remark}
\newtheorem{rmk}{Remark}
\icmltitlerunning{Absolute Policy Optimization}
\begin{document}
\twocolumn[
\icmltitle{Absolute Policy Optimization: Enhancing Lower Probability Bound of Performance with High Confidence}



\icmlsetsymbol{equal}{*}

\begin{icmlauthorlist}
\icmlauthor{Weiye Zhao}{equal,yyy}
\icmlauthor{Feihan Li}{equal,yyy}
\icmlauthor{Yifan Sun}{yyy}
\icmlauthor{Rui Chen}{yyy}
\icmlauthor{Tianhao Wei}{yyy}
\icmlauthor{Changliu Liu}{yyy}
\end{icmlauthorlist}

\icmlaffiliation{yyy}{Robotics Institute, Carnegie Mellon University, USA}

\icmlcorrespondingauthor{Weiye Zhao}{weiyezha@andrew.cmu.edu}
\icmlcorrespondingauthor{Changliu Liu}{cliu6@andrew.cmu.edu}

\icmlkeywords{Machine Learning, ICML}

\vskip 0.3in
]



\printAffiliationsAndNotice{\icmlEqualContribution} 

\begin{abstract}
In recent years, trust region on-policy reinforcement learning has achieved impressive results in addressing complex control tasks and gaming scenarios. However, contemporary state-of-the-art algorithms within this category primarily emphasize improvement in expected performance, lacking the ability to control over the worst-case performance outcomes.
To address this limitation, we introduce a novel objective function, optimizing which leads to guaranteed monotonic improvement in the 
lower probability bound of performance with high confidence. 
Building upon this groundbreaking theoretical advancement, we further introduce a practical solution called Absolute Policy Optimization (APO). 
Our experiments demonstrate the effectiveness of our approach across challenging continuous control benchmark tasks and extend its applicability to mastering Atari games. Our findings reveal that APO as well as its efficient variation Proximal Absolute Policy Optimization (PAPO) significantly outperforms state-of-the-art policy gradient algorithms, resulting in substantial improvements in worst-case performance, as well as expected performance. 
\end{abstract}

\section{Introduction}

Existing reinforcement learning algorithms focus on improving expected cumulative rewards (referred to as \textbf{performance}). 
Within this framework, trust region-based on-policy reinforcement learning algorithms have achieved the most promising results. However, 
the representative trust-region policy optimization (TRPO)~\citep{schulman2015trust} only ensures the monotonic improvement of the expected performance, it fails to exert control over the worst-case performance.
Simultaneously, the deployment of reinforcement learning policies in real-world scenarios demands a high level of consistency, where the performance distribution must be carefully controlled to ensure that even under worst-case conditions, undesirable outcomes are avoided. This is particularly critical in applications such as autonomous driving and intelligent robot manipulation, where robust performance is essential to guarantee safety, reliability, and adherence to desired behaviors.

\begin{figure}[t]
    
    \raisebox{-\height}{\includegraphics[width=\linewidth]{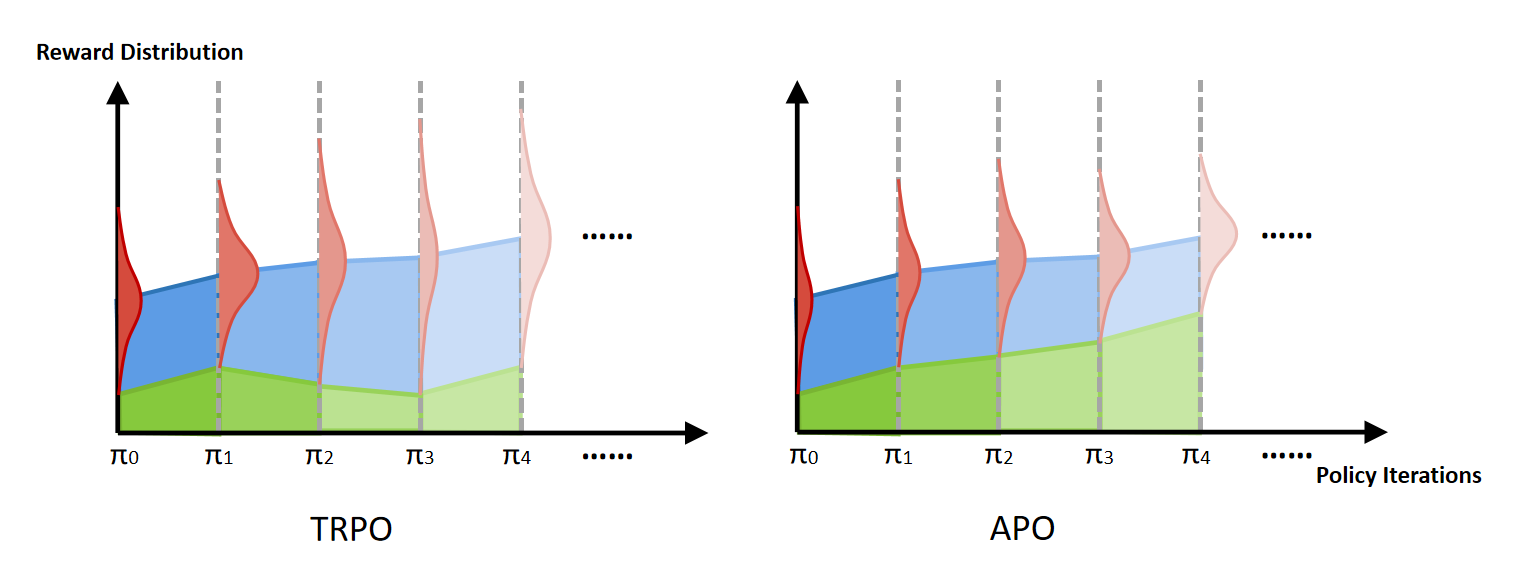}}

    \captionsetup{width=0.8\linewidth}
    \caption{Explanation of APO principles. Here we simplify the distribution of rewards to a Gaussian distribution for displaying. Green and blue represent the worst case and expectation respectively. Y-axis represents the distribution of performance samples introduced in \Cref{sec: absolute bound}. In comparison to TRPO, APO is designed to consistently elevate the lower bound of overall performance.} 
    \label{fig: trpo vs apo}
\vspace{-15pt}
\end{figure}

Distribution control represents a prominent research area across various domains. Existing works in this field can be categorized into three main groups: (i) modeling the entire performance distribution, (ii) controlling worst-case costs, and (iii) mitigating model discrepancies during generalization.
Many \textbf{distributional RL} methods~\citep{marc2017c51, will2017qrdqn} fall into the first category, aiming to model the entire distribution of returns~\cite{will2018iqn}. By incorporating more information about the distribution of rewards into policy gradients, these methods contribute to more stable and efficient learning~\citep{derek2019fqf, Zhou2020NoncrossingQR, kuang2023qemrl, fujimoto2018addressing}. 
While distributional RL is commonly applied in the context of off-policy methods, the convergence characteristics of these methods are not fully understood, typically explored under strict assumptions such as infinite sampling and Q-updates~\citep{fujimoto2018addressing}. It's important to note that distributional RL does not inherently provide direct guarantees for worst-case performance. 
\textbf{Risk-sensitive RL}~\cite{alexander2004comparison}, which falls into the second category, stands as a potent class of safe RL methods designed to ensure high-confidence satisfaction of worst-case costs~\citep{yinlam2015risk,berkenkamp2017safe,chow2018risk,tang2019worst,jain2021variance,chen2023probabilistic,yu2023global}. However, these methods rely on a Lagrangian approach and necessitate intricate assumptions for convergence~\citep{chow2018risk}. 
In the third category, \textbf{robust RL}~\citep{peng2017dr} addresses the challenge of minimizing performance fluctuations caused by applying a policy to different environments with dynamic discrepancies~\cite{aravind2016epopt, jiang2021mrpo, you2022useroriented}. Typically, these methods involve training policies on a diverse set of environments to enhance performance robustness~\citep{peng2017dr, panaganti2022robust}.
Nevertheless, Robust RL does not address the enhancement of performance robustness within a single environment.

Distinct from all the aforementioned categories, 
\textbf{ensuring the improvement of worst-case policy performance for a given environment} remains an uncharted area. In this paper, we address this challenge by introducing a novel trust-region policy optimization method. Unlike distributional RL and risk-sensitive RL, our approach is an on-policy trust-region method that explicitly ensures the monotonic improvement of the worst-case performance without relying on complex assumptions. In contrast to robust RL, our method directly enhances the worst-case performance for a given environment. We delve into existing works and establish connections to our approach in detail in \Cref{sec: related works}.

Specifically, we introduce a novel theoretical breakthrough 
that ensures the monotonic improvement of
the lower probability bound of performance with high confidence.
Subsequently, we implement a series of approximations to transform this theoretically-grounded algorithm into a practical solution, which we refer to as \textbf{A}bsolute \textbf{P}olicy \textbf{O}ptimization (APO). The main idea of APO is illustrated in \cref{fig: trpo vs apo}. Remarkably, APO exhibits scalability and can efficiently optimize nonlinear policies characterized by tens of thousands of parameters. 
Our experimental results underscore the effectiveness of APO and its efficient variation PAPO, demonstrating substantial performance improvements in terms of both worst-case performance and expected performance compared to state-of-the-art policy gradient algorithms. These improvements are evident across challenging continuous control benchmark tasks and extend to the realm of playing Atari games. 
Our code is available on Github.\footnote{\url{https://github.com/intelligent-control-lab/absolute-policy-optimization}}

Our contribution is summarized below:
\begin{itemize}
    \item To the best of the authors' knowledge, the proposed approach is the first to guarantee the monotonic improvement of lower probability bound of performance with high confidence.
\end{itemize}

\section{Problem Formulation}

\subsection{Notations}
Consider an infinite-horizon discounted Markov decision process (MDP) defined by the tuple $(\mathcal{S}, \mathcal{A}, \gamma, \mathcal{R}, P, \mu)$, where $\mathcal{S}$ is the state space, and $\mathcal{A}$ is the control space, $\mathcal{R}: \mathcal{S} \times \mathcal{A} \mapsto \mathbb{R}$ is a bounded reward function, $ 0 \leq \gamma < 1$ is the discount factor, $\mu : \mathcal{S} \mapsto \mathbb{R}$
is the bounded initial state distribution, and $P: \mathcal{S} \times \mathcal{A} \times \mathcal{S} \mapsto \mathbb{R}$
is the transition probability.
$P(s'|s,a)$ is the probability of transitioning to state $s'$ when the agent takes action $a$ at state $s$.
A stationary policy $\pi: \mathcal{S} \mapsto \mathcal{P}(\mathcal{A})$ is a mapping from states to a probability distribution over actions, with $\pi(a|s)$ denoting the probability of selecting action $a$ in state $s$. We denote the set of all stationary policies by $\Pi$. Subsequently, we denote $\pi_\theta$ as the policy that is parameterized by the parameter $\theta$. 

The standard goal for MDP is to learn a policy $\pi$ that maximizes a performance measure
$\mathcal{J}(\pi)$
which is computed via the discounted sum of reward $\mathcal{J}(\pi) = \mathbb{E}_{\tau \sim \pi}\left[\textstyle\sum_{t=0}^\infty \gamma^t R(s_t, a_t, s_{t+1})\right]$,
where $\tau = [s_0, a_0, s_1, \cdots]$, and $\tau \sim \pi$ is shorthand for that the distribution over trajectories depends on $\pi: s_0 \sim \mu, a_t \sim \pi(\cdot | s_t), s_{t+1} \sim P(\cdot|s_t, a_t)$. Mathematically,
\begin{align}
    \label{eq: mdp objective}
    \maximizewrt{\pi \in \Pi} ~ \mathcal{J}(\pi)~.
\end{align}
Additionally, a performance sample is defined here as $R_\pi(s_0) \doteq \sum_{t=0}^\infty \gamma^t R(s_t, a_t, s_{t+1})$, where the state action sequence $\hat\tau = [a_0, s_1, \dots] \sim \pi$ starts with an initial state $s_0$, which follows initial state distribution $\mu$.

The value function is denoted 
as $V_\pi(s) \doteq \mathbb{E}_{\tau \sim \pi}[R_\pi(s) | s_0 = s]$, the action-value function as $Q_\pi(s,a) = \mathbb{E}_{s' \sim P}[Q_\pi(s,a,s')] \doteq \mathbb{E}_{\tau \sim \pi}[R_\pi(s) | s_0 = s, a_0=a]$, and the advantage function as $A_\pi(s,a) = \mathbb{E}_{s' \sim P}[A_\pi(s,a,s')] \doteq Q_\pi(s, a) - V_\pi(s)$. We also define ${\bar A_{\pi',\pi}(s)}$ as the expected advantage of ${\pi'}$ over ${\pi}$ at state ${s}$: $\bar A_{\pi',\pi}(s) \doteq  {\mathbb{E}}_{a \sim \pi'}[A_{\pi}(s,a)]$.

\subsection{Lower Probability Bound of Performance}
\label{sec: absolute bound}

Notice that maximizing the expected reward performance ($\mathcal{J}$), unfortunately, does not provide control over each individual performance sample ($R_\pi(s_0)$) derived from the policy $\pi$. 
In a practical reinforcement learning setting, unexpected poor performance samples can lead to training instability, compromising the reliability of solutions in real-world applications. To tackle this issue, our  insight is that policy optimization should not be solely fixated on enhancing expected performance, but also on improving the
lower bound
originating from the distribution of the variable $R_\pi(s_0)$. For the continuous random variable $R_\pi(s_0)$, the best we can do is to improve the lower probability bound~\cite{pishro2014introduction} of performance with high confidence, defined as:

\begin{defi}[Lower Probability Bound of Performance]
\label{def: abs bound}

    Given a tuple $(\mathcal{B} \in \mathbb{R}, p \in \mathbb{R}^+)$, $\mathcal{B}$ is defined as the lower probability bound of performance with confidence ${p}$. Mathematically:
    \begin{align}
        Pr\big(R_\pi(s_0)\geq \mathcal{B}\big) \geq p,
    \end{align}
    For an unknown performance distribution of policy ${\pi}$, we first define $\mathcal{V}(\pi)$ as the variance of the performance distribution.
    Then, we can leverage 
    the Selberg's inequality theory~\citep{saw1984chebyshev}
    to obtain an lower probability bound of performance as $ \mathcal{B}_k(\pi) \doteq \mathcal{J}(\pi)-k\mathcal{V}(\pi)$, which is guaranteed to satisfy \Cref{def: abs bound} (proved in \Cref{prop: absolute bound definition}) with confidence $p_k^\psi \doteq 1-\frac{1}{k^2 \psi+1} \in (0,1)$. Here $k$ is the probability factor ($k \geq 0$, $k \in \mathbb{R}$) and $\psi = \mathcal{V}_{min} \in \mathbb{R}^+$, where $\mathcal{V}_{min}$ is minima of $\mathcal{V}(\pi)$.
\end{defi}

\begin{rmk}
    \Cref{def: abs bound} shows that more than $p_k^\psi$ of the samples from the distribution of $R_\pi(s_0)$ will be larger than the bound $\mathcal{B}_k(\pi)$. Given a positive constant $\psi$, we can make $p_k^\psi \rightarrow 1$ by setting a large enough $k$, so that $\mathcal{B}_k(\pi)$ represents the lower probability bound of performance with with a confidence level close to 1.
\end{rmk}

\subsection{Problem}
\label{sec:amdp}

In this paper, we focus on improving the lower probability bound of performance in 
 Markov Decision Processes (MDP).
In accordance with \Cref{prop: absolute bound definition}, the overarching objective
is to identify a policy $\pi$ that effectively improves $\mathcal{B}_k(\pi)$.  Mathematically,
\begin{align}
\label{eq: apo optimization original}
     \maximizewrt{\pi\in\Pi} ~ \mathcal{J}(\pi)-k\mathcal{V}(\pi).
\end{align}



\section{Absolute Policy Optimization}

To optimize \eqref{eq: apo optimization original}, we need to evaluate the objective with respect to an unknown $\pi$.
Our main intuition is to find a surrogate function for the objective, such that (i) it represents a tight lower bound of the objective; and (ii) it can be easily estimated from the samples on the most recent policy. To solve large and continuous MDPs, policy search algorithms look for the optimal policy within a set $\Pi_\theta \subset \Pi$ of parametrized policies. Mathematically, APO updates solve the following optimization:
\begin{align}
\label{eq: apo optimization final}
    \pi_{j+1} = \underset{\pi \in \Pi_{\theta}}{\textbf{argmax}} ~ \mathcal{J}^l_{\pi, \pi_j} - k\left({MV}_{\pi,\pi_j} + {VM}_{\pi,\pi_j} \right)
\end{align}
where $\mathcal{J}^l_{\pi, \pi_j}$ represents the lower bound surrogate function for $\mathcal{J}(\pi)$ and $\left({MV}_{\pi,\pi_j} + {VM}_{\pi,\pi_j} \right)$ represents the upper bound surrogate function for $\mathcal{V}(\pi)$ in the (j+1)-th iteration.
\begin{rmk}
     $MV_{\pi,\pi_j}$ reflects the  upper bound of expected variance of the return over different start states. $VM_{\pi,\pi_j}$ reflects the upper bound of variance of the expected return of different start states. The detailed interpretations are shown in \cref{fig:mv and vm} and discussed in \Cref{eq:variance interpretation}, \Cref{lem: bound of MV}, and \Cref{lem: bound of VM}.
\end{rmk}

\begin{rmk}[\textbf{Balance of Exploration and Exploitation}]
   Intuitively, \eqref{eq: apo optimization final} improves performance expectation and minimizes performance variance, where $k$ controls the importance of the two folder objectives. Higher performance variance (decreasing $k$) allows for more exploration, introducing the possibility of bad outcomes, thereby hindering exploitation. Conversely, reducing performance variance (increase $k$) limits exploration, impeding exploration and resulting in convergence to local optima. Hence, a moderate $k$ is desirable, which is further discussed in \Cref{sec: k factor exploration}. \textbf{The objective of TRPO can be viewed as a special case of \eqref{eq: apo optimization final}, where $k=0$.}
\end{rmk}

Here $\mathcal{J}^l_{\pi, \pi_j}, {MV}_{\pi,\pi_j}, {VM}_{\pi,\pi_j}$ are defined as:

\begin{align}
\label{eq: value lower bound}
    &\mathcal{J}^l_{\pi, \pi_j}\doteq\mathcal{J}(\pi_j)\\ \nonumber
    &~~~~~+\frac{1}{1-\gamma} \underset{\substack{\substack{s \sim d^{\pi_j}\\a\sim {\pi}}}}{\mathbb{E}} \bigg[ A_{\pi_j}(s,a) - \frac{2\gamma \epsilon^{\pi}}{1-\gamma} \sqrt{\frac 12 \mathcal{D}_{KL}({\pi} \| \pi_j)[s]} \bigg] \\
\label{eq: mv def}
    &{MV}_{\pi,\pi_j} \doteq \frac{\|\mu^\top\|_\infty}{1-\gamma^2}\underset{s}{\textbf{max}}\Bigg|\underset{\substack{\\a\sim\pi\\s'\sim P}}{\mathbb{E}}\left[A_{\pi_j}(s,a,s')^2\right]\\ \nonumber
    &~~~~~-\underset{\substack{\\a\sim\pi_j\\s'\sim P}}{\mathbb{E}}\left[A_{\pi_j}(s,a,s')^2\right] + |H(s,a,s')|_{max}^2 \\ \nonumber
    &~~~~~+ 2\underset{\substack{a \sim \pi \\ s' \sim P}}{\mathbb{E}}\left[A_{\pi_j}(s,a,s')\right]\cdot|H(s,a,s')|_{max}\Bigg| \\ \nonumber
    &~~~~~+ MV_{\pi_j} + \frac{2\gamma^2\|\mu^\top\|_\infty}{(1-\gamma^2)^2}\sqrt{\frac{1}{2}\mathcal{D}_{KL}^{max}(\pi \| \pi_j)}\cdot\|\Omega_{\pi_j}\|_\infty \\
\label{eq: vm def}
    &{VM}_{\pi,\pi_j} \doteq \|\mu^\top\|_\infty\underset{s}{\textbf{max}}\bigg||\eta(s)|_{max}^2+2|V_{\pi_j}(s)|\cdot|\eta(s)|_{max}\bigg| \\ \nonumber
    &~~~~~-\textbf{min} ~\left( \mathcal{J}(\pi) \right)^2 + \underset{s_0 \sim \mu}{\mathbb{E}} [V_{\pi_j}^2(s_0)]
\end{align}

where $\mathcal{D}_{KL}(\pi \| \pi_j)[s]$ is the KL divergence between $(\pi, \pi_j)$ at state $s$ and $\mathcal{D}_{KL}^{max}(\pi \| \pi_j) = \max_s \mathcal{D}_{KL}(\pi \| \pi_j)[s]$;
$\epsilon^{\pi} \doteq \underset{s}{\textbf{max}}|\mathbb{E}_{a\sim {\pi}}[A_{\pi_j}(s,a)]|$ is the maximum expected advantage;
$d^{\pi_j} \doteq (1-\gamma)\sum_{t=0}^\infty\gamma^t P(s_t=s|{\pi_j})$ is the discounted future state distribution;
$\omega_{\pi_j}(s) \doteq \underset{\substack{a \sim \pi_j\\ s' \sim P}}{\mathbb{E}} \big[Q_{\pi_j}(s,a,s')^2\big] - V_{\pi_j}(s)^2$ is the variance of action value;
$\Omega_{\pi_j} \doteq \begin{bmatrix}
    \omega_{\pi_j}(s^1) &
    \omega_{\pi_j}(s^2) &
    \hdots
\end{bmatrix}^\top$ is the vector of variance of action value;
 $MV_{\pi_j} \doteq \underset{\substack{s_0 \sim \mu}}{\mathbb{E}}[\underset{\substack{{\hat \tau}\sim {\pi_j}}}{\mathbb{V}ar}[R_{\pi_j}(s_0)]$ is the expectation of performance variance over initial state distribution;
 $\textbf{min} ~\left( \mathcal{J}(\pi) \right)^2 \doteq \minimizewrt{\mathcal{J}(\pi) \in [\mathcal{J}^l_{\pi, \pi_j}, \mathcal{J}^u_{\pi, \pi_j}]} \left( \mathcal{J}(\pi) \right)^2 $ is the minimal squared performance,
 where the upper bound of $\mathcal{J}(\pi)$ is defined as $\mathcal{J}^u_{\pi, \pi_j}\doteq\mathcal{J}(\pi_j) + \frac{1}{1-\gamma} \underset{\substack{s \sim d^{\pi_j} \\ a\sim {\pi}}}{\mathbb{E}} \bigg[ A_{\pi_j}(s,a) + \frac{2\gamma \epsilon^{\pi}}{1-\gamma} \sqrt{\frac 12 \mathcal{D}_{KL}({\pi} \| \pi_j)[s]} \bigg]$.
Additionally,
\begin{align}
    \label{eq: H def}
    &|H(s,a,s')|_{max}\doteq\dfrac{2\gamma(1+\gamma)\epsilon}{(1-\gamma)^2}\mathcal{D}_{KL}^{max}(\pi||\pi_j)\\ \nonumber
    &~~~~~+\left|\gamma\underset{\substack{s_0 = s' \\ \hat\tau \sim \pi_j}}{\mathbb{E}}\bigg[\sum_{t=0}^\infty \gamma^t \bar A_{\pi,\pi_j}(s_t)\bigg] - \underset{\substack{s_0 = s \\ \hat\tau \sim \pi_j}}{\mathbb{E}}\bigg[\sum_{t=0}^\infty \gamma^t\bar A_{\pi,\pi_j}(s_t)\bigg] \right| \\ \nonumber
    \label{eq: eta def}
    &|\eta(s)|_{max}\doteq \left| \underset{\substack{s_0 = s \\\hat\tau \sim \pi_j}}{\mathbb{E}}\bigg[\sum_{t=0}^\infty \gamma^t\bar A_{\pi,\pi_j}(s_t)\bigg] \right| \\ \nonumber
    &~~~~~ + \frac{2\gamma \epsilon}{(1-\gamma)^2}\mathcal{D}_{KL}^{max}({\pi} \| \pi_j),
\end{align}
where $\epsilon\doteq \underset{s, a}{\mathbf{max}}|A_{\pi_j}(s,a)|$.

\begin{figure}[t]
    \raisebox{-\height}{\includegraphics[width=\linewidth]{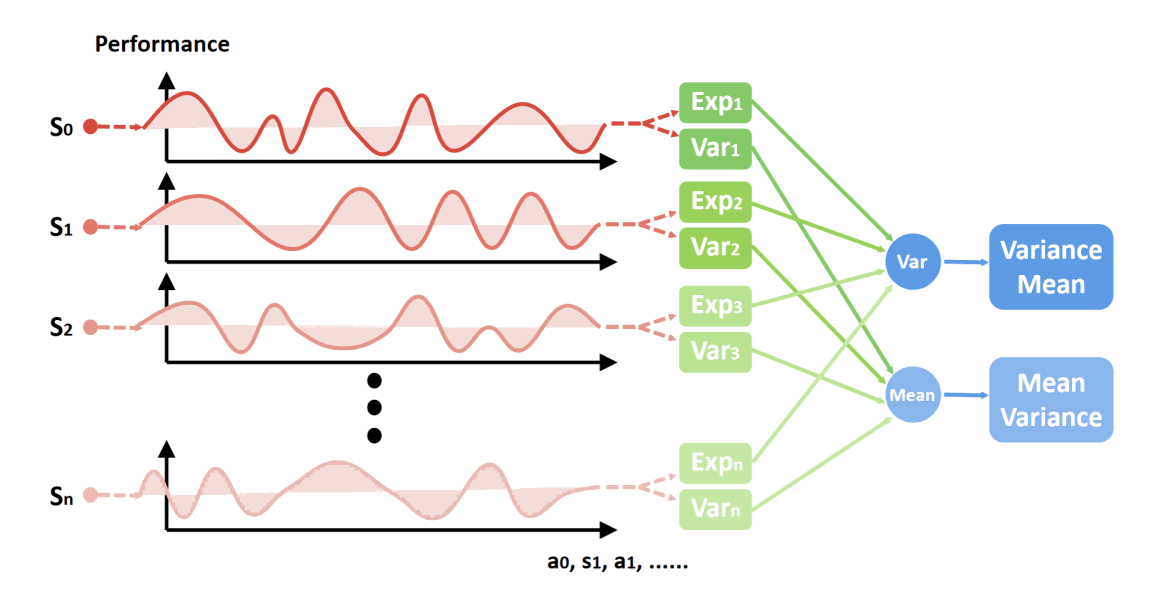}}
    \caption{Explanation of MV and VM. Since performance from different start states belong to a mixture of one-dimensional distributions, the variance of performance can be deconstructed into two components: MeanVariance and VarianceMean.
     } 
    \label{fig:mv and vm}
\end{figure}

\subsection{Theoretical Guarantees for APO}
\label{sec: theory guarantee}

\begin{theo}[Monotonic Improvement of Absolute Performance]
\label{theo: absolute performance improvement} Suppose $\pi, \pi'$ are related by \eqref{eq: apo optimization final}, then absolute performance bound $ \mathcal{B}_k(\pi) = \mathcal{J}(\pi)-k\mathcal{V}(\pi)$ satisfies $\mathcal{B}_k(\pi') \geq \mathcal{B}_k(\pi)$.
\end{theo}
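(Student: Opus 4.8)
The plan is to prove the inequality by a sandwich argument that exploits the feasibility of the previous iterate in \eqref{eq: apo optimization final}. Write $\pi = \pi_j$ and $\pi' = \pi_{j+1}$ as in that update, and abbreviate the APO surrogate objective by $G(\pi) \doteq \mathcal{J}^l_{\pi,\pi_j} - k\big(MV_{\pi,\pi_j} + VM_{\pi,\pi_j}\big)$. The target inequality $\mathcal{B}_k(\pi') \geq \mathcal{B}_k(\pi)$ will follow once we establish three facts: (i) $G(\pi) \leq \mathcal{B}_k(\pi)$ for every $\pi \in \Pi_\theta$; (ii) $G(\pi_j) = \mathcal{B}_k(\pi_j)$; and (iii) $G(\pi') \geq G(\pi_j)$.

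For (i), I would combine the two surrogate relations. First, $\mathcal{J}^l_{\pi,\pi_j} \leq \mathcal{J}(\pi)$: the expression in \eqref{eq: value lower bound} is the trust-region-style lower bound on the expected return, built from the advantage term and the $\mathcal{D}_{KL}$ penalty, and its validity is established earlier in the paper (it generalizes the TRPO surrogate). Second, $MV_{\pi,\pi_j} + VM_{\pi,\pi_j} \geq \mathcal{V}(\pi)$: starting from the law of total variance $\mathcal{V}(\pi) = \mathbb{E}_{s_0\sim\mu}\big[\mathbb{V}ar_{\hat\tau\sim\pi}[R_\pi(s_0)]\big] + \mathbb{V}ar_{s_0\sim\mu}\big[V_\pi(s_0)\big]$ (the MeanVariance/VarianceMean split illustrated in \cref{fig:mv and vm}), \Cref{lem: bound of MV} bounds the first summand by $MV_{\pi,\pi_j}$ and \Cref{lem: bound of VM} bounds the second by $VM_{\pi,\pi_j}$. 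Since $k \geq 0$, adding $-k$ times the second relation to the first yields $G(\pi) = \mathcal{J}^l_{\pi,\pi_j} - k(MV_{\pi,\pi_j} + VM_{\pi,\pi_j}) \leq \mathcal{J}(\pi) - k\mathcal{V}(\pi) = \mathcal{B}_k(\pi)$.

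For (ii), I would check that all surrogates are tight at the base policy. In \eqref{eq: value lower bound}, $\bar A_{\pi_j,\pi_j}(s) = \mathbb{E}_{a\sim\pi_j}[A_{\pi_j}(s,a)] = 0$ and $\mathcal{D}_{KL}(\pi_j\|\pi_j)[s] = 0$, so $\mathcal{J}^l_{\pi_j,\pi_j} = \mathcal{J}(\pi_j)$. Similarly, setting $\pi = \pi_j$ kills every $\pi$-vs-$\pi_j$ discrepancy quantity in \eqref{eq: mv def} and \eqref{eq: vm def} — the KL terms vanish, $|H(s,a,s')|_{max} \to 0$ and $|\eta(s)|_{max} \to 0$, and the difference of the two $\mathbb{E}[A_{\pi_j}^2]$ expectations in $MV$ vanishes — leaving $MV_{\pi_j,\pi_j} + VM_{\pi_j,\pi_j} = MV_{\pi_j} + \mathbb{E}_{s_0\sim\mu}[V_{\pi_j}^2(s_0)] - (\mathcal{J}(\pi_j))^2$, which is exactly $\mathbb{E}_{s_0\sim\mu}[\mathbb{V}ar_{\hat\tau\sim\pi_j}[R_{\pi_j}(s_0)]] + \mathbb{V}ar_{s_0\sim\mu}[V_{\pi_j}(s_0)] = \mathcal{V}(\pi_j)$ by the same total-variance identity. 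Hence $G(\pi_j) = \mathcal{B}_k(\pi_j)$. Fact (iii) is immediate: $\pi_j \in \Pi_\theta$, so it is feasible for the maximization in \eqref{eq: apo optimization final}, and therefore $G(\pi') = \max_{\pi\in\Pi_\theta} G(\pi) \geq G(\pi_j)$. Chaining, $\mathcal{B}_k(\pi') \geq G(\pi') \geq G(\pi_j) = \mathcal{B}_k(\pi_j) = \mathcal{B}_k(\pi)$.

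The main obstacle is ingredient (i), specifically the claim that $MV_{\pi,\pi_j} + VM_{\pi,\pi_j}$ is a genuine upper bound on $\mathcal{V}(\pi)$ together with its tightness at $\pi = \pi_j$; this is where the variance decomposition and the two bounding lemmas do all the work, and one must verify carefully that each cross term and discrepancy term in \eqref{eq: mv def}–\eqref{eq: vm def} collapses at the base policy. By contrast, the lower-bound relation for $\mathcal{J}$ is a routine adaptation of the known TRPO surrogate, and the optimality step (iii) is trivial. I would therefore lean on \Cref{lem: bound of MV} and \Cref{lem: bound of VM} for the bulk of the estimates and only spell out the $\pi = \pi_j$ specialization and the final chaining in the proof of this theorem.
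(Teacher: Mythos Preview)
Your proposal is correct and follows essentially the same argument as the paper: define the surrogate $G(\pi)=\mathcal{M}_k^j(\pi)$, show it is a lower bound on $\mathcal{B}_k(\pi)$ via \Cref{lem: lower bound of mean}, \Cref{lem: bound of MV}, and \Cref{lem: bound of VM}, verify tightness at $\pi_j$ (which the paper isolates as \Cref{lem: b eq m}), and conclude by the feasibility of $\pi_j$ in the maximization \eqref{eq: apo optimization final}. The three-step organization and the specific checks you give for the $\pi=\pi_j$ collapse match the paper's proof structure exactly.
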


The proof for \Cref{theo: absolute performance improvement} is summarized in \Cref{proof: absolute performance improvement}.
In essence, we introduce the function $\mathcal{M}_k^j(\pi) = \mathcal{J}^l_{\pi, \pi_j} - k\left({MV}_{\pi,\pi_j} + {VM}_{\pi,\pi_j} \right)$ (the right-hand side of \eqref{eq: apo optimization final}). Our demonstration establishes that $\mathcal{M}_k^j(\pi)$ serves as the lower bound for the absolute bound $\mathcal{B}_k(\pi)$ through the application of \Cref{lem: bound of MV}, \Cref{lem: bound of VM}, and \Cref{lem: lower bound of mean}.

Subsequently, leveraging the facts that $\mathcal{B}_k(\pi_j) = \mathcal{M}_k^j({\pi_j})$ (proved in [\Cref{lem: b eq m}, \Cref{sec: additional results}]) and  $\mathcal{B}_k(\pi_{j+1})\geq \mathcal{M}_k^j(\pi_{j+1})$, the following inequality holds:
\begin{align}
    \mathcal{B}_k(\pi_{j+1}) - \mathcal{B}_k(\pi_{j}) \geq \mathcal{M}_k^j(\pi_{j+1}) - \mathcal{M}_k^j(\pi_{j})
\end{align}
Therefore, through the maximization of $\mathcal{M}_k^j$ at each iteration (\eqref{eq: apo optimization final}), we ensure the true absolute performance $\mathcal{B}_k$ is non-decreasing. Furthermore, it is noteworthy that $\pi_j$ is always a feasible solution for \eqref{eq: apo optimization final}.

\section{Practical Implementation}
\label{sec: practical implementation}
In this section, we show how to
(i) encourage larger update steps with trust region constraint, (ii) simplify complex computations. The full APO pseudocode is provided as \Cref{alg:apo_main} in \Cref{append: apo}.


\paragraph{Trust Region Constraint} 
In practice, strictly following the theoretical recommendations for the coefficients of KL divergence terms in \eqref{eq: apo optimization final} often leads to very small step sizes. Instead, a practical approach is to enforce a constraint on the KL divergence between the new and old policies~\citep{schulman2015trust}, commonly known as a trust region constraint. This strategy allows for taking larger steps in a robust way:
\begin{align}
\label{eq: apo optimization weighted sum version}
    \pi_{j+1} &= \underset{\pi \in \Pi_{\theta}}{\textbf{argmax}} ~ \frac{1}{1-\gamma} \underset{\substack{s \sim d^{\pi_j} \\ a\sim {\pi}}}{\mathbb{E}} \left[ A_{\pi_j}(s,a) \right] \\ \nonumber 
    &~~~~- k\left(\overline{MV}_{\pi, \pi_j}+\overline{VM}_{\pi, \pi_j}\right) \\ \nonumber
    &\textbf{s.t.}~ \bar{\mathcal{D}}_{KL}(\pi||\pi_j) \leq\delta
\end{align}
where $\delta$ is the step size, $\overline{MV}_{\pi,\pi_j}\doteq{MV}_{\pi,\pi_j} - MV_{\pi_j} - \frac{2\gamma^2\|\mu^\top\|_\infty}{(1-\gamma^2)^2}\sqrt{\frac{1}{2}\mathcal{D}_{KL}^{max}(\pi \| \pi_j)}\cdot\|\Omega_{\pi_j}\|_\infty$ and $\overline{VM}_{\pi.\pi_j} = {VM}_{\pi,\pi_j}-\underset{s_0 \sim \mu}{\mathbb{E}} [V_{\pi_j}^2(s_0)]$.
The set $\{\pi \in \Pi_\theta: \bar{\mathcal{D}}_{KL}(\pi||\pi_j)=\underset{{s \sim \pi_j}}{\mathbb{E}}[\mathcal{D}_{KL}(\pi \| \pi_j)[s]] \leq \delta\}$ is called \textit{trust region}. Notice that $MV_{\pi_j}$ and $\underset{s_0 \sim \mu}{\mathbb{E}}[V_{\pi_j}^2(s_0)]$ are computable constant.

\paragraph{Special Parameters Handling} 
When implementing \Cref{eq: apo optimization weighted sum version}, we first treat two items as hyperparameters. (i) \bm{$\|\mu^\top\|_\infty$}:~ 
Although the infinity norm of $\mu^\top$ is theoretically equal to 1, we found that treating it as a hyperparameter in $\mathbb{R}^+$ enhances performance in practical implementation.
(ii) \bm{$|H(s,a,s')|_{max}$}: ~ We can either compute $|H(s,a,s')|_{max}$ from the most recent policy with \eqref{eq: H def} or treat it as a hyperparameter since $|H(s,a,s')|_{max}$ is bounded for any system with a bounded reward function. In practice, we found that the hyperparameter option helps increase the performance. This setting will be discussed more detailedly in \Cref{sec: ablation of Hmax}. 
(iii) \bm{$|\eta(s)|_{max}$} and $\underset{\bm s}{\bm{\max}}\bm{|~\cdots~|}$: ~ 
Furthermore, we find that taking the average of the state $s$ instead of the maximum will be more superior and stable in terms of convergence results. Note that a similar trick is also applied in \citep{schulman2015trust} to handle maximum KL divergence.

\section{Experiment}
In our experiments, we want to answer the following questions: \\
\textbf{Q1:} How does APO compare with state-of-the-art on-policy RL algorithms?\\
\textbf{Q2:} What benefits are demonstrated by directly optimizing the absolute performance?\\
\textbf{Q3:} Is treating $H_{max}$ as a hyperparameter necessary?\\
\textbf{Q4:} What are the impacts of different probability factor $k$ choices?\\
\textbf{Q5:} What potential does APO hold?\\
\textbf{Q6:} How does APO compare with state-of-the-art algorithms in terms of computational cost? \\
\textbf{Q7:} What are the scenarios where APO is most beneficial or less effective?

\subsection{Experiment Setup}
\label{sec: experiment setup}
To answer the above, we run experiments on both continuous domain and the discrete domain.
\paragraph{Continuous Tasks} Our continuous experiments are conducted on GUARD~\citep{zhao2023guard}, a challenging robot locomotion benchmark build upon Mujoco~\citep{mujoco} and Gym. Seven different robots are included: (i) \textbf{Point: } (\Cref{fig: Point}) A point-mass robot ($\mathcal{A} \subseteq \mathbb{R}^{2}$) that can move on the ground. (ii) \textbf{Swimmer: } (\Cref{fig: Swimmer}) A three-link robot ($\mathcal{A} \subseteq \mathbb{R}^{2}$) that can move on the ground. (iii) \textbf{Arm3: } (\Cref{fig: Arm3}) A fixed three-joint robot arm($\mathcal{A} \subseteq \mathbb{R}^{3}$) that can move its end effector around with high flexibility. (iv) \textbf{Drone: } (\Cref{fig: Drone}) A quadrotor robot ($\mathcal{A} \subseteq \mathbb{R}^{4}$) that can move in the air. (v) \textbf{Hopper: } (\Cref{fig: Hopper}) A one-legged robot ($\mathcal{A} \subseteq \mathbb{R}^{5}$) that can move on the ground. (vi) \textbf{Ant: } (\Cref{fig: Ant}) A quadrupedal robot ($\mathcal{A} \subseteq \mathbb{R}^{8}$) that can move on the ground. (vii) \textbf{Walker: } (\Cref{fig: Walker}) A bipedal robot ($\mathcal{A} \subseteq \mathbb{R}^{10}$) that can move on the ground. Furthermore, three different types of tasks are considered, including (i) \textbf{Goal: } (\Cref{fig: Goal3D1}) robot navigates towards a series of 2D or 3D goal positions. (ii) \textbf{Push: } (\Cref{fig: PushBall1}) robot pushes a ball toward different goal positions. (iii) \textbf{Chase: } (\Cref{fig: Chase}) robot tracks multiple dynamic targets. Considering these different robots and tasks, we design 8 low-dim test suites and 4 high-dim test suits with 7 types of robots and 3 types of tasks, which are summarized in \Cref{tab: testing suites} in Appendix. We name these test suites as \texttt{\{Task Type\}\_\{Robot\}}. Further details are listed in \Cref{appendix:Environment Settings}.

\begin{figure*}[t]
    \centering
    \begin{subfigure}[b]{0.9\textwidth}
        \raisebox{-\height}{\includegraphics[width=\textwidth]{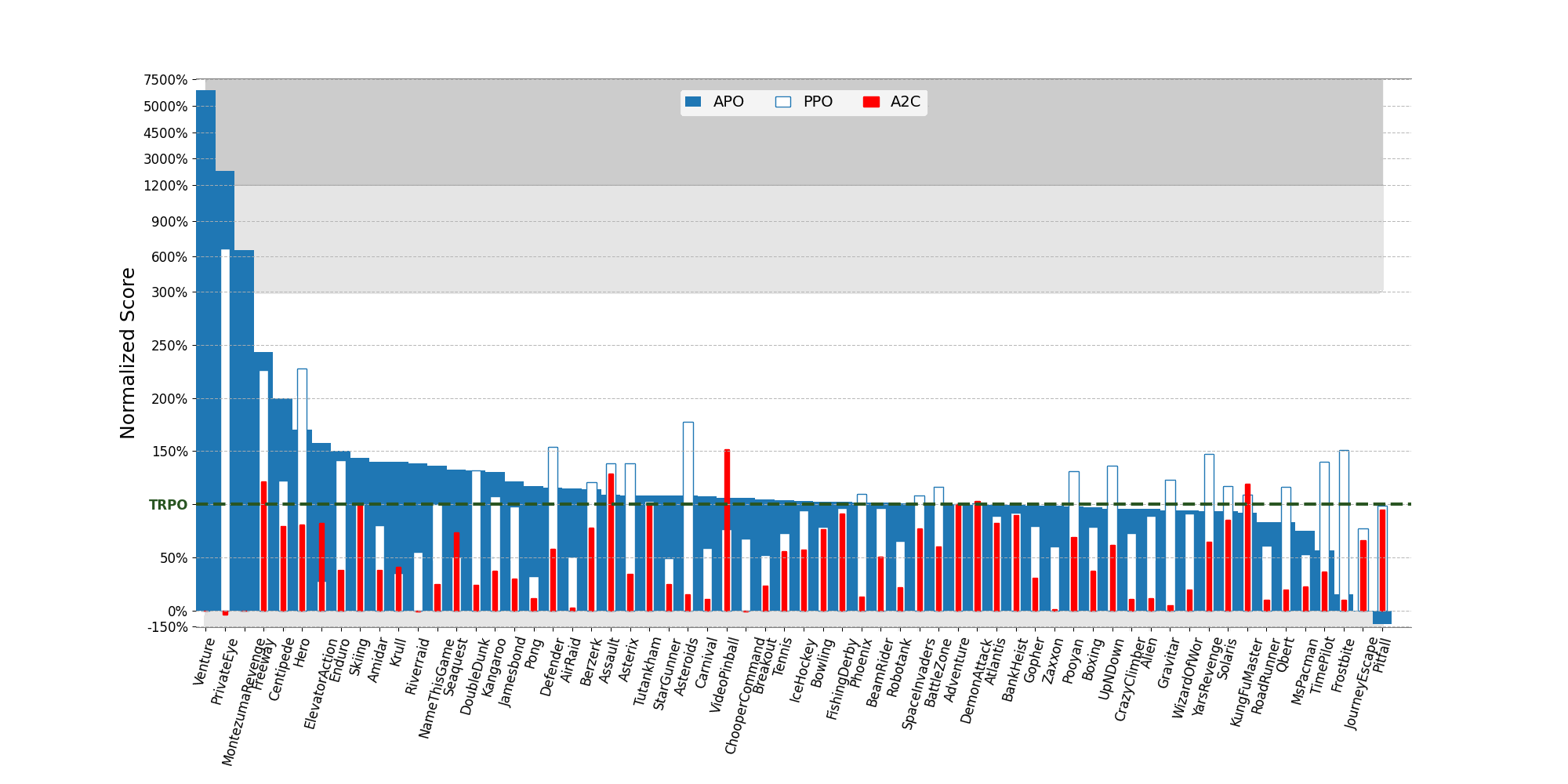}}
    \end{subfigure}
    \caption{Stacked bar chart for all 62 atari games}
    \label{fig:atari hist}
    \vspace{-10pt}
\end{figure*}

\begin{table*}[h]
\begin{center}
\begin{tabular}{cc|ccccc}
\toprule
\multicolumn{1}{c|}{Performance}&Metrics& APO  & PPO & TRPO & A2C & Tie\\
\hline \\[-0.95em]
\multicolumn{1}{c|}{} &(1) average expected reward over all epochs(Atari)& \textbf{26} & 22 & 10 & 3 & 1\\
\multicolumn{1}{c|}{Expected}&(2) average expected reward over last 100 epochs(Atari) & \textbf{29} & 17 & 12 & 3 & 1\\
\multicolumn{1}{c|}{Performance} &(1) average expected reward over all epochs(GUARD)& \textbf{9} & 3 & 0 & 0 & 0\\
\multicolumn{1}{c|}{}&(2) average expected reward over last 20 epochs(GUARD) & \textbf{9} & 2 & 1 & 0 & 0\\
\hline \\[-0.95em]
\multicolumn{1}{c|}{}&(1) average worst reward over all epochs(Atari)& \textbf{26} & 24 & 7 & 3 & 2\\
\multicolumn{1}{c|}{Worst}&(2) average worst reward over last 100 epochs(Atari)& \textbf{27} & 20 & 10 & 3 & 2\\
\multicolumn{1}{c|}{Performance}&(3) average worst reward over all epochs(GUARD)& \textbf{8} & 2 & 1 & 1 & 0\\
\multicolumn{1}{c|}{}&(4) average worst reward over last 20 epochs(GUARD) & \textbf{9} & 1 & 2 & 0 & 0\\
\bottomrule
\end{tabular}
\caption{The number of highest evaluation scores obtained by each algorithm across all test suites}
\label{tab:metric performance}
\end{center}
\vspace{-15pt}
\end{table*}

Additionally, we conduct continuous control experiments on Mujoco Openai Gym \citep{openaigym} and Gymnasium Robotics \citep{matthias2018gymrobotics}. Five high dimensional tasks are considered: (i) \textbf{Humanoid: } (\Cref{fig: Humanoid}) The 3D bipedal robot ($\mathcal{A} \subseteq \mathbb{R}^{17}$) is designed to simulate a human. And the goal of the environment is to walk forward as fast as possible without falling over. (ii) \textbf{Humanoid Standup: } (\Cref{fig: HumanoidStandup}) The robot ($\mathcal{A} \subseteq \mathbb{R}^{17}$) is same with task \textbf{Humanoid}, but the goal is to make the humanoid standup and then keep it standing. These two tasks are also summarized in \Cref{tab: testing suites}. (iii) \textbf{HandReach: } (\Cref{fig: HandReach}) The goal of the task is for the fingertips of the hand ($\mathcal{A} \subseteq \mathbb{R}^{24}$) to reach a predefined target Cartesian position. (iv) \textbf{HandManipulateEgg: } (\Cref{fig: HandManipulateEgg}) The task is to manipulate the egg such that a target pose is achieved (v) \textbf{HandManipulateBlock: } (\Cref{fig: HandManipulateBlock}) The task is to manipulate the block such that a target pose is achieved. Notice that tasks (iii)-(v) employ dense rather than discrete reward setup for learning, see \cite{matthias2018gymrobotics} for details.

\paragraph{Discrete Tasks} 
We also test APO in all 62 Atari environments of \citep{openaigym} which are simulated on the Arcade Learning Environment benchmark \citep{atarigame}.
All experiments are based on `v5' environments and `ram' observation space.

\paragraph{Comparison Group}
We compare APO 
to the state-of-the-art on-policy  RL algorithms: (i) TRPO \citep{schulman2015trust} (ii) Advantage Actor Critic (A2C) \citep{a2c} (iii) PPO by clipping \citep{schulman2017ppo} in both continuous and discrete tasks 
and additionally compare (iv) AlphaPPO \citep{xu2023improving} (v) ESPO \citep{sun2022espo} (vi) V-MPO \citep{song2020vmpo} on continuous tasks.For all experiments, we take the best specific parameters mentioned in the original \cite{xu2023improving} paper and keep the common parameters as the same. In particular, we selected the optimal parameters based on the tuning method of the  paper for each task individually. The policy $\pi$, the value $V^\pi$ are all encoded in feedforward neural networks using two hidden layers of size (64,64) with tanh activations. The full list of parameters of all methods and tasks compared can be found in \Cref{appendix:Policy Settings}. 

\subsection{Comparison to Other Algorithms on the Atari Domain}
\cref{fig: selected atari} shows 12 representative test suites in Atari Domain. All 62 learning curves can be found in \Cref{sec:total experiments}. The hyperparameters for Atari Domain are also provided in \Cref{appendix:Policy Settings}. For the other three algorithms, we used hyperparameters that were tuned to maximize performance on this benchmark. Then we follow the metrics of \citep{schulman2017ppo} to quantitatively evaluate the strengths of APO: (i) average expected reward per episode over \textbf{all epochs of training} (which favors fast learning), and (ii) average expected reward per episode over \textbf{last 100 (20 of GUARD) epochs of training} (which favors final performance). \Cref{tab:metric performance} records the number of highest evaluation scores obtained by each algorithm across all games. To compare the performance of all testing algorithm to the TRPO baseline across games, we slightly change the normalization algorithm proposed by \citep{vanhasselt2015deep} to obtain more reasonable score (See \Cref{appendix:Score Settings} for further explanation) in percent. The score we used is average reward per episode over last 100 (20 of GUARD) epochs of training:
\begin{align}
\label{normlized score}
    &{\Delta}_1 \doteq score_{agent}-score_{random} \\ \nonumber
    &{\Delta}_2 \doteq score_{TRPO}-score_{random} \\ \nonumber
    &score_{normalized} = \frac{\Delta_2}{\Delta_1} ~ if ~ \Delta_1 < 0 ~ and ~ \Delta_2 <0 ~ else ~ \frac{\Delta_1}{\Delta_2}
\end{align}
Then we use a stacked bar chart  in \Cref{fig:atari hist} to visualize APO's capabilities. \Cref{fig:atari hist} show that APO has a superior combination of capabilities compared to other algorithms. The statistics of the algorithm in terms of expected performance improvement are presented in \Cref{tab:metric performance}. So far the above experimental comparison answers \textbf{Q1}.

\begin{figure*}[t]
    \centering
    \begin{subfigure}[t]{0.24\textwidth}
        \begin{subfigure}[t]{1.00\textwidth}
            \raisebox{-\height}{\includegraphics[width=\textwidth]{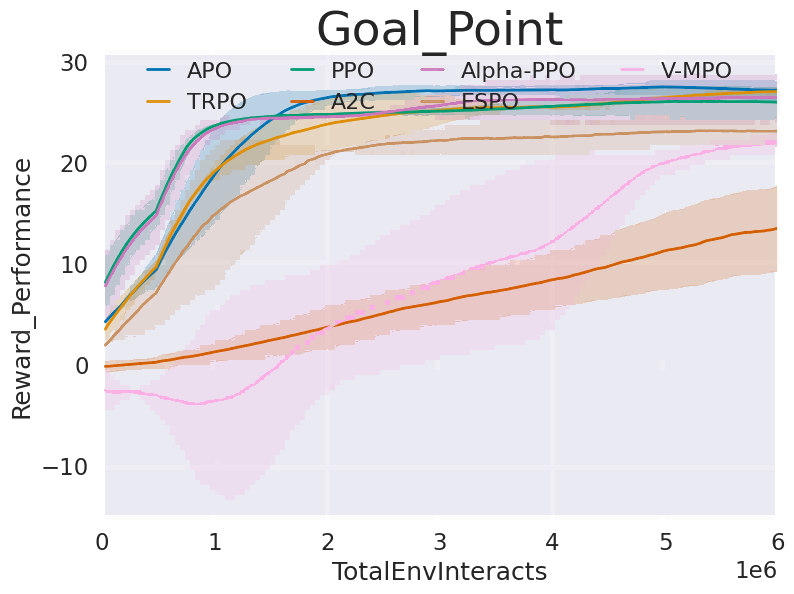}}
            \label{fig:Goal_Point_Reward_Performance}
        \end{subfigure}
    \end{subfigure}
   \begin{subfigure}[t]{0.24\textwidth}
        \begin{subfigure}[t]{1.00\textwidth}
            \raisebox{-\height}{\includegraphics[width=\textwidth]{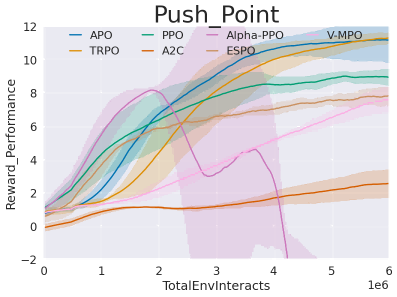}}
        \label{fig:Push_Point_Reward_Performance}
        \end{subfigure}
    \end{subfigure}
    \begin{subfigure}[t]{0.24\textwidth}
        \begin{subfigure}[t]{1.00\textwidth}
            \raisebox{-\height}{\includegraphics[width=\textwidth]{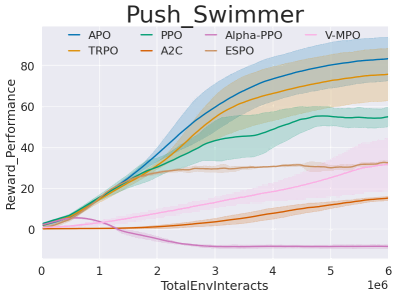}}
            \label{fig:Push_Swimmer_Reward_Performance}
        \end{subfigure}
    \end{subfigure}
    \begin{subfigure}[t]{0.24\textwidth}
        \begin{subfigure}[t]{1.00\textwidth}
            \raisebox{-\height}{\includegraphics[width=\textwidth]{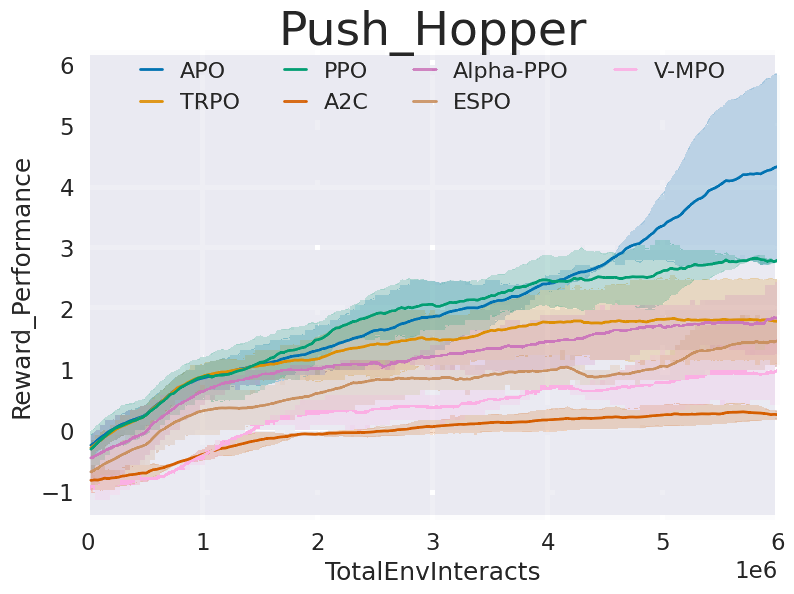}}
        \label{fig:Push_Hopper_Reward_Performance}
        \end{subfigure}
    \end{subfigure}
    \caption{Comparison of results from four representative test suites in low dimensional continuous systems}
    \label{fig:low-dim guard result}
    \vspace{-10pt}
\end{figure*}

\begin{figure*}[t]
    \centering
    \begin{subfigure}[t]{0.24\textwidth}
        \begin{subfigure}[t]{1.00\textwidth}
            \raisebox{-\height}{\includegraphics[width=\textwidth]{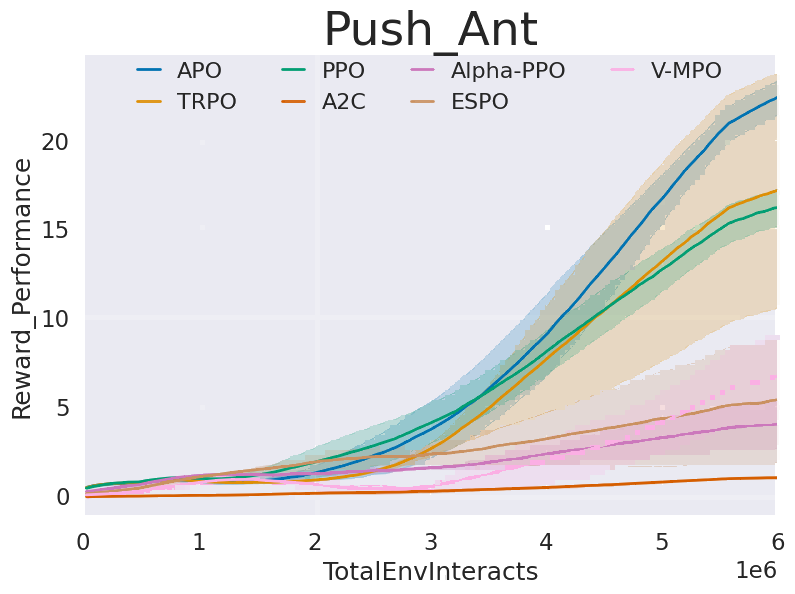}}
            \label{fig:Push_Ant_Reward_Performance}
        \end{subfigure}
    \end{subfigure}
   \begin{subfigure}[t]{0.24\textwidth}
        \begin{subfigure}[t]{1.00\textwidth}
            \raisebox{-\height}{\includegraphics[width=\textwidth]{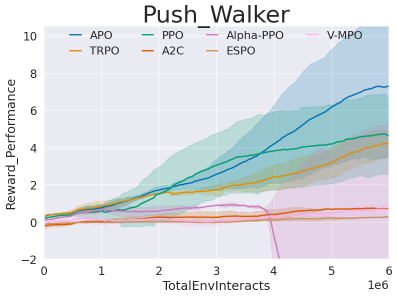}}
        \label{fig:Push_Walker_Reward_Performance}
        \end{subfigure}
    \end{subfigure}
    \begin{subfigure}[t]{0.24\textwidth}
        \begin{subfigure}[t]{1.00\textwidth}
            \raisebox{-\height}{\includegraphics[width=\textwidth]{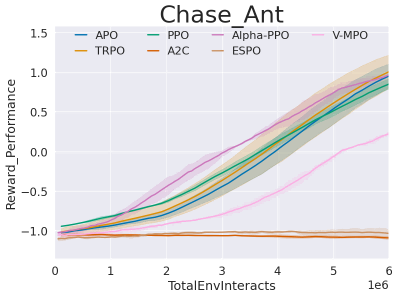}}
            \label{fig:Chase_Ant_Reward_Performance}
        \end{subfigure}
    \end{subfigure}
    \begin{subfigure}[t]{0.24\textwidth}
        \begin{subfigure}[t]{1.00\textwidth}
            \raisebox{-\height}{\includegraphics[width=\textwidth]{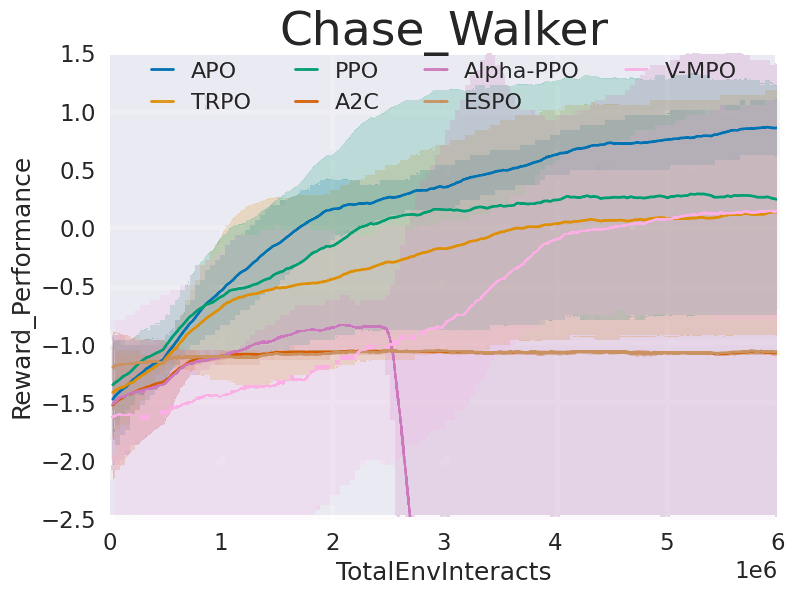}}
        \label{fig:Chase_Walker_Reward_Performance}
        \end{subfigure}
    \end{subfigure}
    \caption{Comparison of results from four representative test suites in high dimensional continuous systems} 
    \label{fig:high-dim guard result}
\end{figure*}

\subsection{Comparison to Other Algorithms in GUARD Continuous Domain}

\paragraph{Low dimension} 
\Cref{fig:low-dim guard result} shows representative comparison results on a low dimensional system (See \Cref{sec:total experiments} for all results). APO is successful at getting a more steady and higher final reward. We notice that PPO only gains faster convergence in part of the simplest task owing to its exploration abilities, the advantage decreases rapidly with more complex tasks such as PUSH. In difficult tasks, APO can perform best at the combined level of convergence speed and final performance.

\paragraph{High dimension}
\Cref{fig:high-dim guard result} reports the comparison results on challenging high-dimensional \{PUSH, CHASE\}\_Ant and \{PUSH, CHASE\}\_Walker tasks, where APO outperforms other baselines in getting higher reward and convergence speed. It is worth noting that we can clearly observe that PPO-related methods perform more erratically in complex tasks (PUSH and CHASE) and are not as good as TRPO in benchmark performance.

We also performed quantitative statistics on the 12 sets of continuous tasks tested. The results are presented in \Cref{tab:metric performance} and metrics we used are presented in the next section.

\subsection{Worst-case Performance Comparison}
We use a large probability factor $k$ in practical implementation, which means we are close to optimizing the lower bound for all samples. Thus we use another two similar metrics to evaluate the effectiveness of algorithms for lower bound lifting: (iii) average worst reward per episode over \textbf{all epochs of training}, and (iv) average worst reward per episode over \textbf{last 100 or 20 epochs of training}. We summarize the worst-case performance of APO in Atari games and GUARD in \Cref{tab:metric performance}, which answers \textbf{Q2}.

\subsection{Ablation on $H_{max}$ Hyperparameter Trick}
\label{sec: ablation of Hmax}

\begin{figure}[h]
    \centering
    \begin{subfigure}[b]{0.49\linewidth}
        \begin{subfigure}[t]{1.00\linewidth}\raisebox{-\height}{\includegraphics[width=\linewidth]{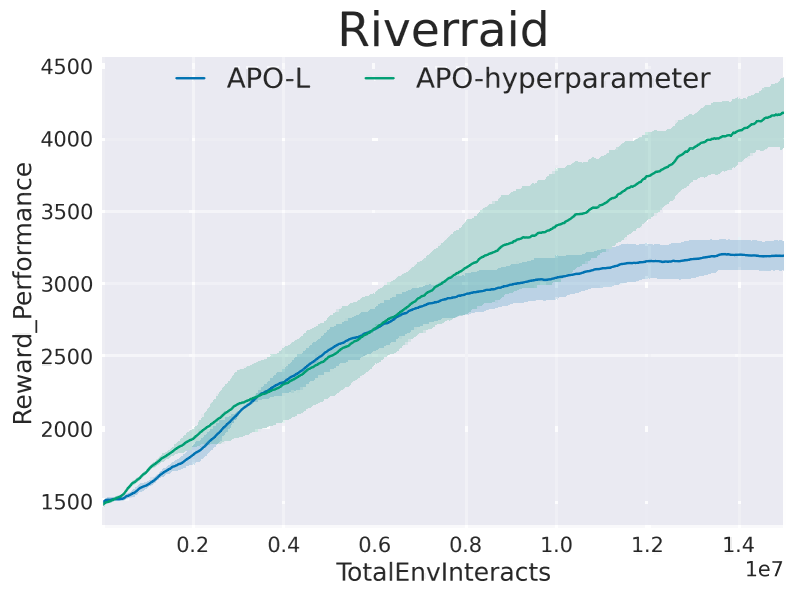}}
        \end{subfigure}
    \end{subfigure}
    \hfill
    \begin{subfigure}[b]{0.49\linewidth}
        \begin{subfigure}[t]{1.00\linewidth}\raisebox{-\height}{\includegraphics[width=\linewidth]{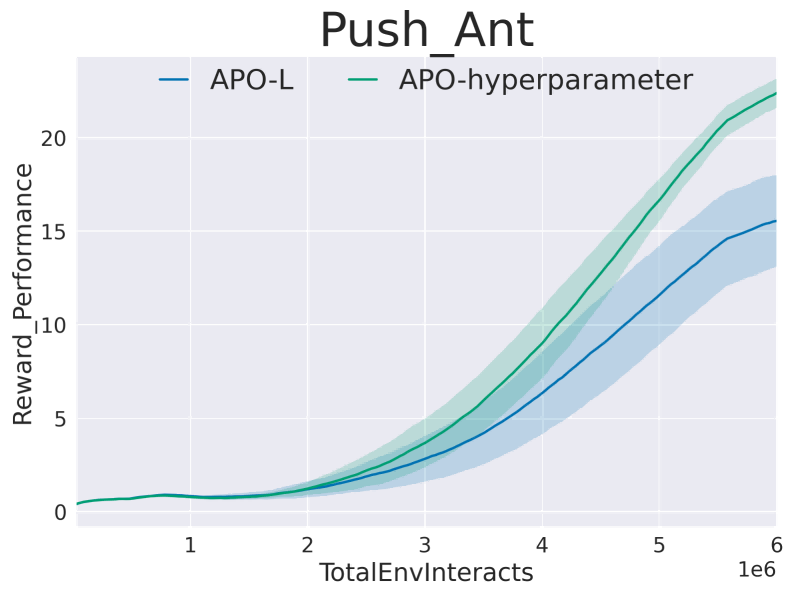}}
        \end{subfigure}
    \end{subfigure}
    \caption{Ablation on $H_{max}$ hyperparameter trick} 
    \label{fig: ablation on hmax}
    \vspace{-10pt}
\end{figure}

We chose Riverraid of discrete tasks and PUSH\_Ant of continuous tasks to perform ablation experiments against the $|H(s,a,s')|_{max}$ implementation. \Cref{fig: ablation on hmax} shows that although both boosts are similar in the early stages of tasks, hyperparameter methods can more consistently converge to a higher reward value. Thus, the figures and description answer \textbf{Q3}.

\subsection{Ablation on Probability Factor $k$}
\label{sec: k factor exploration}
For ablation, we selected Riverraid to investigate the impact of different choices for the probability factor $k$. As illustrated in Figure \ref{fig: k ablation}, when $k$ takes on a very small value, indicating optimization of only a limited portion of performance samples, the effectiveness diminishes. This is attributed to the loss of control over the lower probability bound of performance because of the small confidence level. Conversely, when $k$ becomes excessively large, the optimization shifts its focus towards the most extreme worst-case performance scenarios. This ultra-conservative approach tends to render the overall optimization less effective. Therefore, a moderate choice of $k$ will be favorable to the overall improvement of the effect, which answers \textbf{Q4}.

\begin{figure*}[t]
    \centering
    \begin{subfigure}[t]{0.19\textwidth}
        \begin{subfigure}[t]{1.00\textwidth}
            \raisebox{-\height}{\includegraphics[width=\textwidth]{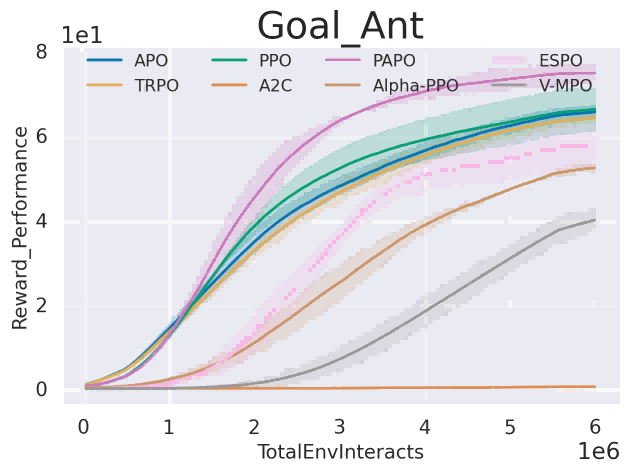}}
        \end{subfigure}
    \end{subfigure}
   \begin{subfigure}[t]{0.19\textwidth}
        \begin{subfigure}[t]{1.00\textwidth}
            \raisebox{-\height}{\includegraphics[width=\textwidth]{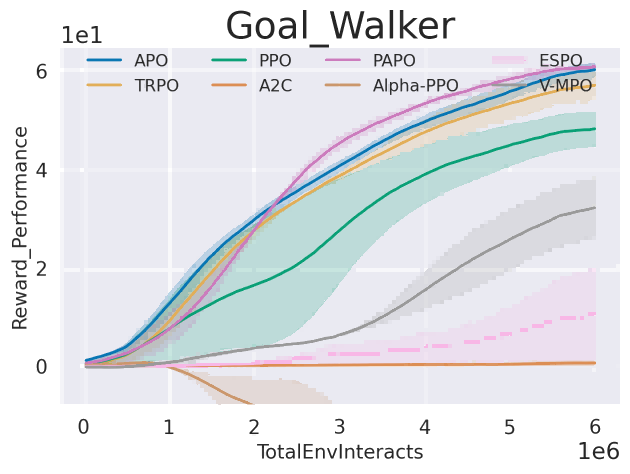}}
        \end{subfigure}
    \end{subfigure}
    \begin{subfigure}[t]{0.19\textwidth}
        \begin{subfigure}[t]{1.00\textwidth}
            \raisebox{-\height}{\includegraphics[width=\textwidth]{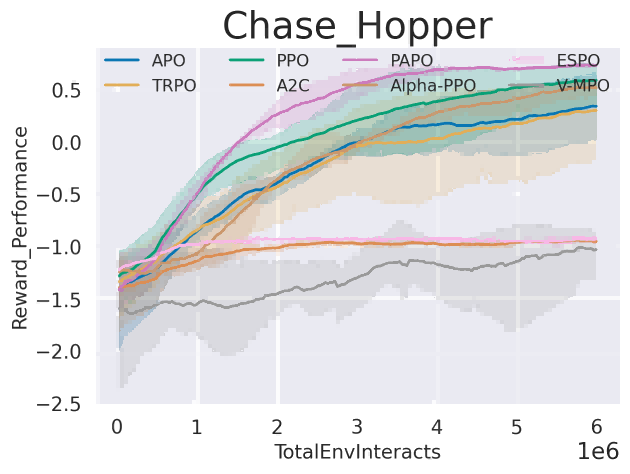}}
        \end{subfigure}
    \end{subfigure}
    \begin{subfigure}[t]{0.19\textwidth}
        \begin{subfigure}[t]{1.00\textwidth}
            \raisebox{-\height}{\includegraphics[width=\textwidth]{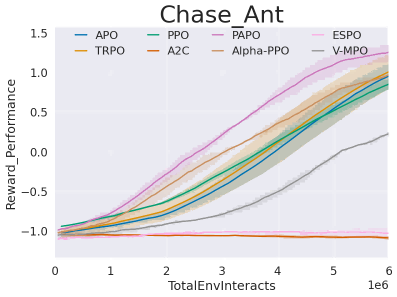}}
        \end{subfigure}
    \end{subfigure}
    \begin{subfigure}[t]{0.19\textwidth}
        \begin{subfigure}[t]{1.00\textwidth}
            \raisebox{-\height}{\includegraphics[width=\textwidth]{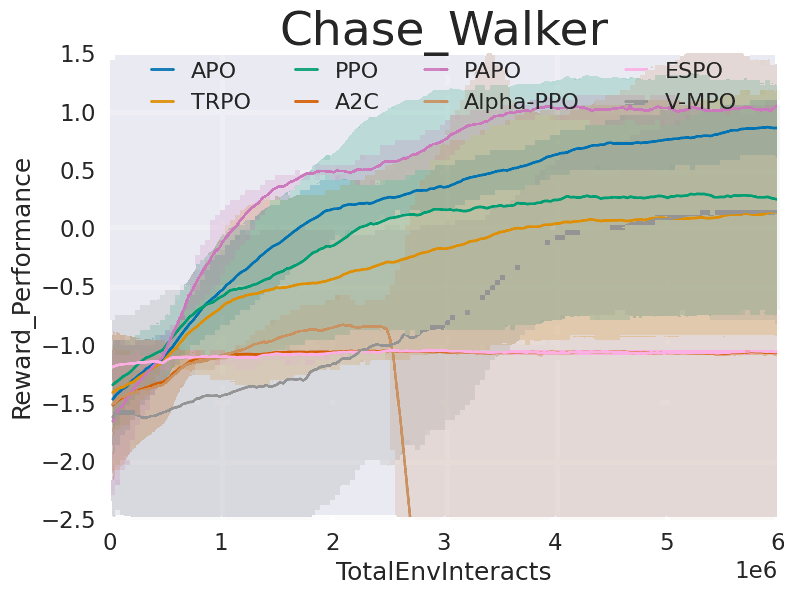}}
        \end{subfigure}
    \end{subfigure}
    \caption{Comparison results of PAPO from five representative test suites in GUARD}
    \label{fig:papo comparison}
\end{figure*}

\begin{figure*}[t]
    \centering
    \begin{subfigure}[t]{0.19\textwidth}
        \begin{subfigure}[t]{1.00\textwidth}
            \raisebox{-\height}{\includegraphics[width=\textwidth]{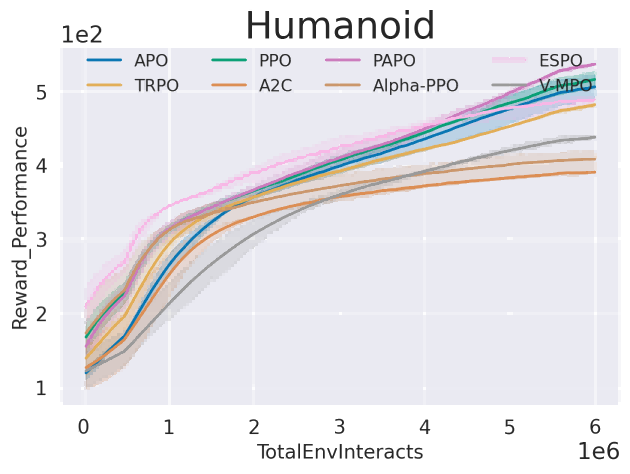}}
        \end{subfigure}
    \end{subfigure}
   \begin{subfigure}[t]{0.19\textwidth}
        \begin{subfigure}[t]{1.00\textwidth}
            \raisebox{-\height}{\includegraphics[width=\textwidth]{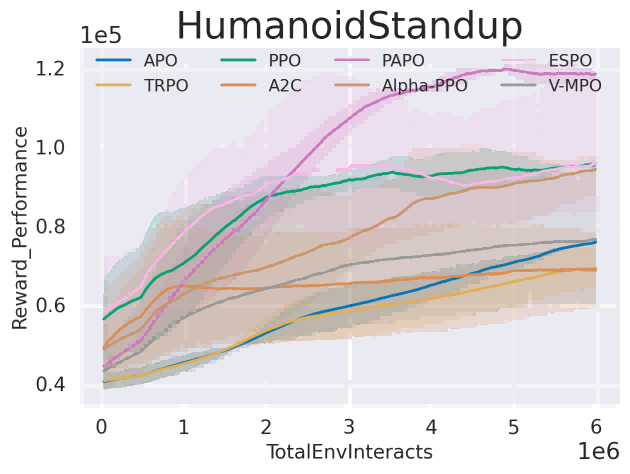}}
        \end{subfigure}
    \end{subfigure}
    \begin{subfigure}[t]{0.19\textwidth}
        \begin{subfigure}[t]{1.00\textwidth}
            \raisebox{-\height}{\includegraphics[width=\textwidth]{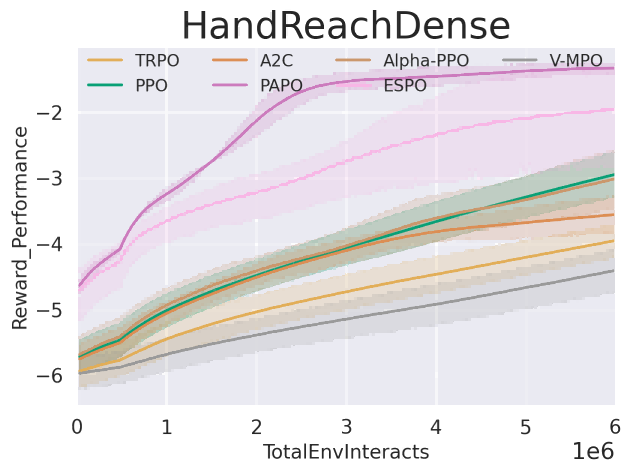}}
        \end{subfigure}
    \end{subfigure}
    \begin{subfigure}[t]{0.19\textwidth}
        \begin{subfigure}[t]{1.00\textwidth}
            \raisebox{-\height}{\includegraphics[width=\textwidth]{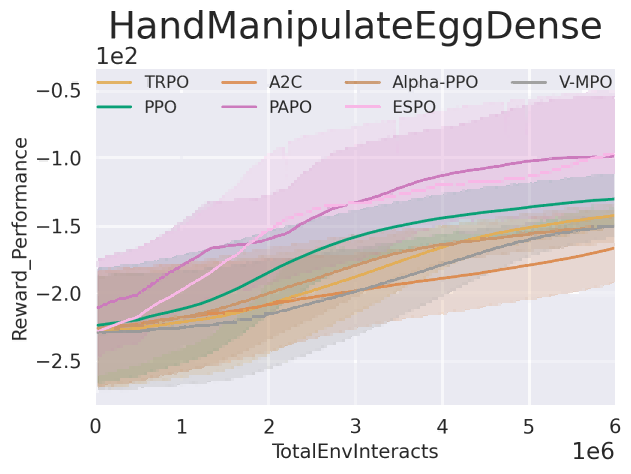}}
        \end{subfigure}
    \end{subfigure}
    \begin{subfigure}[t]{0.19\textwidth}
        \begin{subfigure}[t]{1.00\textwidth}
            \raisebox{-\height}{\includegraphics[width=\textwidth]{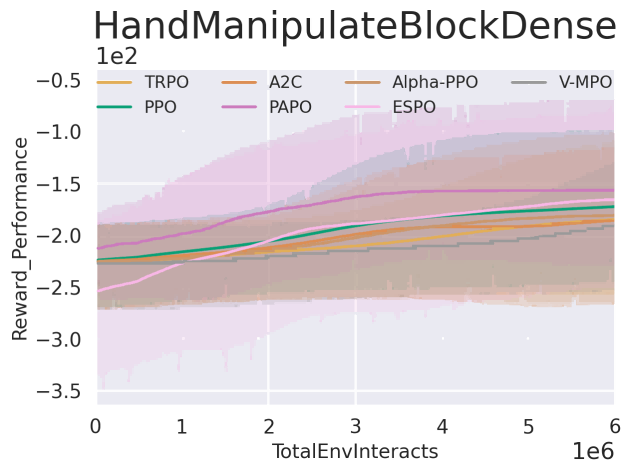}}
        \end{subfigure}
    \end{subfigure}
    \caption{Comparison results of PAPO from five representative test suites in Mujoco and Gymnasium benchmark}
    \label{fig:papo comparison high dim}
    \vspace{-10pt}
\end{figure*}


In the following subsection, we will show a simple strategy can greatly improve the APO computation efficiency and performance.

\begin{wrapfigure}{tr}{0.48\linewidth}
    \centering
    \begin{subfigure}[t]{1.00\linewidth}\raisebox{-\height}{\includegraphics[width=\linewidth]{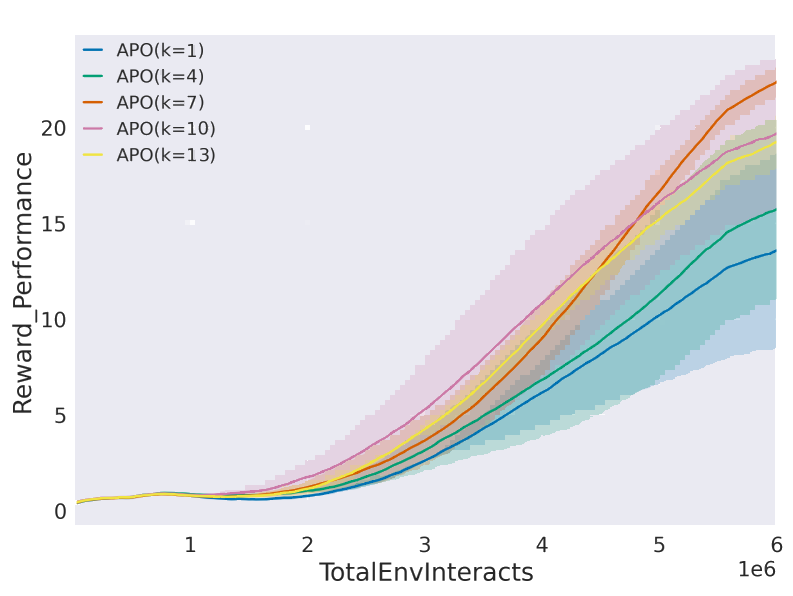}}
    \end{subfigure}
    \caption{Ablation on probability factor $k$}
    \label{fig: k ablation}
    \vspace{-10pt}
\end{wrapfigure}

\subsection{Proximal Absolute Policy Optimization (PAPO)}
With APO's proven success in addressing demanding tasks in both continuous control and Atari game playing, a natural question arises: What potential does APO hold? To shed light on this inquiry, we undertake a natural extension of APO by incorporating a successful variation from TRPO to PPO, i.e. introduction of a \textbf{Clipped Surrogate Objective}~\citep{schulman2017ppo}.

In our experiment, this clipping is implemented with a simple method: \textbf{early stopping}. Specifically, multiple steps of stochastic gradient descent are taken to maximize the objective of \eqref{eq: apo optimization weighted sum version}.
If the mean KL-divergence of the new policy from the old grows beyond a threshold ($\delta$), we stop taking gradient steps. We call this enhanced version of APO as \textit{Proximal Absolute Policy Optimization (PAPO)}.

\subsubsection{Compare PAPO with Baseline Algorithms}

Next we will focus on challenging and high-dimensional tasks in GUARD where trust-region based approaches usually struggle to get good results. The detailed test suites can be found in \Cref{tab: papo testing suites}. The summarized comparison results are presented in \Cref{fig:papo comparison}, highlighting PAPO's superior performance over previous baseline methods, including APO, across all these GUARD challenging control environments. Notably, PAPO significantly improves APO's performance in challenging continuous tasks and outperforms PPO by a learning speed and converged reward, as evidenced by the learning curves.

\subsubsection{Showcase in the Continuous Domain: Mujoco and Gymnasium Benchmark}
To demonstrate the efficacy of PAPO in \textbf{super-high dimensional} continuous benchmarks, we conducted additional experiments in scenarios that were highly integrated with the reality frontier such as dexterous hand manipulation and humanoid robot movement. The results, compared with all baseline methods, including APO, are presented in \Cref{tab:policy_setting} with detailed hyperparameters and learning curves in \Cref{fig:papo comparison high dim}. We can see that the more exploratory proximal methods can achieve better results in this type of super-high dimensional control task compared to the trust region methods. PAPO consistently outperforms all algorithms including all other proximal methods, particularly showcasing enhanced learning efficiency and converged performance in the \textit{HumanoidStandup} task and \textit{HandReach}. Thus the figures and description answer \textbf{Q5}.

\subsection{APO, PAPO Computational Cost Comparison}
\begin{figure}
    \centering
    \begin{subfigure}[b]{0.95\linewidth}
        \begin{subfigure}[t]{1.00\linewidth}\raisebox{-\height}{\includegraphics[width=\linewidth]{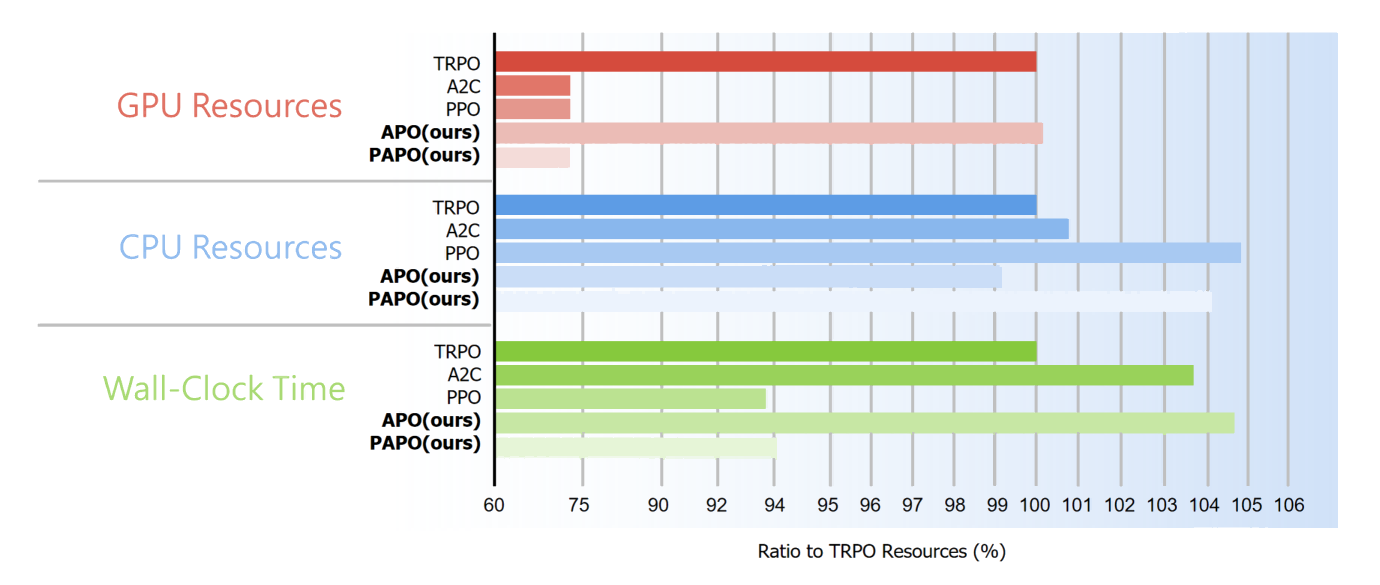}}
        \end{subfigure}
    \end{subfigure}
    \caption{Computational cost(GPU occupancy, CPU occupancy and wall-clock time) comparison} 
    \label{fig: resources}
    \vspace{-20pt}
\end{figure}

We test the comparison of APO, PAPO, PPO, TRPO and A2C in terms of GPU and CPU resource usage, wall-clock time using TRPO as a benchmark and present in \Cref{fig: resources}, where the horizontal axis is the percentage value of the rest of the algorithms compared to TRPO. The experiments are built on Goal\_Ant task and were run and averaged over five different seeds. We can see that APO as a trust region method takes more significant and stable results with essentially the same hardware resource footprint as TRPO and only less than 5\% additional wall-clock time spent, demonstrating that our surrogate absolute bound may seem complex but does not introduce additional unacceptable computational costs. At the same time, there is basically no difference between PAPO and PPO in terms of performance of the indicators, but PAPO achieves better results on both high and low dimensional, complex and simple tasks which demonstrates  the superiority of our method. Additional wall-clock time experiment results can be found in \Cref{fig: apo time,fig: papo time}. Thus, all of these answers \textbf{Q6}.

\subsection{Effective and Less Effective Scenarios for APO and PAPO}
Through extensive experimentation, we have found that APO/PAPO represents a more robust iteration of TRPO/PPO, offering significantly improved and more consistent performance while maintaining the same computational efficiency (sampling efficiency). Consequently, any application scenarios currently utilizing PPO or other on-policy RL methods can readily benefit from the application of APO. Real-world scenarios demanding high performance consistency, such as video games and robotics, are particularly advantageous domains for deploying APO.

For less effective scenarios, APO shares typical limitations inherent to on-policy algorithms. Notably, its sample efficiency may be poor due to the requirement of collecting data exclusively from the current policy's interactions with the environment, making APO impractical for environments where data acquisition is costly or time-consuming. This discussion addresses \textbf{Q7}.

\section{Conclusion and Future Work}
This paper proposed APO, the first general-purpose policy search algorithm that ensures monotonic improvement of the lower probability bound of performance with high confidence. We demonstrate APO's effectiveness on challenging continuous control benchmark tasks and playing Atari games, showing its significant performance improvement 
compared to existing methods and ability to enhance both expected performance and worst-case performance.

Furthermore, we integrate proven techniques that enhance TRPO into Proximal Absolute Policy Optimization (PAPO), resulting in substantial performance improvements on continuous tasks. This effort underscores the promising capability of APO to significantly enhance existing methodologies.

In anticipation of future endeavors, we aspire to leverage APO as the inaugural phase in our exploration of probability bound control employing the trust region method. Subsequently, our objective is to extend this investigation into a more expansive domain within scientific research.


\section*{Impact Statement}
This paper presents work whose goal is to advance the field of Machine Learning. There are many potential societal consequences of our work, none which we feel must be specifically highlighted here.

\section*{Acknowledgements}
This work is partially supported by the National Science Foundation, Grant No. 2144489.

\bibliography{icml2024}
\bibliographystyle{icml2024}

\newpage
\appendix
\onecolumn

\section{Related Works}
\label{sec: related works}
\paragraph{Model-Free Deep Reinforcement Learning}
Model-free deep reinforcement learning (RL) algorithms have found applications from the realm of games~\citep{mnih2013playing,silver2016mastering} to the intricate domain of robotic control~\citep{schulman2015trust}.

The leading contenders of the model free reinforcement learning algorithms include (i) deep Q-learning~\citep{mnih2013dqn,hausknecht2015deep,vanhasselt2015deep,hessel2018rainbow}, (ii) off-policy policy gradient methods~\citep{silver2014deterministic,lillicrap2015continuous,gu2016continuous,fujimoto2018addressing,haarnoja2018soft}, and (iii) trust region on-policy policy gradient methods~\citep{schulman2015trust,schulman2017ppo}. 

Among those categories,  
Q-learning-based techniques
~\citep{mnih2013dqn} 
, augmented with function approximation, 
have exhibited remarkable prowess over tasks with discrete action spaces, e.g. Atari game playing
~\citep{bellemare2013arcade}. 
However, these methods perform poorly in the realm of continuous control benchmarks, notably exemplified in OpenAI Gym~\citep{brockman2016openai,duan2016benchmarking}.

In contrast, off-policy policy gradient methods extend Q-learning-based strategies via introducing an independent actor network to handle continuous control tasks, as exemplified by the Deep Deterministic Policy Gradient (DDPG)\citep{lillicrap2015continuous}. 
However, off-policy methods suffer from stability issues and susceptibility to hyperparameter tuning nuances\citep{haarnoja2018soft}. Recently, enhancements have been made to incorporate entropy to foster exploration~\citep{haarnoja2018soft} and mitigate the overestimation bias through target networks~\citep{fujimoto2018addressing}. 
Additionally, Distributional RL emerges as powerful off-policy methods to model the entire distribution of returns instead of focusing solely on expected values, providing more information about the distribution of rewards, leading to potentially more stable and efficient learning. Early efforts try to model the distribution of value function with various parameterizations, such as quartiles, in algorithms like C51~\citep{marc2017c51}, QR-DQN\cite{will2017qrdqn}, IQN\citep{will2018iqn} and FQF~\cite{derek2019fqf}. Recent advancements address issues like the non-decreasing property of learned quartiles~\cite{Zhou2020NoncrossingQR} and use distributional RL to reduce bias and variance in Q function estimation~\cite{kuang2023qemrl}.
Despite these advancements, the convergence characteristics of off-policy policy gradient methods remain incompletely understood, primarily explored under stringent assumptions such as infinite sampling and Q-updates~\citep{fujimoto2018addressing}. Moreover, off-policy policy gradient methods are primarily tailored for continuous action spaces.
Conversely, trust region on-policy policy gradient methods harmoniously accommodate both continuous and discrete action spaces while showcasing superior stability and dependable convergence properties. Notably, the representative Trust Region Policy Optimization (TRPO)~\citep{schulman2015trust}, complemented by its pragmatic counterpart, Proximal Policy Optimization (PPO)~\citep{schulman2017ppo}, have consistently delivered impressive performance across an array of demanding benchmark tasks. Furthermore, PPO has 
largely helped training of
groundbreaking artificial intelligence applications, including ChatGPT~\citep{schulman2022chatgpt}, the automated Rubik's Cube solver with a robotic hand~\citep{akkaya2019solving}, and the championship-level drone racing~\citep{kaufmann2023champion}, thereby reaffirming their profound impact on advancing the frontiers of AI technology. 

\paragraph{Attempts to Improve Trust Region Methods}
Recently, many efforts are made to improve trust region on-policy methods, including (i) \textit{improve computation efficiency.} TREFree ~\citep{sun2023trust} introduced a novel surrogate objective that eliminates trust region constraints. (ii) \textit{encourage exploration.} COPOS~\citep{pajarinen2019compatible} applied compatible value function approximation to effectively control entropy during policy updates. (iii) \textit{improve training stability and data-efficiency.} Truly PPO (TR-PPO)~\citep{wang2020truly} introduced a new clipping function and trust region-based triggering condition. PPO Smooth (PPOS) ~\citep{zhu2020ppos} use a functional clipping method instead of a flat clipping method. Generalized PPO (GePPO) ~\citep{queeney2021generalized} extended PPO to an off-policy variant, thereby enhancing sampling efficiency through data reuse. Early Stopping Policy Optimization (ESPO) ~\citep{sun2022espo} argued that the clip method in PPO is not reasonable and proposed an early stopping method to replace it. AlphaPPO ~\citep{xu2023improving} introduced alpha divergence, a metric that offers a more effective description of policy differences, resulting in more stable training performance. 

More relevantly, there are improvements considering variance control, including (i) \textit{variance reduction of policy gradient. } \citeauthor{xu2017stochastic} and \citeauthor{papini2018stochastic} applied the stochastic variance reduced gradient descent (SVRG) technique for getting stochastic variance-reduced version of policy gradient (SVRPO) to improve the sample efficiency. \citeauthor{yuanpolicy} incorporates the StochAstic Recursive grAdient algoritHm (SARAH) into the TRPO framework to get more stable variance.  \citeauthor{song2020vmpo} uses on-policy adaptation of Maximum a Posteriori Policy Optimization to replace policy gradients which may have large variance. (ii) \textit{variance reduction of performance update.} ~\citeauthor{tomczak2019policy} introduced a surrogate objective with approximate importance sampling to strike a balance between performance update bias and variance. (iii) \textit{variance reduction of importance sampling.} \citep{lin2023sample} introduced sample dropout to bound the variance of importance sampling estimate by dropping out samples when their ratio deviation is too high.

Although trust-region-based methods have achieved notable success, there hasn't been any approach to exert control over the worst case performance. Unforeseen instances of poor performance can result in training instability, thereby jeopardizing the reliability of solutions in real-world applications. In our research, we bridge this gap by introducing novel theoretical results that ensure a monotonic improvement of the lower bound of near-total performance samples.

Simultaneously, there exist additional domains warranting our consideration, particularly those aligned with the conceptual framework of optimizing worst-case performance. However, it is crucial to acknowledge the fundamental distinctions between these areas and our chosen approach.

\paragraph{Risk-sensitive RL / Probabilistic-constrained RL}
This method primarily operates in the realm of safe RL, focusing on minimizing risk-sensitive criteria like variance-related measures or percentile performance. Common metrics include Value-at-Risk (VaR) and conditional Value-at-Risk (CVaR)~\citep{alexander2004comparison}, aiming to quantify costs in the tail of a distribution. \cite{yinlam2015risk,berkenkamp2017safe,chow2018risk,tang2019worst,jain2021variance,chen2023probabilistic,yu2023global} addressed gradient computation under the Lagrangian function for percentile risk-constrained Markov Decision Processes (MDPs). While the objective of risk-sensitive RL aligns with ours, it is noteworthy that
(i) Our commitment lies in the enhancement of rewards, excluding consideration for safe constraints.
(ii) Diverging from the paradigm of risk-sensitive Reinforcement Learning, we employ surrogate functions and trust region methods to optimize confidence lower bounds. This strategic choice inherits the advantages inherent in trust region methods while preserving the extensibility of our broader research framework.
(iii) Our approach, devoid of intricate assumptions, guarantees the monotonic ascent of the lower probability bound, surpassing mere adherence to constraints.

\paragraph{Robust Reinforcement Learning}
Robust RL research on the topic of mitigating the model discrepancy in the process of model generalization. \cite{peng2017dr} propose domain randomization (DR) method to solve the problem, which randomizes the simulator to generate a variety of environments for training a same policy in the source domain to get better generalization performance. \cite{aravind2016epopt} takes another way to learn policies that are robust to environment perturbations. They train the policy solely on the worst performance subset to get better worst case performance at the cost of sacrificing average performance. \cite{jiang2021mrpo} (MRPO) concurrently improve the average and worst-case performance by derive a lower bound for worst performance which is related to the expected performance. \cite{panaganti2022robust} proposes Robust Fitted Q-Ieration (RFQI) algorithm which uses only offline data and solves problems of data collection, optimization over models and unbiased estimation. \cite{you2022useroriented} integrate user preference into policy learning in robust RL, and propose a novel User-Oriented Robust RL (UOR-RL) framework which gets rid of traditional max-min robustness. It is noteworthy that their emphasis lies on variable environments as opposed to a fixed and stable setting. This nuanced choice implies that their interpretation of "worst performance" pertains to the most adverse outcome arising from performance fluctuations induced by model generalization, diverging from a conventional reliance on the lower bound within a distribution.

\clearpage

\section{Additional Experiment Results}

\begin{figure*}[h]
    \centering
    \begin{subfigure}[t]{0.215\textwidth}
        \begin{subfigure}[t]{1.00\textwidth}
            \raisebox{-\height}{\includegraphics[width=\textwidth]{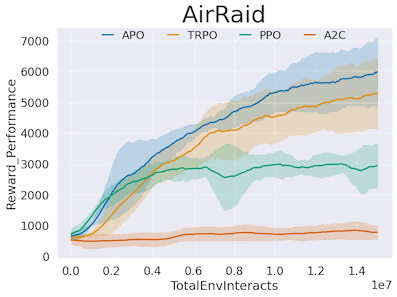}}
        \end{subfigure}
    \end{subfigure}
    \begin{subfigure}[t]{0.215\textwidth}
        \begin{subfigure}[t]{1.00\textwidth}
            \raisebox{-\height}{\includegraphics[width=\textwidth]{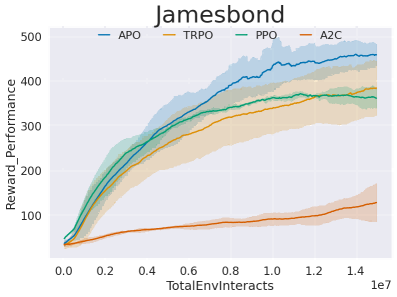}}
        \end{subfigure}
    \end{subfigure}
    \begin{subfigure}[t]{0.215\textwidth}
        \begin{subfigure}[t]{1.00\textwidth}
            \raisebox{-\height}{\includegraphics[width=\textwidth]{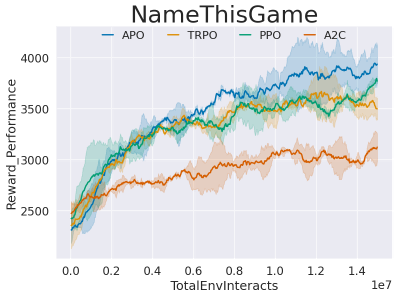}}
        \end{subfigure}
    \end{subfigure}
   \begin{subfigure}[t]{0.215\textwidth}
        \begin{subfigure}[t]{1.00\textwidth}
            \raisebox{-\height}{\includegraphics[width=\textwidth]{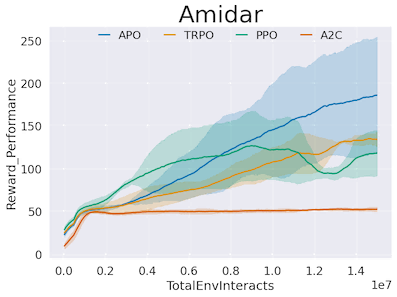}}
        \end{subfigure}
    \end{subfigure}
    \begin{subfigure}[t]{0.215\textwidth}
        \begin{subfigure}[t]{1.00\textwidth}
            \raisebox{-\height}{\includegraphics[width=\textwidth]{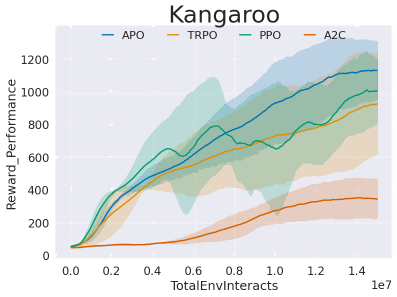}}
        \end{subfigure}
    \end{subfigure}
    \begin{subfigure}[t]{0.215\textwidth}
        \begin{subfigure}[t]{1.00\textwidth}
            \raisebox{-\height}{\includegraphics[width=\textwidth]{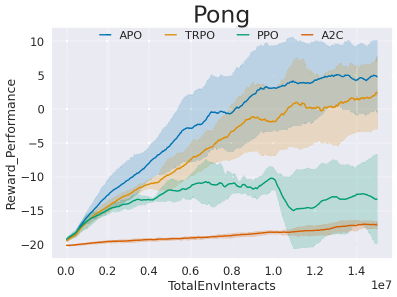}}
        \end{subfigure}
    \end{subfigure}
    \begin{subfigure}[t]{0.215\textwidth}
        \begin{subfigure}[t]{1.00\textwidth}
            \raisebox{-\height}{\includegraphics[width=\textwidth]{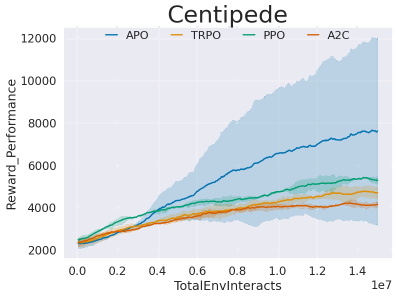}}
        \end{subfigure}
    \end{subfigure}
    \begin{subfigure}[t]{0.215\textwidth}
        \begin{subfigure}[t]{1.00\textwidth}
            \raisebox{-\height}{\includegraphics[width=\textwidth]{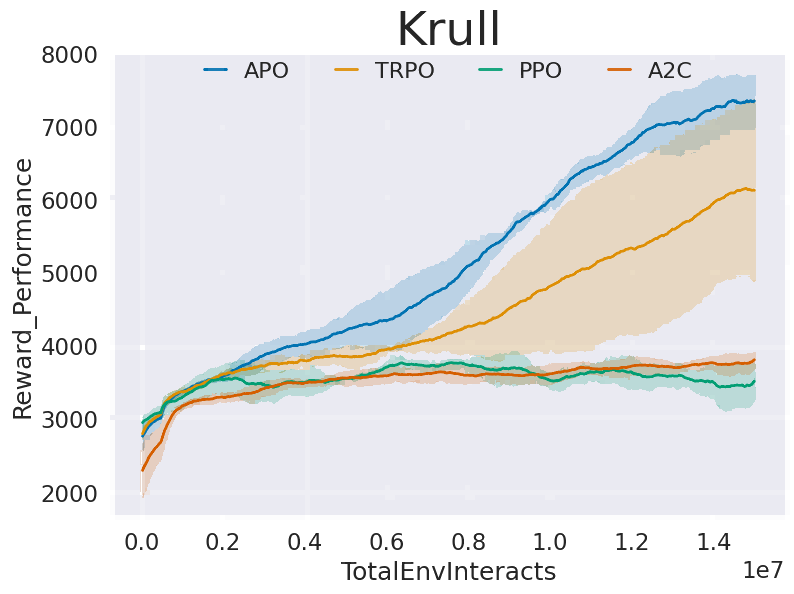}}
        \end{subfigure}
    \end{subfigure}
    \begin{subfigure}[t]{0.215\textwidth}
        \begin{subfigure}[t]{1.00\textwidth}
            \raisebox{-\height}{\includegraphics[width=\textwidth]{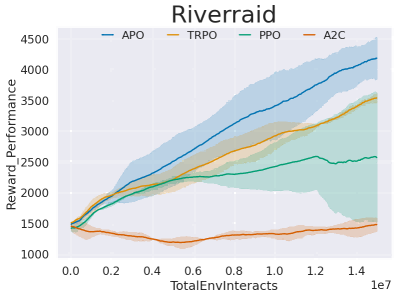}}
        \end{subfigure}
    \end{subfigure}
    \begin{subfigure}[t]{0.215\textwidth}
        \begin{subfigure}[t]{1.00\textwidth}
            \raisebox{-\height}{\includegraphics[width=\textwidth]{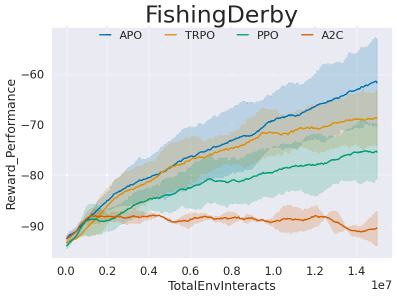}}
        \end{subfigure}
    \end{subfigure}
    \begin{subfigure}[t]{0.215\textwidth}
        \begin{subfigure}[t]{1.00\textwidth}
            \raisebox{-\height}{\includegraphics[width=\textwidth]{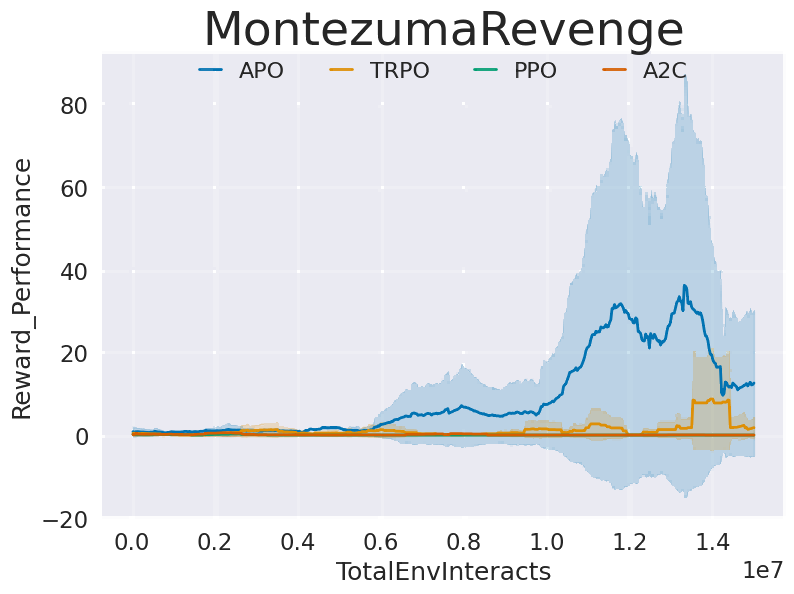}}
        \end{subfigure}
    \end{subfigure}
    \begin{subfigure}[t]{0.215\textwidth}
        \begin{subfigure}[t]{1.00\textwidth}
            \raisebox{-\height}{\includegraphics[width=\textwidth]{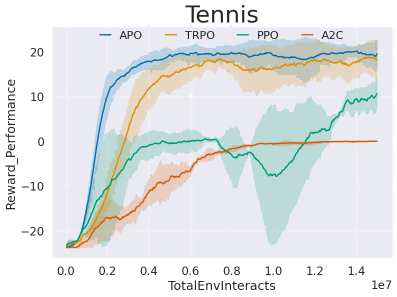}}
        \end{subfigure}
    \end{subfigure}
    \caption{Comparison results from part of the total test suites in Atari Domain}
    \label{fig: selected atari}
\end{figure*}

\begin{figure*}[h]
    \centering
    \begin{subfigure}[t]{0.215\textwidth}
        \begin{subfigure}[t]{1.00\textwidth}
            \raisebox{-\height}{\includegraphics[width=\textwidth]{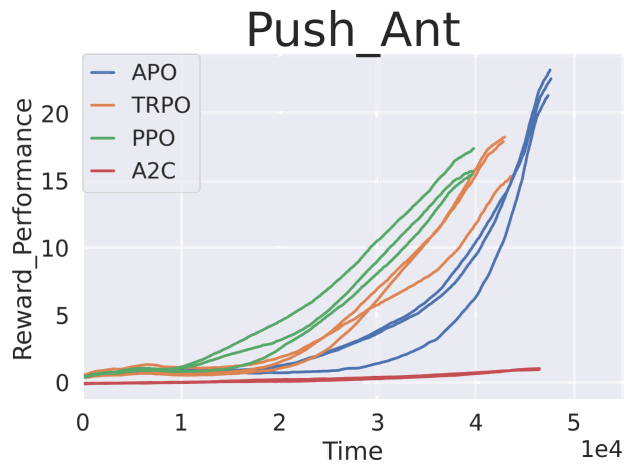}}
        \end{subfigure}
    \end{subfigure}
   \begin{subfigure}[t]{0.215\textwidth}
        \begin{subfigure}[t]{1.00\textwidth}
            \raisebox{-\height}{\includegraphics[width=\textwidth]{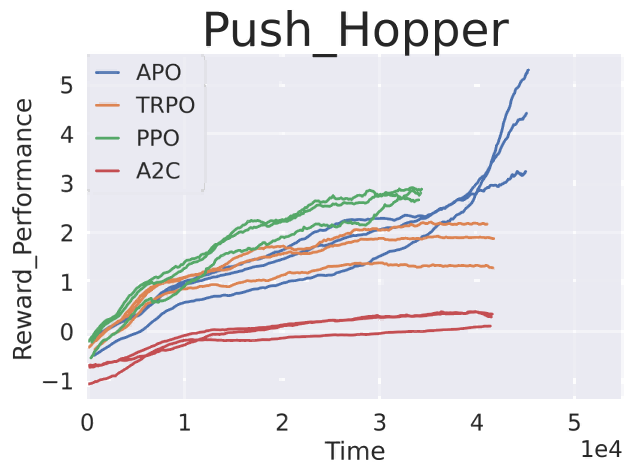}}
        \end{subfigure}
    \end{subfigure}
    \begin{subfigure}[t]{0.215\textwidth}
        \begin{subfigure}[t]{1.00\textwidth}
            \raisebox{-\height}{\includegraphics[width=\textwidth]{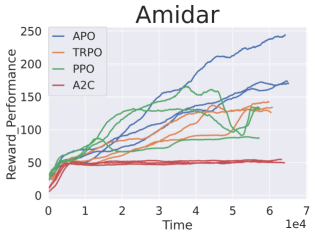}}
        \end{subfigure}
    \end{subfigure}
    \begin{subfigure}[t]{0.215\textwidth}
        \begin{subfigure}[t]{1.00\textwidth}
            \raisebox{-\height}{\includegraphics[width=\textwidth]{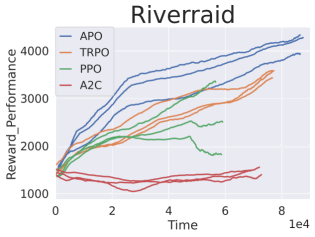}}
        \end{subfigure}
    \end{subfigure}
    \caption{Comparison of wall-clock time from four representative test suites in continuous and discrete domain}
    \label{fig: apo time}
\end{figure*}

\begin{figure*}[h]
    \centering
    \begin{subfigure}[b]{0.215\linewidth}
        \begin{subfigure}[t]{1.00\linewidth}\raisebox{-\height}{\includegraphics[width=\linewidth]{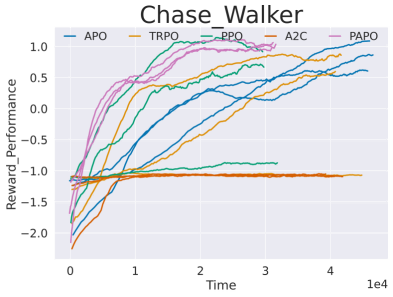}}
        \end{subfigure}
    \end{subfigure}
    \hspace{0.01\linewidth}
    \begin{subfigure}[b]{0.215\linewidth}
        \begin{subfigure}[t]{1.00\linewidth}\raisebox{-\height}{\includegraphics[width=\linewidth]{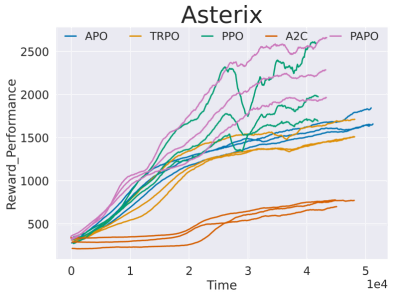}}
        \end{subfigure}
    \end{subfigure}
    \caption{Comparison of wall-clock time of PAPO} 
    \label{fig: papo time}
\end{figure*}

\clearpage
\section{Lower Probability Bound}
\label{proof: absolute performance improvement}

\subsection{Preliminaries}
First we define $R_\pi(s) = \sum_{t=0}^\infty \gamma^t R(s_t, a_t, s_{t+1})$ as infinite-horizon discounted return starts at state $s$ and define expected return ${V_\pi(s) = \underset{\hat\tau \sim \pi}{ \mathbb{E}}\big(R_\pi(s)\big)}$  as the value of state $s$. Then for all trajectories $\hat\tau \sim \pi$ start from state ${s_0 \sim \mu}$, the expectation and variance of  $R_\pi(s_0)$ can be respectively defined as  $\mathcal{J}(\pi)$ and $\mathcal{V}(\pi)$. Formally:

\begin{align}
    \mathcal{J}(\pi) & = \underset{\substack{s_0 \sim \mu \\ {\hat \tau}\sim {\pi}}}{\mathbb{E}}\bigg[R_\pi(s_0)\bigg] 
    = \underset{\substack{s_0 \sim \mu}}{\mathbb{E}} \bigg[V_\pi(s_0)\bigg] \\  
    \mathcal{V}(\pi) &= \underset{\substack{s_0 \sim \mu \\ {\hat \tau}\sim {\pi}}}{\mathbb{E}} \bigg[\big(R_\pi(s_0) - \mathcal{J}(\pi)\big)^2\bigg] \\ \nonumber 
    &= \underset{\substack{s_0 \sim \mu}}{\mathbb{E}}\bigg[\underset{\substack{{\hat \tau}\sim {\pi}}}{\mathbb{V}ar}\big[R_\pi(s_0)\big] +\bigg[\underset{{\hat \tau} \sim \pi}{ \mathbb{E}}\big[R_\pi(s_0)\big]\bigg]^2\bigg] - \mathcal{J}(\pi)^2  \\ \nonumber 
    &= \underset{\substack{s_0 \sim \mu}}{\mathbb{E}}\bigg[\underset{\substack{{\hat \tau}\sim {\pi}}}{\mathbb{V}ar}\big[R_\pi(s_0)\big] + V_\pi(s_0)^2\bigg] - \mathcal{J}(\pi)^2  \\ \nonumber 
    &= \underset{\substack{s_0 \sim \mu}}{\mathbb{E}}\bigg[\underset{\substack{{\hat \tau}\sim {\pi}}}{\mathbb{V}ar}\big[R_\pi(s_0)\big]\bigg] + \underset{\substack{s_0 \sim \mu}}{\mathbb{E}}\bigg[V_\pi(s_0)^2\bigg] - \mathcal{J}(\pi)^2  \\ \nonumber 
    &= \underbrace{\underset{\substack{s_0 \sim \mu}}{\mathbb{E}}\bigg[\underset{\substack{{\hat \tau}\sim {\pi}}}{\mathbb{V}ar}\big[R_\pi(s_0)\big]\bigg]}_{MeanVariance} + \underbrace{\underset{s_0 \sim \mu}{\mathbb{V}ar} [V_\pi(s_0)]}_{VarianceMean} \label{eq:variance interpretation}
\end{align}
Note that for the derivation of ${\mathcal{V}(\pi)}$ we treat the return of all trajectories as a mixture of one-dimensional distributions. Each distribution consists of the returns of trajectories from the same start state. The variance can then be divided into two parts: \\
1. \textbf{MeanVariance} reflects the expected variance of the return over different start states. \\
2. \textbf{VarianceMean} reflects the variance of the average return of different start states. 

\begin{prop}
\label{prop: absolute bound definition}
$\mathcal{B}_k(\pi) = \mathcal{J}(\pi)-k\mathcal{V}(\pi)$ is guaranteed to be an lower probability bound of performance defined by \Cref{def: abs bound}.
\begin{proof}
According to Selberg's inequality theory~\cite{saw1984chebyshev}, if random variable $R_\pi(s_0)$ has finite non-zero variance $\mathcal{V}(\pi)$ and finite expected value $\mathcal{J}(\pi)$. Then for any real number $k\geq0$, following inequality holds:
\begin{align}
    Pr(R_\pi(s_0) < \mathcal{J}(\pi)-k\mathcal{V}(\pi)) ) \le \frac{1}{k^2\mathcal{V}(\pi)+1}~~~,
\end{align}
which equals to:
\begin{align}
    Pr\big(R_\pi(s_0)\geq \mathcal{B}_k(\pi)\big) \geq 1 - \frac{1}{k^2\mathcal{V}(\pi)+1}~~~.
\end{align}\\
Considering that $\pi\in\Pi$, then $\mathcal{V}(\pi)$ belongs to a corresponding variance space which has a non-zero minima of $\mathcal{V}(\pi)$ denoted as $\mathcal{V}_{min} \in \mathbb{R}^+$. Therefore, by treating $\psi = \mathcal{V}_{min}$, the following condition holds:
\begin{align}
        \text{There exists $\psi > 0$, s.t. }Pr\big(R_\pi(s_0)\geq \mathcal{B}_k(\pi)\big) &\geq 1 - \frac{1}{k^2\mathcal{V}(\pi)+1} \\ \nonumber
        &\geq 1 - \frac{1}{k^2\mathcal{V}_{min}+1} \\ \nonumber 
        &= \underbrace{1 - \frac{1}{k^2\psi+1}}_{p_k^\psi} ~~~.
    \end{align}
\end{proof}
\end{prop}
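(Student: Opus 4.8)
The plan is to reduce the statement to a single application of a one-sided Chebyshev-type concentration inequality, followed by a uniformization step over the policy class $\Pi$. The object of interest is the performance sample $R_\pi(s_0)$, regarded as one real-valued random variable whose randomness is generated jointly by the initial state $s_0 \sim \mu$ and the trajectory $\hat\tau \sim \pi$. By the moment identities recalled in the Preliminaries, this variable has expectation $\mathcal{J}(\pi)$ and variance $\mathcal{V}(\pi)$, so the target event $\{R_\pi(s_0) \geq \mathcal{B}_k(\pi)\}$ is precisely a lower-tail event measured in units of the variance.

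First I would check the hypotheses required by Selberg's inequality, namely that $R_\pi(s_0)$ has a finite expectation and a finite, non-zero variance. Finiteness is immediate: the reward is bounded and $\gamma < 1$, so every discounted return is absolutely bounded, which forces both $\mathcal{J}(\pi)$ and $\mathcal{V}(\pi)$ to be finite. Granting the inequality, I would instantiate it with the deviation taken to be $k\mathcal{V}(\pi)$ (that is, $k$ times the \emph{variance} rather than the standard deviation), which is exactly what produces the denominator $k^2\mathcal{V}(\pi)+1$, yielding
\[
Pr\big(R_\pi(s_0) < \mathcal{J}(\pi) - k\mathcal{V}(\pi)\big) \leq \frac{1}{k^2\mathcal{V}(\pi)+1}.
\]
Passing to the complementary event and substituting $\mathcal{B}_k(\pi) = \mathcal{J}(\pi) - k\mathcal{V}(\pi)$ then gives $Pr\big(R_\pi(s_0) \geq \mathcal{B}_k(\pi)\big) \geq 1 - \frac{1}{k^2\mathcal{V}(\pi)+1}$, which already has the form demanded by \Cref{def: abs bound}, but with a confidence that still depends on the unknown policy through $\mathcal{V}(\pi)$.

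The last step, which I expect to be the main obstacle, is to replace this policy-dependent confidence by the single quantity $p_k^\psi = 1 - \frac{1}{k^2\psi+1}$ valid uniformly over all $\pi$. The key observation is that $\{\mathcal{V}(\pi) : \pi \in \Pi\}$ admits a strictly positive infimum $\mathcal{V}_{min}$; setting $\psi = \mathcal{V}_{min}$ and exploiting that the map $v \mapsto \tfrac{1}{k^2 v + 1}$ is monotonically decreasing in $v$ gives $\frac{1}{k^2\mathcal{V}(\pi)+1} \leq \frac{1}{k^2\psi+1}$, hence the per-policy confidence is lower bounded by $p_k^\psi$ for every $\pi$. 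The delicate points to justify here are that this infimum is strictly positive (so that the non-zero-variance hypothesis is never violated and $p_k^\psi \in (0,1)$, consistent with the Remark following \Cref{def: abs bound}), and that the monotonicity is used in the correct direction so that the worst-case (smallest) variance yields a valid \emph{lower} bound on the confidence rather than an upper one. Combining the concentration bound with this uniformization produces exactly the pair $(\mathcal{B}_k(\pi), p_k^\psi)$ satisfying \Cref{def: abs bound}, completing the argument.
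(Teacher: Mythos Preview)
Your proposal is correct and follows essentially the same approach as the paper: apply Selberg's one-sided inequality to $R_\pi(s_0)$ with deviation $k\mathcal{V}(\pi)$, pass to the complementary event, and then uniformize the confidence over $\Pi$ by replacing $\mathcal{V}(\pi)$ with its infimum $\mathcal{V}_{min}=\psi$ via the monotonicity of $v\mapsto 1/(k^2v+1)$. The only additions you make beyond the paper's argument are the explicit justification of finite moments from bounded rewards and $\gamma<1$, and the explicit check that the monotonicity is applied in the correct direction, both of which are welcome but do not change the route.
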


Now to prove \Cref{theo: absolute performance improvement}, we need to prove the optimization object in \eqref{eq: apo optimization final} serves as the lower bound of $\mathcal{B}_k(\pi')$. To do this, we first try to obtain the following terms:\\
1. Upper bound of \textbf{MeanVariance} with ${\pi'}$. (\Cref{proof: MeanVariance Bound})\\
2. Upper bound of \textbf{VarianceMean} with ${\pi'}$. (\Cref{proof: VarianceMean Bound})\\
3. Lower bound of ${\mathcal{J}(\pi')}$.(\Cref{proof: Expectation Bound})\\
And then we can proof our theorem by leveraging this lower bound in \Cref{proof: theo 1}



\subsection{MeanVariance Bound}
\label{proof: MeanVariance Bound}
\begin{prop}[Bound of MeanVariance]
\label{lem: bound of MV}
Denote \textbf{MeanVariance} of policy ${\pi}$ as $MV_\pi = \underset{\substack{s_0 \sim \mu}}{\mathbb{E}}[\mathbb{V}ar[R_\pi(s_0)]$. Given two policies $\pi', \pi$, the following bound holds:
\begin{align}
    |MV_{\pi'} - MV_{\pi}| & \leq   \|\mu^\top\|_\infty\bigg(\frac{1}{1-\gamma^2}\underset{s}{\mathbf{max}}\bigg|\underset{\substack{\\a\sim\pi\\s'\sim P}}{\mathbb{E}}\left[\left(\frac{\pi^{\prime}(a|s)}{\pi(a|s)}-1\right) A_{\pi}(s,a,s')^2\right]\\ \nonumber
    &+ 2\underset{\substack{a \sim \pi \\ s' \sim P}}{\mathbb{E}}\left[\left(\frac{\pi^{\prime}(a|s)}{\pi(a|s)}\right)A_{\pi}(s,a,s')\right]|H(s,a,s')|_{max} + |H(s,a,s')|_{max}^2\bigg| \\ \nonumber
    &+\frac{2\gamma^2}{(1-\gamma^2)^2}\sqrt{\frac{1}{2}\mathcal{D}_{KL}^{max}(\pi'\|\pi)}\cdot\|\bm \Omega_\pi\|_\infty\bigg)
\end{align}
where $\bm \Omega_\pi = \begin{bmatrix}
    \omega_\pi(s^1) \\
    \omega_\pi(s^2) \\
    \vdots
\end{bmatrix}$ and $\omega_\pi(s) = \underset{\substack{a \sim \pi \\ s' \sim P}}{\mathbb{E}} \big[Q_\pi(s,a,s')^2\big] - V_{\pi}(s)^2$ is defined as the variance of the state-action value function ${Q_\pi}$ at state ${s}$. Additionally, 
\begin{align}
    |H(s,a,s')|_{max} &= \left|L(s,a,s')\right| + \dfrac{2(1+\gamma)\gamma\epsilon}{(1-\gamma)^2}\mathcal{D}_{KL}^{max}(\pi'\|\pi)\\ \nonumber
    L(s,a,s')&= \gamma\underset{\substack{s_0 = s' \\{\hat \tau} \sim \pi}}{\mathbb{E}}\bigg[\sum_{t=0}^\infty \gamma^t \bar A_{\pi',\pi}(s_t)\bigg] - \underset{\substack{s_0 = s \\{\hat \tau} \sim \pi}}{\mathbb{E}}\bigg[\sum_{t=0}^\infty \gamma^t\bar A_{\pi',\pi}(s_t)\bigg]\\ \nonumber
    \epsilon &= \underset{s, a}{\mathbf{max}}|A_\pi(s,a)|
\end{align}
where $\mathcal{D}_{KL}^{max}(\pi'\|\pi) = \max_s \mathcal{D}_{KL}(\pi'\|\pi)[s]$.
\end{prop}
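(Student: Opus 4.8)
The plan is to bound the difference $|MV_{\pi'} - MV_\pi|$ by first deriving a tractable expression for the per-state variance $\mathbb{V}ar[R_\pi(s_0)]$ and then controlling how that expression changes when we switch from $\pi$ to $\pi'$. I would start from the decomposition $\mathbb{V}ar[R_\pi(s)] = \mathbb{E}[R_\pi(s)^2] - V_\pi(s)^2$ and observe that the second moment $\mathbb{E}[R_\pi(s)^2]$ satisfies a Bellman-type recursion analogous to the standard value recursion, but with reward replaced by a quadratic term. Concretely, if we let $W_\pi(s) \doteq \mathbb{E}_{\hat\tau\sim\pi}[R_\pi(s)^2]$, then expanding $R_\pi(s) = R(s,a,s') + \gamma R_\pi(s')$ gives $W_\pi(s) = \mathbb{E}_{a\sim\pi, s'\sim P}[R(s,a,s')^2 + 2\gamma R(s,a,s') R_\pi(s') + \gamma^2 W_\pi(s')]$, which exposes the $1/(1-\gamma^2)$ discounting factor that appears in the statement. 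This recursion, together with the fact that $MV_\pi = \mathbb{E}_{s_0\sim\mu}[W_\pi(s_0)] - \mathbb{E}_{s_0\sim\mu}[V_\pi(s_0)^2]$, reduces the problem to comparing $W_{\pi'}$ and $W_\pi$ across the whole state space.

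Next I would introduce an "advantage-like" object for the squared return: something of the form $A_\pi(s,a,s')^2$ plus a correction term $H(s,a,s')$ that accounts for the fact that $R_\pi(s')$ in the recursion is evaluated under $\pi$ whereas we want to reason about switching to $\pi'$. The $H$ term is exactly where the trajectory-level expected-advantage sums $\sum_t \gamma^t \bar A_{\pi',\pi}(s_t)$ enter — by the performance difference lemma (Kakade–Langford / TRPO style), the gap between the return under $\pi'$ and under $\pi$ starting from any state is controlled by such a discounted sum, and its deviation from the naive estimate is bounded via $\epsilon = \max_{s,a}|A_\pi(s,a)|$ and $\mathcal{D}_{KL}^{max}$ using the standard total-variation/coupling argument. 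I would then write $W_{\pi'}(s) - W_\pi(s)$ as a geometric series (with ratio $\gamma^2$) in the per-state discrepancies, take absolute values, and pull out a $\frac{1}{1-\gamma^2}\max_s|\cdot|$ factor; the importance-sampling ratio $\pi'(a|s)/\pi(a|s)$ appears because we re-weight the $\pi$-expectations to $\pi'$-expectations inside each term. The cross term $2\mathbb{E}[(\pi'/\pi)A_\pi] |H|_{max}$ and the $|H|_{max}^2$ term come from expanding the square $(A_\pi + H)^2$ after the change of measure.

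The remaining piece is the $\frac{2\gamma^2}{(1-\gamma^2)^2}\sqrt{\tfrac12 \mathcal{D}_{KL}^{max}}\,\|\bm\Omega_\pi\|_\infty$ term, which I expect arises from bounding the residual effect of the state-distribution shift in the squared-return recursion — i.e., after handling the pointwise differences one still has a term of the form $\sum_t \gamma^{2t}(\text{difference in state-occupancy}) \cdot \omega_\pi$, and this is controlled by Pinsker's inequality ($\|p_{\pi'} - p_\pi\|_1 \le \sqrt{2\,\mathcal{D}_{KL}}$) together with Hölder against the vector $\bm\Omega_\pi$ of action-value variances. The factor $(1-\gamma^2)^{-2}$ (rather than $(1-\gamma^2)^{-1}$) reflects that the occupancy-measure discrepancy itself accumulates a $(1-\gamma^2)^{-1}$ over the horizon before being discounted again. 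The $\|\mu^\top\|_\infty$ prefactor is simply Hölder applied at the outermost step when converting the state-wise bound on $W_{\pi'}-W_\pi$ into a bound on $\mathbb{E}_{s_0\sim\mu}[\cdot]$, noting the $V_\pi^2$ terms partially cancel or are absorbed into the $H$-correction.

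The main obstacle I anticipate is keeping the bookkeeping of the $H$-correction term honest: $H(s,a,s')$ must simultaneously (i) capture the difference between $R_{\pi'}(s')$-type quantities and $R_\pi(s')$-type quantities that appear when unrolling the squared-return recursion under the old policy's trajectories, and (ii) be uniformly boundable by the stated expression involving $L(s,a,s')$ and $\frac{2(1+\gamma)\gamma\epsilon}{(1-\gamma)^2}\mathcal{D}_{KL}^{max}$. Getting the exponents on $(1-\gamma)$ right in the $L$ and $H$ definitions, and verifying that the triangle-inequality chain of "change of measure, then performance-difference lemma, then Pinsker, then Hölder" does not lose or gain a factor, is the delicate part; everything else is a fairly mechanical geometric-series summation once the right recursion is set up.
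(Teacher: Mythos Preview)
Your proposal captures the essential architecture of the paper's proof --- the $\gamma^2$-discounted recursion, the split into a ``per-state change'' term and a ``distribution-shift'' term, the definition of $H$ via the performance-difference lemma, Pinsker for the TV-to-KL conversion, and H\"older at the outer $\mu$ expectation. These are all correct and match the paper.

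The one place where you take a longer route is in working with the second moment $W_\pi(s) = \mathbb{E}[R_\pi(s)^2]$ and then subtracting $\mathbb{E}_\mu[V_\pi^2]$. The paper instead invokes Sobel's variance formula (Theorem~1 of Sobel 1982), which gives the \emph{variance} vector directly as $\bm X_\pi = (I - \gamma^2 \hat P_\pi)^{-1}\bm\Omega_\pi$ with $\omega_\pi(s) = \mathbb{V}ar_{a,s'}[Q_\pi(s,a,s')] = \mathbb{E}_{a\sim\pi,s'}[A_\pi(s,a,s')^2]$. This buys two simplifications: (i) there are no $V_\pi^2$ cross-terms to track, so your ``partially cancel or are absorbed'' step --- the vaguest part of your plan --- simply vanishes; and (ii) the difference $\bm X_{\pi'} - \bm X_\pi$ splits algebraically as $G_{\pi'}(\bm\Omega_{\pi'} - \bm\Omega_\pi) + (G_{\pi'} - G_\pi)\bm\Omega_\pi$ with $G_\pi = (I-\gamma^2\hat P_\pi)^{-1}$. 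Your ``residual state-distribution shift'' term is then literally $(G_{\pi'}-G_\pi)\bm\Omega_\pi$, and its norm bound $\|G_{\pi'}-G_\pi\|_\infty \le \tfrac{2\gamma^2}{(1-\gamma^2)^2}\mathcal{D}_{TV}^{\max}$ drops out of the resolvent identity $G_{\pi'}-G_\pi = \gamma^2 G_{\pi'}(\hat P_{\pi'}-\hat P_\pi) G_\pi$. The $\bm\Omega_{\pi'}-\bm\Omega_\pi$ part is handled exactly as you describe: write $\omega_{\pi'}(s) - \omega_\pi(s) = \mathbb{E}_{a\sim\pi',s'}[A_{\pi'}^2] - \mathbb{E}_{a\sim\pi,s'}[A_\pi^2]$, substitute $A_{\pi'} = A_\pi + H$, expand the square, and apply importance sampling. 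Your route via $W_\pi$ would eventually reach the same place, but with an extra layer of cross-terms from the $2\gamma R(s,a,s')V_\pi(s')$ piece of the second-moment recursion; switching to the Sobel formulation makes the argument mechanical.
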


\begin{proof}
According to \citep{sobel1982variance}, the following Proposition holds
\begin{prop}[Theorem 1, \citep{sobel1982variance}]
\label{prop: V result}
Define $\bm X_\pi = \begin{bmatrix} \mathbb{V}ar[R_\pi(s^1)] \\ \mathbb{V}ar[R_\pi(s^2)] \\ \vdots \end{bmatrix}$ and ${\hat P}_\pi = P_\pi^\top$, where $\hat{P}_\pi(i, j)$ denotes the probability of the transfer from i-th state to j-th state, the following equation holds 
\begin{align}
    \bm X_\pi = (I - \gamma^2 {\hat P}_\pi)^{-1}\bm\Omega_\pi~~.
\end{align}
\end{prop}

With $\bm X_\pi$, \textbf{MeanVariance} can be computed via
\begin{align}
\label{eq: ME}
    MV_{\pi} &=\underset{\substack{s_0 \sim \mu}}{\mathbb{E}}[\mathbb{V}ar[R_\pi(s_0)] \\ \nonumber
    &=\mu^\top \bm X_\pi \\ \nonumber
    &= \mu^\top \big((I - \gamma^2 {\hat P}_\pi)^{-1}\bm\Omega_\pi\big)\\ \nonumber
\end{align}

The divergence of \textbf{MeanVariance} we want to bound can be written as:
\begin{align}
    |MV_{\pi'} - MV_{\pi}| &=\big|\underset{\substack{s_0 \sim \mu}}{\mathbb{E}}[\mathbb{V}ar[R_{\pi'}(s_0)] - \underset{\substack{s_0 \sim \mu}}{\mathbb{E}}[\mathbb{V}ar[R_{\pi}(s_0)]\big| \label{eq: mv - mv ori} \\ \nonumber 
    &= \|\mu^\top \big(\bm X_{\pi'} - \bm X_\pi) \|_\infty\\ \nonumber
    &\leq \|\mu^\top\|_\infty  \|\bm X_{\pi'} - \bm X_\pi\|_\infty 
    \end{align}
To bound ${\|\bm X_{\pi'} - \bm X_\pi\|_\infty}$, consider the following conditions: 
\begin{align}
    \bm X_\pi - \gamma^2 {\hat P}_\pi \bm X_\pi = \bm\Omega_\pi \\ \nonumber 
    \bm X_{\pi'} - \gamma^2 {\hat P}_{\pi'} \bm X_{\pi'} = \bm\Omega_{\pi'} 
\end{align}

Let ${G_\pi = (I - \gamma^2 {\hat P}_\pi)^{-1}}$, then ${\bm X_\pi}$ can be written as:
\begin{align}
    \bm X_\pi= G_\pi \bm\Omega_\pi \\ \nonumber 
\end{align}
Then we have:
\begin{align}
\label{eq: v v diff}
    \bm X_{\pi'} - \bm X_\pi &= G_{\pi'} \bm\Omega_{\pi'} - G_\pi \bm\Omega_\pi\\\nonumber
    &= G_{\pi'} \bm\Omega_{\pi'} - G_{\pi'} \bm\Omega_\pi + G_{\pi'} \bm\Omega_\pi - G_\pi \bm\Omega_\pi\\\nonumber
    &= G_{\pi'}\big(\bm\Omega_{\pi'} - \bm\Omega_\pi\big) + \big(G_{\pi'} - G_\pi \big)\bm\Omega_\pi\\\nonumber
\end{align}
Now ${\|\bm X_{\pi'} - \bm X_\pi\|_\infty}$ can be bounded by:
\begin{align}
    \|\bm X_{\pi'} - \bm X_\pi\|_\infty &\leq \|G_{\pi'}\|_\infty\|\bm\Omega_{\pi'} - \bm\Omega_\pi\|_\infty + \|G_{\pi'} - G_\pi\|_\infty\|\bm\Omega_\pi\|_\infty~~.
\end{align}
Notice that $\|G_\pi\|_\infty$ is bounded by:
\begin{align}
    \|G_\pi\|_\infty = \| (I - \gamma^2 {\hat P}_\pi)^{-1} \|_\infty \leq \sum_{t=0}^\infty (\gamma^2)^t \| {\hat P}_\pi\|_\infty^t = (1 - \gamma^2)^{-1}~~.
\end{align}

Therefore, ${||G_\pi||_\infty = ||G_{\pi'}||_\infty = (1- \gamma^2)^{-1}}$ and ${\|\bm\Omega_\pi\|_\infty}$ can be obtained with ${\bm\Omega_\pi}$. We only need to tackle with ${\|G_{\pi'} - G_\pi\|_\infty}$ and ${\|\bm\Omega_{\pi'} - \bm\Omega_\pi\|_\infty}$.\\
To get ${\|G_{\pi'} - G_\pi\|_\infty}$, we have
\begin{align}
    G_\pi^{-1} - G_{\pi'}^{-1} &= \big(I - \gamma^2 {\hat P}_\pi) - \big(I - \gamma^2 {\hat P}_{\pi'}) \\\nonumber
    &=\gamma^2 \big({\hat P}_{\pi'} - {\hat P}_\pi\big)\\\nonumber
    &=\gamma^2\Delta\\\nonumber
\end{align}
where ${\Delta = {\hat P}_{\pi'} - {\hat P}_\pi}$.
Then we have
\begin{align}
    G_{\pi'} - G_\pi &=\gamma^2G_{\pi'}\Delta G_\pi\\\nonumber
\end{align}
With ${\|\Delta\|_\infty = 2\mathcal{D}_{TV}^{max}(\pi'\|\pi)}$, where $\mathcal{D}_{TV}^{max}(\pi'\|\pi) = \max_s \mathcal{D}_{TV}(\pi'\|\pi)[s]$, we have
\begin{align}
    \|\gamma^2G_{\pi'}\Delta G_\pi\|_\infty &\leq \gamma^2\|G_{\pi'}\|_\infty\|\Delta\|_\infty\|G_{\pi'}\|_\infty\\\nonumber
    &=\gamma^2\cdot\dfrac{1}{1-\gamma^2}\cdot2\mathcal{D}_{TV}^{max}(\pi'\|\pi)\cdot\dfrac{1}{1-\gamma^2}\\\nonumber
    &=\dfrac{2\gamma^2}{(1-\gamma^2)^2}\mathcal{D}_{TV}^{max}(\pi'\|\pi)\\\nonumber
\end{align}
According to \cite{pinskerInequ}, $\mathcal{D}_{TV}^{max}(\pi'\|\pi)\leq\sqrt{\frac{1}{2}\mathcal{D}_{KL}^{max}(\pi'\|\pi)}$, and given \eqref{eq: mv - mv ori}, we have
\begin{align}
    \label{eq: MV_1}
    |MV_{\pi'} - MV_{\pi}| &= \big|\underset{\substack{s_0 \sim \mu}}{\mathbb{E}}[\mathbb{V}ar[R_{\pi'}(s_0)] - \underset{\substack{s_0 \sim \mu}}{\mathbb{E}}[\mathbb{V}ar[R_{\pi}(s_0)]\big| \\ \nonumber
    &\leq \|\mu^\top\|_\infty  \|\bm X_{\pi'} - \bm X_\pi\|_\infty \\ \nonumber
    &\leq  \|\mu^\top\|_\infty\big(\dfrac{1}{1-\gamma^2}\|\bm\Omega_{\pi'} - \bm\Omega_\pi\|_\infty + \dfrac{2\gamma^2}{(1-\gamma^2)^2}\mathcal{D}_{TV}^{max}(\pi'\|\pi)\|\bm\Omega_\pi\|_\infty\big)\\ \nonumber
    &\leq  \|\mu^\top\|_\infty\big(\dfrac{1}{1-\gamma^2}\|\bm\Omega_{\pi'} - \bm\Omega_\pi\|_\infty + \dfrac{2\gamma^2}{(1-\gamma^2)^2}\sqrt{\frac{1}{2}\mathcal{D}_{KL}^{max}(\pi'\|\pi)}\cdot\|\bm\Omega_\pi\|_\infty\big)
\end{align}

To address ${\|\bm\Omega_{\pi'} - \bm\Omega_\pi\|}$, we notice that $\omega_\pi(s) = \underset{\substack{a \sim \pi \\ s' \sim P}}{\mathbb{V}ar}[Q_\pi(s,a,s')] = \underset{\substack{a \sim \pi \\ s' \sim P}}{\mathbb{V}ar}[A_\pi(s,a,s')]$, which means:
\begin{align}
    &\|\bm\Omega_{\pi'} -\bm\Omega_\pi\|_\infty = \underset{s}{\mathbf{max}}\bigg|\underset{\substack{a \sim \pi' \\ s' \sim P}}{\mathbb{V}ar}[A_{\pi'}(s,a,s')] - \underset{\substack{a \sim \pi \\ s' \sim P}}{\mathbb{V}ar}[A_\pi(s,a,s')]\bigg|\\ \nonumber
\end{align}
Where
\begin{align}
    A_\pi(s,a,s')&= R(s, a, s') + \gamma V_{\pi}(s') - V_{\pi}(s)\\ \nonumber
    A_{\pi'}(s,a,s')&= R(s, a, s') + \gamma V_{\pi'}(s') - V_{\pi'}(s)\\ \nonumber
\end{align}
Define $H(s,a,s')=A_{\pi'}(s,a,s') - A_{\pi}(s,a,s')$, we have:
\begin{align}
    H(s,a,s')&= \gamma( V_{\pi'}(s') - V_{\pi}(s')) - (V_{\pi'}(s) - V_{\pi}(s))\\ \nonumber
\end{align}
Similar to TRPO~\cite{schulman2015trust}:
\begin{align}
\label{eq: Expectation of A = -v + v}
    &\underset{\substack{s_0 = s \\{\hat \tau} \sim \pi'}}{\mathbb{E}}\bigg[\sum_{t=0}^\infty \gamma^tA_{\pi}(s_t,a_t,s_{t+1})\bigg]\\ \nonumber
    &=\underset{\substack{s_0 = s \\{\hat \tau} \sim \pi'}}{\mathbb{E}}\bigg[\sum_{t=0}^\infty \gamma^t(R_{\pi}(s_t,a_t,s_{t+1})+\gamma V_{\pi}(s_{t+1})-V_{\pi}(s_{t}))\bigg]\\ \nonumber
    &=\underset{\substack{s_0 = s \\{\hat \tau} \sim \pi'}}{\mathbb{E}}\bigg[-V_{\pi}(s_{0}) +\sum_{t=0}^\infty \gamma^tR_{\pi}(s_t,a_t,s_{t+1})\bigg]\\ \nonumber
    &=\underset{\substack{s_0 = s }}{\mathbb{E}}\bigg[-V_{\pi}(s_{0})\bigg] + \underset{\substack{s_0 = s \\{\hat \tau} \sim \pi'}}{\mathbb{E}}\bigg[\sum_{t=0}^\infty \gamma^tR_{\pi}(s_t,a_t,s_{t+1})\bigg]\\ \nonumber
    &=-V_{\pi}(s) + V_{\pi'}(s)\\ \nonumber
\end{align}
Then $H(s,a,s')$ can be written as:
\begin{align}
    H(s,a,s')&= \gamma\underset{\substack{s_0 = s' \\{\hat \tau} \sim \pi'}}{\mathbb{E}}\bigg[\sum_{t=0}^\infty \gamma^tA_{\pi}(s_t,a_t,s_{t+1})\bigg] - \underset{\substack{s_0 = s \\{\hat \tau} \sim \pi'}}{\mathbb{E}}\bigg[\sum_{t=0}^\infty \gamma^tA_{\pi}(s_t,a_t,s_{t+1})\bigg]\\ \nonumber
\end{align}
Define ${\bar A_{\pi',\pi}(s)}$ to be the expected advantage of ${\pi'}$ over ${\pi}$ at state ${s}$:
\begin{align}
    \bar A_{\pi',\pi}(s) =  \underset{a \sim \pi'}{\mathbb{E}}\bigg[A_{\pi}(s,a)\bigg]\\ \nonumber
\end{align}
Now $H(s,a,s')$ can be written as:
\begin{align}
    H(s,a,s')&= \gamma\underset{\substack{s_0 = s' \\{\hat \tau} \sim \pi'}}{\mathbb{E}}\bigg[\sum_{t=0}^\infty \gamma^t \bar A_{\pi',\pi}(s_t)\bigg] - \underset{\substack{s_0 = s \\{\hat \tau} \sim \pi'}}{\mathbb{E}}\bigg[\sum_{t=0}^\infty \gamma^t\bar A_{\pi',\pi}(s_t)\bigg]\\ \nonumber
\end{align}
Define ${L(s,a,s')}$ as:
\begin{align}
    L(s,a,s')&= \gamma\underset{\substack{s_0 = s' \\{\hat \tau} \sim \pi}}{\mathbb{E}}\bigg[\sum_{t=0}^\infty \gamma^t \bar A_{\pi',\pi}(s_t)\bigg] - \underset{\substack{s_0 = s \\{\hat \tau} \sim \pi}}{\mathbb{E}}\bigg[\sum_{t=0}^\infty \gamma^t\bar A_{\pi',\pi}(s_t)\bigg]\\ \nonumber
\end{align}
With $\epsilon = \mathbf{max}_{s,a}|A_\pi(s,a)|$, 
we have:
\begin{align}
    &|H(s,a,s') - L(s,a,s')| \\ \nonumber
    &= \bigg|\gamma\bigg(\underset{\substack{s_0 = s' \\{\hat \tau} \sim \pi'}}{\mathbb{E}}\bigg[\sum_{t=0}^\infty \gamma^t \bar A_{\pi',\pi}(s_t)\bigg] - \underset{\substack{s_0 = s' \\{\hat \tau} \sim \pi}}{\mathbb{E}}\bigg[\sum_{t=0}^\infty \gamma^t \bar A_{\pi',\pi}(s_t)\bigg]\bigg) \\ \nonumber
    &- \bigg(\underset{\substack{s_0 = s \\{\hat \tau} \sim \pi'}}{\mathbb{E}}\bigg[\sum_{t=0}^\infty \gamma^t\bar A_{\pi',\pi}(s_t)\bigg] - \underset{\substack{s_0 = s \\{\hat \tau} \sim \pi}}{\mathbb{E}}\bigg[\sum_{t=0}^\infty \gamma^t\bar A_{\pi',\pi}(s_t)\bigg]\bigg)\bigg|\\ \nonumber
    &\leq \gamma\bigg|\underset{\substack{s_0 = s' \\{\hat \tau} \sim \pi'}}{\mathbb{E}}\bigg[\sum_{t=0}^\infty \gamma^t \bar A_{\pi',\pi}(s_t)\bigg] - \underset{\substack{s_0 = s' \\{\hat \tau} \sim \pi}}{\mathbb{E}}\bigg[\sum_{t=0}^\infty \gamma^t \bar A_{\pi',\pi}(s_t)\bigg]\bigg| \\ \nonumber
    &+ \bigg|\underset{\substack{s_0 = s \\{\hat \tau} \sim \pi'}}{\mathbb{E}}\bigg[\sum_{t=0}^\infty \gamma^t\bar A_{\pi',\pi}(s_t)\bigg] - \underset{\substack{s_0 = s \\{\hat \tau} \sim \pi}}{\mathbb{E}}\bigg[\sum_{t=0}^\infty \gamma^t\bar A_{\pi',\pi}(s_t)\bigg]\bigg|\\ \nonumber
    &\leq \dfrac{4\gamma(1+\gamma)\epsilon}{(1-\gamma)^2}(\mathcal{D}_{TV}^{max}(\pi'\|\pi))^2 \quad \leftarrow \texttt{([Lemma3, \citep{schulman2015trust}])}
\end{align}

Then according to \cite{pinskerInequ} $\mathcal{D}_{TV}^{max}(\pi'\|\pi)\leq\sqrt{\frac{1}{2}\mathcal{D}_{KL}^{max}(\pi'\|\pi)}$, we can then bound ${|H(s,a,s')|}$ with:
\begin{align}
    |H(s,a,s')| &\leq \left|L(s,a,s')\right| + \dfrac{4\gamma(1+\gamma)\epsilon}{(1-\gamma)^2}(\mathcal{D}_{TV}^{max}(\pi'\|\pi))^2 \\ \nonumber
    &\leq \left|L(s,a,s')\right| + \dfrac{2\gamma(1+\gamma)\epsilon}{(1-\gamma)^2}\mathcal{D}_{KL}^{max}(\pi'\|\pi) \doteq |H(s,a,s')|_{max}
\end{align}

With $A_{\pi'}(s,a,s') = A_{\pi}(s,a,s') + H(s,a,s')$, we have:

\begin{align}
    &\underset{\substack{a \sim \pi' \\ s' \sim P}}{\mathbb{V}ar}[A_{\pi'}(s,a,s')] - \underset{\substack{a \sim \pi \\ s' \sim P}}{\mathbb{V}ar}[A_\pi(s,a,s')]\\ \nonumber
    &=\underset{\substack{a \sim \pi' \\ s' \sim P}}{\mathbb{E}}[A_{\pi'}(s,a,s')^2] - \underset{\substack{a \sim \pi \\ s' \sim P}}{\mathbb{E}}[A_{\pi}(s,a,s')^2]\\ \nonumber
    &= \underset{\substack{a \sim \pi' \\ s' \sim P}}{\mathbb{E}}[(A_{\pi}(s,a,s') + H(s,a,s'))^2] - \underset{\substack{a \sim \pi \\ s' \sim P}}{\mathbb{E}}[A_{\pi}(s,a,s')^2]\\ \nonumber
    &= \underset{\substack{a \sim \pi' \\ s' \sim P}}{\mathbb{E}}[A_{\pi}(s,a,s')^2] - \underset{\substack{a \sim \pi \\ s' \sim P}}{\mathbb{E}}[A_{\pi}(s,a,s')^2] + 2\underset{\substack{a \sim \pi' \\ s' \sim P}}{\mathbb{E}}[A_{\pi}(s,a,s') H(s,a,s')] + \underset{\substack{a \sim \pi' \\ s' \sim P}}{\mathbb{E}}[H(s,a,s')^2]\\ \nonumber
    &= \underset{\substack{\\a\sim\pi\\s'\sim P}}{\mathbb{E}}\left[\left(\frac{\pi^{\prime}(a|s)}{\pi(a|s)}-1\right) A_{\pi}(s,a,s')^2\right] + 2\underset{\substack{a \sim \pi' \\ s' \sim P}}{\mathbb{E}}[A_{\pi}(s,a,s') H(s,a,s')] + \underset{\substack{a \sim \pi' \\ s' \sim P}}{\mathbb{E}}[H(s,a,s')^2]\\ \nonumber
    &\leq \underset{\substack{\\a\sim\pi\\s'\sim P}}{\mathbb{E}}\left[\left(\frac{\pi^{\prime}(a|s)}{\pi(a|s)}-1\right) A_{\pi}(s,a,s')^2\right] + 2\underset{\substack{a \sim \pi \\ s' \sim P}}{\mathbb{E}}\left[\left(\frac{\pi^{\prime}(a|s)}{\pi(a|s)}\right)A_{\pi}(s,a,s')\right]|H(s,a,s')|_{max} + |H(s,a,s')|_{max}^2\\ \nonumber
\end{align}
Then we can bound ${\|\bm\Omega_{\pi'} -\bm\Omega_\pi\|_\infty}$ with:
\begin{align}
\label{eq: Omega differences}
 &\|\bm\Omega_{\pi'} -\bm\Omega_\pi\|_\infty \\ \nonumber
 &\leq \underset{s}{\mathbf{max}}\bigg|\underset{\substack{\\a\sim\pi\\s'\sim P}}{\mathbb{E}}\left[\left(\frac{\pi^{\prime}(a|s)}{\pi(a|s)}-1\right) A_{\pi}(s,a,s')^2\right] \\ \nonumber 
 &~~~~~+ 2\underset{\substack{a \sim \pi \\ s' \sim P}}{\mathbb{E}}\left[\left(\frac{\pi^{\prime}(a|s)}{\pi(a|s)}\right)A_{\pi}(s,a,s')\right]|H(s,a,s')|_{max} + |H(s,a,s')|_{max}^2\bigg|\\ \nonumber
\label{eq: MV_2}
\end{align}
By substituting \Cref{eq: Omega differences} into \Cref{eq: MV_1}, \Cref{lem: bound of MV} is proved.
\end{proof}
\subsection{VarianceMean Bound}
\label{proof: VarianceMean Bound}
\begin{prop}[Bound of VarianceMean]
\label{lem: bound of VM}
Denote \textbf{VarianceMean} of policy ${\pi}$ as $VM_\pi = \underset{s_0 \sim \mu}{\mathbb{V}ar} [V_\pi(s_0)]$. Given two policies $\pi', \pi$, the \textbf{VarianceMean} of $\pi'$ can be bounded by:
\begin{align}
    VM_{\pi'}  &\leq \underset{s_0 \sim \mu}{\mathbb{E}} [V_\pi^2(s_0)] + \|\mu^\top\|_\infty\underset{s}{\mathbf{max}}\bigg||\eta(s)|_{max}^2+2|V_\pi(s)|\cdot|\eta(s)|_{max}\bigg| \\ \nonumber
    &- \left(\mathbf{min}\left\{\mathbf{max}\left\{0,\ \mathcal{J}^l_{\pi^{\prime}, \pi}\right\}, \mathcal{J}^u_{\pi^{\prime}, \pi}\right\}\right)^2
\end{align}
\end{prop}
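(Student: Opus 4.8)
\textbf{Proof proposal for \Cref{lem: bound of VM}.}

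The plan is to mimic the structure of the \textbf{MeanVariance} bound (\Cref{lem: bound of MV}), exploiting the decomposition $VM_{\pi'} = \mathbb{E}_{s_0 \sim \mu}[V_{\pi'}^2(s_0)] - \mathcal{J}(\pi')^2$, since $VM_{\pi'} = \mathbb{V}ar_{s_0 \sim \mu}[V_{\pi'}(s_0)]$ by definition. I would then upper-bound the first term $\mathbb{E}_{s_0 \sim \mu}[V_{\pi'}^2(s_0)]$ and lower-bound the second term $\mathcal{J}(\pi')^2$, so that their difference is upper-bounded. For the first term, write $V_{\pi'}(s) = V_\pi(s) + \eta(s)$ where $\eta(s) = V_{\pi'}(s) - V_\pi(s) = \mathbb{E}_{s_0 = s,\, \hat\tau \sim \pi'}[\sum_{t=0}^\infty \gamma^t \bar A_{\pi',\pi}(s_t)]$ by the identity \eqref{eq: Expectation of A = -v + v} already established in the proof of \Cref{lem: bound of MV}. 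Applying [Lemma 3, \citep{schulman2015trust}] together with Pinsker's inequality (exactly as done for $|H|_{max}$) yields the bound $|\eta(s)| \le |\mathbb{E}_{s_0=s,\hat\tau\sim\pi}[\sum_t \gamma^t \bar A_{\pi',\pi}(s_t)]| + \frac{2\gamma\epsilon}{(1-\gamma)^2}\mathcal{D}_{KL}^{max}(\pi'\|\pi) = |\eta(s)|_{max}$, matching the definition in \eqref{eq: eta def}.

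Next, expand $V_{\pi'}^2(s) = V_\pi^2(s) + 2V_\pi(s)\eta(s) + \eta(s)^2$, so that
\begin{align}
\mathbb{E}_{s_0 \sim \mu}[V_{\pi'}^2(s_0)] - \mathbb{E}_{s_0 \sim \mu}[V_{\pi}^2(s_0)] = \mathbb{E}_{s_0 \sim \mu}\big[2 V_\pi(s_0)\eta(s_0) + \eta(s_0)^2\big].
\end{align}
Bounding the right-hand side by $\mathbb{E}_{s_0\sim\mu}[\,|2V_\pi(s_0)\eta(s_0)| + |\eta(s_0)^2|\,]$, pulling $\mu$ out via Hölder to get $\|\mu^\top\|_\infty$, and replacing the state-dependent expressions by their $s$-wise maxima $\max_s |\,|\eta(s)|_{max}^2 + 2|V_\pi(s)|\cdot|\eta(s)|_{max}\,|$ gives the first two terms of the claimed bound. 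For the $-\mathcal{J}(\pi')^2$ term, I would use the fact that $\mathcal{J}(\pi')$ is sandwiched between its surrogate lower and upper bounds $\mathcal{J}^l_{\pi',\pi}$ and $\mathcal{J}^u_{\pi',\pi}$ (to be established in \Cref{proof: Expectation Bound}, i.e. \Cref{lem: lower bound of mean}); hence $\mathcal{J}(\pi')^2 \ge \big(\mathbf{min}\{\mathbf{max}\{0, \mathcal{J}^l_{\pi',\pi}\}, \mathcal{J}^u_{\pi',\pi}\}\big)^2$, where the clipping at $0$ handles the case where the interval $[\mathcal{J}^l_{\pi',\pi}, \mathcal{J}^u_{\pi',\pi}]$ straddles zero (in which case the squared value can be as low as $0$) and the outer $\mathbf{min}$ handles the case where the whole interval is negative. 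Combining these two bounds yields the statement.

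The main obstacle I anticipate is the careful handling of the $-\mathcal{J}(\pi')^2$ term: squaring is not monotone, so one must reason about where the interval $[\mathcal{J}^l_{\pi',\pi}, \mathcal{J}^u_{\pi',\pi}]$ sits relative to $0$ to obtain the correct minorant of $\mathcal{J}(\pi')^2$ — this is precisely what the nested $\mathbf{min}/\mathbf{max}$ expression encodes, and verifying it covers all three cases (interval entirely nonnegative, entirely nonpositive, or straddling $0$) is the delicate part. A secondary technical point is ensuring the triangle-inequality step $|2V_\pi(s_0)\eta(s_0) + \eta(s_0)^2| \le |\eta(s_0)|_{max}^2 + 2|V_\pi(s_0)|\cdot|\eta(s_0)|_{max}$ is valid state-wise before taking the maximum over $s$ and the expectation over $\mu$; this follows since $|\eta(s_0)| \le |\eta(s_0)|_{max}$ pointwise, but one should double-check that the absolute value around the max-expression in the statement is consistent with this. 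Everything else reduces to routine manipulation already exemplified in the proof of \Cref{lem: bound of MV}.
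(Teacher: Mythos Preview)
Your proposal is correct and follows essentially the same route as the paper: decompose $VM_{\pi'}=\mathbb{E}_{s_0\sim\mu}[V_{\pi'}^2(s_0)]-\mathcal{J}(\pi')^2$, upper-bound the first term via $|\eta(s)|\le|\eta(s)|_{max}$ and a $\|\mu^\top\|_\infty$--$\max_s$ split, and lower-bound $\mathcal{J}(\pi')^2$ by the clipped expression using $\mathcal{J}^l_{\pi',\pi}\le\mathcal{J}(\pi')\le\mathcal{J}^u_{\pi',\pi}$. The only cosmetic difference is that the paper factors $V_{\pi'}^2(s)-V_\pi^2(s)=(V_{\pi'}(s)-V_\pi(s))(V_{\pi'}(s)+V_\pi(s))$ and bounds $|V_{\pi'}(s)+V_\pi(s)|\le|\eta(s)|_{max}+2|V_\pi(s)|$, whereas you expand the square directly; both paths give the identical state-wise bound $|\eta(s)|_{max}^2+2|V_\pi(s)|\cdot|\eta(s)|_{max}$.
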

\begin{proof}
\begin{align}
\label{eq: bound of VM}
    VM_{\pi'} = \underset{s_0 \sim \mu}{\mathbb{E}} [V_{\pi'}^2(s_0)] - \mathcal{J}(\pi')^2
\end{align}
Since both terms on the right of \Cref{eq: bound of VM} are non-negative, we can bound $VM_{\pi'}$ with the upper bound of $\underset{s_0 \sim \mu}{\mathbb{E}} [V_{\pi'}^2(s_0)]$ and the lower bound of $\mathcal{J}(\pi')^2$.\\
Define $\bm Y_\pi = \begin{bmatrix} V_\pi^2(s^1) \\ V_\pi^2(s^2) \\ \vdots \end{bmatrix}$, where $\underset{s_0 \sim \mu}{\mathbb{E}} [V_\pi^2(s_0)] = \mu^\top \bm Y_\pi$.
Then we have
\begin{align}
\label{eq: bound EV}
    &\big|\underset{s_0 \sim \mu}{\mathbb{E}} [V_{\pi'}^2(s_0)]-\underset{s_0 \sim \mu}{\mathbb{E}} [V_\pi^2(s_0)]\big| \\ \nonumber
    &= \|\mu^\top \big(\bm Y_{\pi'} - \bm Y_\pi) \|_\infty\\ \nonumber
    &\leq \|\mu^\top\|_\infty  \|\bm Y_{\pi'} - \bm Y_\pi\|_\infty
\end{align}
To address $\|\bm Y_{\pi'} - \bm Y_\pi\|_\infty$, we have:
\begin{align}
    V_{\pi'}^2(s) - V_\pi^2(s) 
    &= \bigg(V_{\pi'}(s) - V_\pi(s)\bigg)\bigg(V_{\pi'}(s)+V_\pi(s)\bigg) \\ \nonumber
\end{align}
According to \eqref{eq: Expectation of A = -v + v}:
\begin{align}
    V_{\pi'}(s) - V_{\pi}(s) &= \underset{\substack{s_0 = s \\{\hat \tau} \sim \pi'}}{\mathbb{E}}\bigg[\sum_{t=0}^\infty \gamma^tA_{\pi}(s_t,a_t,s_{t+1})\bigg] \\ \nonumber
    &= \underset{\substack{s_0 = s \\{\hat \tau} \sim \pi'}}{\mathbb{E}}\bigg[\sum_{t=0}^\infty \gamma^t\bar A_{\pi',\pi}(s_t)\bigg]\\ \nonumber
    &\dot = \ \ \eta(s)
\end{align}
Define $T(s) = \underset{\substack{s_0 = s \\{\hat \tau} \sim \pi}}{\mathbb{E}}\bigg[\sum_{t=0}^\infty \gamma^t\bar A_{\pi',\pi}(s_t)\bigg]$, then we have:
\begin{align}
    |\eta(s)-T(s)| = 
    \bigg|\underset{\substack{s_0 = s \\{\hat \tau} \sim \pi'}}{\mathbb{E}}\bigg[\sum_{t=0}^\infty \gamma^t\bar A_{\pi',\pi}(s_t)\bigg] - \underset{\substack{s_0 = s \\{\hat \tau} \sim \pi}}{\mathbb{E}}\bigg[\sum_{t=0}^\infty \gamma^t\bar A_{\pi',\pi}(s_t)\bigg] \bigg| 
    \leq \dfrac{4\gamma\epsilon}{(1-\gamma)^2}(\mathcal{D}_{TV}^{max}(\pi'\|\pi))^2
\end{align}
And according to \cite{pinskerInequ}, we can bound $|\eta(s)|$ with:
\begin{align}
    |\eta(s)| &\leq \left|T(s)\right| + \dfrac{4\gamma\epsilon}{(1-\gamma)^2}(\mathcal{D}_{TV}^{max}(\pi'\|\pi))^2 \\ \nonumber
    &\leq \left|T(s)\right| + \dfrac{2\gamma\epsilon}{(1-\gamma)^2}\mathcal{D}_{KL}^{max}(\pi'\|\pi)) \doteq |\eta(s)|_{max} 
\end{align}
Further, we can obtain:
\begin{align}
    |V_{\pi'}(s)+V_\pi(s)| &\leq  |V_{\pi'}(s)|+|V_\pi(s)| \\ \nonumber
    &= |V_{\pi'}(s)| - |V_\pi(s)| + 2|V_\pi(s)| \\ \nonumber
    &\leq |V_{\pi'}(s)-V_\pi(s)| + 2|V_\pi(s)| \\ \nonumber
    &\leq |\eta(s)|_{max} + 2|V_\pi(s)| 
\end{align}
Thus the following inequality holds:
\begin{align}
\label{eq: bound V difference}
    &\|\bm Y_{\pi'} - \bm Y_\pi\|_\infty \\ \nonumber
    &\leq \underset{s}{\mathbf{max}}\bigg||V_{\pi'}(s)-V_\pi(s)|\cdot|V_{\pi'}(s)+V_\pi(s)|\bigg| \\ \nonumber
    &\leq \underset{s}{\mathbf{max}}\bigg||\eta(s)|_{max}\cdot\left(|\eta(s)|_{max} + 2|V_\pi(s)|\right)\bigg| \\ \nonumber
    &= \underset{s}{\mathbf{max}}\bigg||\eta(s)|_{max}^2+2|V_\pi(s)|\cdot|\eta(s)|_{max}\bigg|
\end{align}
Substitute \Cref{eq: bound V difference} into \Cref{eq: bound EV} the upper bound of $\underset{s_0 \sim \mu} {\mathbb{E}} [V_{\pi'}^2(s_0)]$ is obtained:
\begin{align}
\label{eq: upper bound of the first term}
\underset{s_0 \sim \mu}{\mathbb{E}} [V_{\pi'}^2(s_0)] &\leq \underset{s_0 \sim \mu}{\mathbb{E}} [V_\pi^2(s_0)] + \|\mu^\top\|_\infty\underset{s}{\mathbf{max}}\bigg||\eta(s)|_{max}^2+2|V_\pi(s)|\cdot|\eta(s)|_{max}\bigg| \\ \nonumber
\end{align}
The lower bound of $\mathcal{J}(\pi')^2$ can then be obtained according to \citep{achiam2017cpo}:
\begin{align}
\label{eq: lower bound of the second term}
    \mathcal{J}(\pi')^2 \geq \left(\mathbf{min}\left\{\mathbf{max}\left\{0,\ \mathcal{J}^l_{\pi^{\prime}, \pi}\right\}, \mathcal{J}^u_{\pi^{\prime}, \pi}\right\}\right)^2
\end{align}
where
\begin{align}
\nonumber
    \mathcal{J}^l_{\pi^{\prime}, \pi}&=\mathcal{J}(\pi) + \frac{1}{1-\gamma} \underset{\substack{s \sim d^\pi \\ a\sim {\pi'}}}{\mathbb{E}} \bigg[ A_\pi(s,a) - \frac{2\gamma \epsilon^{\pi'}}{1-\gamma} \sqrt{\frac 12 \mathcal{D}_{KL}({\pi'} \| \pi)[s]} \bigg] \\ \nonumber
    \mathcal{J}^u_{\pi^{\prime}, \pi}&=\mathcal{J}(\pi) + \frac{1}{1-\gamma} \underset{\substack{s \sim d^\pi \\ a\sim {\pi'}}}{\mathbb{E}} \bigg[ A_\pi(s,a) + \frac{2\gamma \epsilon^{\pi'}}{1-\gamma} \sqrt{\frac 12 \mathcal{D}_{KL}({\pi'} \| \pi)[s]} \bigg]
\end{align}
By substituting \Cref{eq: upper bound of the first term} and \Cref{eq: lower bound of the second term} into \Cref{eq: bound of VM} \Cref{lem: bound of VM} is proved.
\end{proof}

\subsection{Expectation Bound [Theorem 1, \cite{achiam2017cpo}]}
\label{proof: Expectation Bound}
\begin{prop} 
For any policies $\pi', \pi$, with $\epsilon^{\pi'} \doteq \underset{s}{\mathbf{max}}|\underset{a\sim\pi'}{\mathbb{E}}[A^{\pi}(s,a)]|$, and define $d^\pi = (1-\gamma)\sum\limits_{t=0}^\infty\gamma^t P(s_t=s|\pi)$ as the discounted state distribution using $\pi$, then the following bound holds:
\begin{align}
  \mathcal{J}(\pi') - \mathcal{J}(\pi) \geq \frac{1}{1-\gamma} \underset{\substack{s \sim d^\pi \\ a\sim {\pi'}}}{\mathbb{E}} \bigg[ A^\pi(s,a) - \frac{2\gamma \epsilon^{\pi'}}{1-\gamma} \mathcal{D}_{TV}({\pi'} \| \pi)[s] \bigg]
\end{align}
\label{lem: lower bound of mean}
\end{prop}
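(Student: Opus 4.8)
The plan is to reproduce the argument of Kakade--Langford and Achiam et al.\ (CPO) in three steps: an exact identity for the performance difference, a change of the state-visitation measure at a controlled cost, and a bound on the $\ell_1$ discrepancy between the two discounted state distributions.

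First I would establish the \emph{performance difference identity}: for any two policies, $\mathcal{J}(\pi') - \mathcal{J}(\pi) = \frac{1}{1-\gamma}\,\mathbb{E}_{s\sim d^{\pi'},\,a\sim\pi'}[A^\pi(s,a)]$. This follows by telescoping. Using $A^\pi(s,a,s') = R(s,a,s') + \gamma V_\pi(s') - V_\pi(s)$, one has $\sum_{t=0}^\infty \gamma^t A^\pi(s_t,a_t,s_{t+1}) = \sum_{t=0}^\infty \gamma^t R(s_t,a_t,s_{t+1}) - V_\pi(s_0)$ along any trajectory, so $\mathbb{E}_{\tau\sim\pi'}[\sum_t \gamma^t A^\pi(s_t,a_t,s_{t+1})] = \mathcal{J}(\pi') - \mathbb{E}_{s_0\sim\mu}[V_\pi(s_0)] = \mathcal{J}(\pi') - \mathcal{J}(\pi)$. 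On the other hand, taking the inner expectations over $a_t\sim\pi'$ and $s_{t+1}\sim P$ turns the left-hand side into $\sum_t \gamma^t \mathbb{E}_{s_t\sim P(\cdot\mid\pi')}[\bar A_{\pi',\pi}(s_t)]$, and re-summing the $\gamma^t$-weighted occupancies against $d^{\pi'}(s) = (1-\gamma)\sum_t \gamma^t P(s_t=s\mid\pi')$ gives $\frac{1}{1-\gamma}\mathbb{E}_{s\sim d^{\pi'}}[\bar A_{\pi',\pi}(s)]$, where $\bar A_{\pi',\pi}(s) = \mathbb{E}_{a\sim\pi'}[A^\pi(s,a)]$.

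Next I would add and subtract $\mathbb{E}_{s\sim d^\pi}[\bar A_{\pi',\pi}(s)]$ and lower-bound the error by H\"older: $\big|\mathbb{E}_{s\sim d^{\pi'}}[\bar A_{\pi',\pi}] - \mathbb{E}_{s\sim d^\pi}[\bar A_{\pi',\pi}]\big| \le \|d^{\pi'} - d^\pi\|_1\,\max_s|\bar A_{\pi',\pi}(s)| = \epsilon^{\pi'}\,\|d^{\pi'} - d^\pi\|_1$. The core estimate is then $\|d^{\pi'} - d^\pi\|_1 \le \frac{2\gamma}{1-\gamma}\,\mathbb{E}_{s\sim d^\pi}[\mathcal{D}_{TV}(\pi'\|\pi)[s]]$. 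To obtain it, write $d^\pi = (1-\gamma)(I-\gamma P_\pi)^{-1}\mu$ in vector form (with $P_\pi$ the state-to-state transition operator induced by $\pi$) and use the resolvent identity $(I-\gamma P_{\pi'})^{-1} - (I-\gamma P_\pi)^{-1} = \gamma(I-\gamma P_{\pi'})^{-1}(P_{\pi'}-P_\pi)(I-\gamma P_\pi)^{-1}$, which yields $d^{\pi'} - d^\pi = \gamma(I-\gamma P_{\pi'})^{-1}(P_{\pi'}-P_\pi)\,d^\pi$. Bounding $\|(I-\gamma P_{\pi'})^{-1}\|_1 \le \sum_{t\ge 0}\gamma^t = (1-\gamma)^{-1}$ by the Neumann series (each $P_{\pi'}$ is stochastic), and $\|(P_{\pi'}-P_\pi)d^\pi\|_1 \le \sum_s d^\pi(s)\,\|P_{\pi'}(\cdot\mid s)-P_\pi(\cdot\mid s)\|_1 \le \sum_s d^\pi(s)\sum_a |\pi'(a\mid s)-\pi(a\mid s)| = 2\,\mathbb{E}_{s\sim d^\pi}[\mathcal{D}_{TV}(\pi'\|\pi)[s]]$, gives the claim. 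Combining the pieces, $(1-\gamma)(\mathcal{J}(\pi')-\mathcal{J}(\pi)) \ge \mathbb{E}_{s\sim d^\pi}[\bar A_{\pi',\pi}(s)] - \frac{2\gamma\epsilon^{\pi'}}{1-\gamma}\mathbb{E}_{s\sim d^\pi}[\mathcal{D}_{TV}(\pi'\|\pi)[s]]$, which is exactly the stated bound after dividing by $1-\gamma$ and writing $\bar A_{\pi',\pi}(s) = \mathbb{E}_{a\sim\pi'}[A^\pi(s,a)]$.

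I expect the main obstacle to be the norm bookkeeping in the $\|d^{\pi'} - d^\pi\|_1$ estimate: keeping the convention for $P_\pi$ acting on measures consistent so that $\|(I-\gamma P_{\pi'})^{-1}\|_1 \le (1-\gamma)^{-1}$ genuinely holds, and correctly reducing the kernel discrepancy $\|P_{\pi'}(\cdot\mid s)-P_\pi(\cdot\mid s)\|_1$ to the policy discrepancy via $P_{\pi'}(\cdot\mid s)-P_\pi(\cdot\mid s) = \sum_a (\pi'(a\mid s)-\pi(a\mid s))\,P(\cdot\mid s,a)$ and the triangle inequality. Everything else reduces to telescoping and H\"older.
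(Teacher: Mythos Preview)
Your proposal is correct and follows essentially the same route as the paper's proof: the paper establishes the performance-difference identity (via a general $f$ and then specializing to $f=V_\pi$, which recovers the Kakade--Langford lemma you invoke directly), applies H\"older with $p=1$, $q=\infty$ to swap $d^{\pi'}$ for $d^\pi$ at cost $\epsilon^{\pi'}\|d^{\pi'}-d^\pi\|_1$, and bounds $\|d^{\pi'}-d^\pi\|_1$ via the identical resolvent identity and Neumann-series estimate you describe. The only cosmetic difference is that the paper carries a general baseline $f$ through Lemmas~7--9 before setting $f=V_\pi$, whereas you work with the advantage function from the outset; the substance of the argument is the same.
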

\begin{proof}
$d^\pi$ we used is defined as
\begin{align}
    d^\pi(\hat{s})=(1-\gamma)\sum\limits_{t=0}^\infty\gamma^t P(s_t=s|\pi).
\end{align}

Then it allows us to express the expected discounted total reward compactly as:
\begin{align}
\label{eq: J def}
    \mathcal{J}(\pi)=\dfrac{1}{1-\gamma}\underset{\substack{s \sim d^\pi \\ a\sim {\pi}\\s'\sim P}}{\mathbb{E}}\left[R(s,a,s')\right],
\end{align}

where by $a\sim \pi$, we mean $a \sim \pi(\cdot|s)$, and by $s'\sim P$,we mean $s'\sim P(\cdot|s,a)$. We drop the explicit notation for the sake of reducing clutter, but it should be clear from context that $a$ and $s'$ depend on $s$. 

Define $P(s'|s,a)$ is the probability of transitioning to state $s'$ given that the previous state was $s$ and the agent took action $a$ at state $s$, and $\mu : \mathcal{\mathcal{S}} \mapsto [0, 1]$ is the initial augmented state distribution. Let $p_\pi^t\in\mathbb{R}^{|\mathcal{S}|} $ denote the vector with components $p_\pi^t(s) = P(s_t = s|\pi)$, and let $P_{\pi} \in \mathbb{R}^{|\mathcal{S}|\times|\mathcal{S}}|$ denote the transition matrix with components $P_{\pi}(s'|s) =\int P(s'|s,a)\pi(a|s) da$; then $p_\pi^t=P_\pi p_\pi^{t-1}=P_\pi^t\mu$ and
\begin{align}
\label{eq: d pi}
    d^\pi& =(1-\gamma)\sum\limits_{t=0}^\infty(\gamma P_\pi)^t \mu  \\ \nonumber
    &=(1-\gamma)(I-\gamma P_\pi)^{-1}\mu
\end{align}
This formulation helps us easily obtain the following lemma.

\begin{lem} 
\label{lem: lemma 6}
For any function $f:\mathcal{S}\mapsto\mathbb{R}$ and any policy $\pi$,
\begin{align}
    (1-\gamma)\underset{s\sim \mu}{\mathbb{E}}[f( s)]+\underset{\substack{s \sim d^\pi \\ a\sim {\pi}\\ s'\sim P}}{\mathbb{E}}[\gamma f(s')]-\underset{s\sim d^\pi}{\mathbb{E}}[f( s)]=0.
\end{align}
\end{lem}
\begin{proof}
    Multiply both sides of \eqref{eq: d pi} by $(I-\gamma P_\pi)$ and take the inner product with the vector $f\in\mathbb{R}^{|\mathcal{S}|}$.
\end{proof}
Combining \Cref{lem: lemma 6} with \eqref{eq: J def}, we obtain the following, for any function $f$ and any policy $\pi$:
\begin{align}
\label{eq: J pi def}
    \mathcal{J}(\pi)=\underset{s\sim\mu}{\mathbb{E}}\left[f(s)\right]+\dfrac{1}{1-\gamma}\underset{\substack{s \sim d^\pi \\ a\sim \pi\\ s'\sim P}}{\mathbb{E}}\left[R(s,a,s')+\gamma f(s')-f(s)\right]
\end{align}
Then we will derive and present the new policy improvement bound. We will begin with a lemma:
\begin{lem}
\label{lem: lemma 7}
For any function $f \mapsto \mathbb{R}$ and any policies $\pi'$ and $\pi$, define
\begin{align}
L_{\pi,f}(\pi')\doteq\underset{\substack{s \sim d^\pi \\ a\sim \pi\\ s'\sim P}}{\mathbb{E}}\left[\left(\dfrac{\pi'(a|s)}{\pi(a|s)}-1\right)(R(s,a,s')+\gamma f(s')-f(s))\right],
\end{align}
and $\epsilon_f^{\pi\prime}\doteq\underset{s}{\mathbf{max}}|\underset{\substack{a\sim\pi^\prime\\s^\prime\sim P}}{\mathbb{E}}[R(s,a,s^\prime)+\gamma f(s^\prime)-f(s)]|.$ Then the following bounds hold:
\begin{align}
\label{eq: J lower bound}
    \mathcal{J}(\pi') - \mathcal{J}(\pi) \geq\frac{1}{1-\gamma}\left(L_{\pi,f}(\pi^{\prime})-2\epsilon_{f}^{\pi^{\prime}}D_{T V}(d^{\pi^{\prime}}||d^{\pi})\right), \\
\label{eq: J upper bound}
    \mathcal{J}(\pi') - \mathcal{J}(\pi) \leq\frac{1}{1-\gamma}\left(L_{\pi,f}(\pi^{\prime})+2\epsilon_{f}^{\pi^{\prime}}D_{T V}(d^{\pi^{\prime}}||d^{\pi})\right), 
\end{align}
where $D_{TV}$ is the total variational divergence. Furthermore, the bounds are tight(when $\pi'$ = $\pi$, the LHS and RHS are identically zero).
\end{lem}
\begin{proof}
    
First, for notational convenience, let $\delta_f(s,a,s') \doteq R(s,a,s') + \gamma f(s') - f(s)$. By \eqref{eq: J pi def}, we obtain the identity
\begin{align}
\label{eq: J difference detail}
     \mathcal{J}(\pi')-\mathcal{J}(\pi)=\dfrac{1}{1-\gamma}\left( \underset{\substack{s \sim d^{\pi'} \\ a\sim {\pi'}\\ s'\sim P}}{\mathbb{E}}[\delta_f(s,a,s')] - \underset{\substack{s \sim d^\pi \\ a\sim \pi\\ s'\sim P}}{\mathbb{E}}[\delta_f(s,a,s')] \right)
\end{align}

Now, we restrict our attention to the first term in \eqref{eq: J difference detail}. Let $\dagger\delta_f^{\pi'}\in\mathbb{R}^{|\mathcal{S}|}$ denote the vector of components, where $\dagger\delta_f^{\pi'}(s) = \underset{\substack{a\sim\pi'\\s'\sim P}}{\mathbb{E}}[\delta_f (s,a,s')|s]$. Observe that
$$
\begin{aligned}
\underset{\substack{s \sim d^{\pi'} \\ a\sim {\pi'}\\ s'\sim P}}{\mathbb{E}}[\delta_{f}(s,a,s^{\prime})]& =\left\langle d^{\pi'},\dagger{\delta}_{f}^{\pi'}\right\rangle  \\
&=\left\langle d^{\pi},\dagger{\delta}_{f}^{\pi'}\right\rangle+\left\langle d^{\pi'}-d^{\pi},\dagger{\delta}_{f}^{\pi'}\right\rangle
\end{aligned}
$$
With the Hölder's inequality; for any $p,q\in [1,\infty]$ such that $\dfrac 1p+\dfrac 1q = 1$, we have 
\begin{align}
\label{eq: discounted original bounds}
    \left\langle d^{\pi},\dagger{\delta}_{f}^{\pi'}\right\rangle + \left\|d^{\pi'}-d^{\pi}\right\|_p\left\|\dagger{\delta}_f^{\pi'}\right\|_q \ge \underset{\substack{s \sim d^{\pi'} \\ a\sim {\pi'}\\ s'\sim P}}{\mathbb{E}}[\delta_{f}(s,a,s^{\prime})] \ge \left\langle d^{\pi},\dagger{\delta}_{f}^{\pi'}\right\rangle - \left\|d^{\pi'}-d^{\pi}\right\|_p\left\|\dagger{\delta}_f^{\pi'}\right\|_q
\end{align}

We choose $p=1$ and $q=\infty$; With $\left\|d^{\pi'}-d^{\pi}\right\|_1 = 2D_{TV}(d^{\pi'}||d^{\pi})$ and $\left\| \dagger{\delta}_f^{\pi'} \right\|_{\infty} = \epsilon_{f}^{\pi^{\prime}}$, and by the importance sampling identity, we have
\begin{align}
\label{eq: importance sampling theo 1}
     \left\langle d^{\pi},\dagger{\delta}_{f}^{\pi'}\right\rangle & = \underset{\substack{s \sim d^{\pi} \\ a\sim {\pi'}\\ s'\sim P}}{\mathbb{E}}[\delta_{f}(s,a,s^{\prime})]\\ \nonumber 
& =\underset{\substack{s \sim d^{\pi} \\ a\sim {\pi}\\ s'\sim P}}{\mathbb{E}}[\left( \frac{\pi'(a|s)}{\pi(a|s)}\right)\delta_{f}(s,a,s^{\prime})]
\end{align}
After bringing \eqref{eq: importance sampling theo 1}, $\left\|  d^{\pi'}- d^{\pi}\right\|_1$, $\left\| \dagger{\delta}_f^{\pi'} \right\|_{\infty}$ into \eqref{eq: discounted original bounds}, then substract $\underset{\substack{s \sim d^\pi \\ a\sim \pi\\ s'\sim P}}{\mathbb{E}}[ \delta_f(s,a,s')]$,  the bounds are obtained. The lower bound leads to \eqref{eq: J lower bound}, and the upper bound leads to \eqref{eq: J upper bound}.
\end{proof}
Then we will bound the divergence term, $||d^{\pi'}-d^{\pi}||_1$, i.e. $2D_{TV}(d^{\pi'}|| d^{\pi})$.

\begin{lem}
\label{lem: lemma 8}
The divergence between discounted future state visitation distributions, $||d^{\pi'}-d^{\pi}||_1$, is bounded by an average divergence of the policies $\pi'$ and $\pi$:
\begin{align}
    \|d^{\pi'}-d^{\pi}\|_1\leq\dfrac{2\gamma}{1-\gamma}\underset{\hat{s}\sim d^{\pi}}{\mathbb{E}}\left[D_{TV}(\pi'||\pi)[s]\right],
\end{align}
where $D_{TV}(\pi'||\pi)[s]=\dfrac 12\sum_a|\pi'(a|s)-\pi(a|s)|.$
\end{lem}
\begin{proof}
Firstly, we introduce an identity for the vector difference of the discounted future state visitation distributions on two different policies, $\pi'$ and $\pi$. Define the matrices $G \doteq (I-\gamma P_{\pi})^{-1}, \bar G \doteq (I-\gamma P_{\pi'})^{-1}$, and $\Delta = P_{\pi'}-P_{\pi}$. Then:
\begin{align}
G^{-1}-\bar{G}^{-1}& =(I-\gamma P_{\pi})-(I-\gamma P_{\pi'})  \\ \nonumber 
&=\gamma\Delta,
\end{align}
left-multiplying by $G$ and right-multiplying by $\bar G$, we obtain
\begin{align}
    \bar G - G = \gamma \bar G\Delta G.
\end{align}
Thus, the following equality holds: 
\begin{align}
\label{eq: difference between ds}
d^{\pi'}-d^{\pi}&=(1-\gamma)\left(\bar{G}-G\right)\mu  \\ \nonumber 
&=\gamma(1-\gamma)\bar{G}\Delta G \mu \\ \nonumber
&=\gamma\bar{G}\Delta d^{\pi}.
\end{align}
Using \eqref{eq: difference between ds}, we obtain
\begin{align}
\label{eq: bound for diff ds}
\|d^{\pi^{\prime}}-d^{\pi}\|_1& =\gamma\|\bar{G}\Delta d^{\pi}\|_1  \\ \nonumber
&\leq\gamma\|\bar{G}\|_1\|\Delta d^\pi\|_1,
\end{align}
where $||\bar G||_1$ is bounded by:
\begin{align}
\label{eq: bound for G 1}
    \|\bar{G}\|_1=\|(I-\gamma P_{\pi'})^{-1}\|_1\le\sum\limits_{t=0}^\infty\gamma^t\|P_{\pi'}\|_1^t=(1-\gamma)^{-1}.
\end{align}
Next, we bound $\|\Delta d^{\pi}_1\|$ as following:
\begin{align}
\label{eq: bound for delta d pi}
\|\Delta d^{\pi}\|_1& =\quad \sum\limits_{s'}\left|\sum\limits_{s}\Delta(s'|s)d^\pi(s)\right|  \\ \nonumber
& \le \quad\sum \limits_{s,s'}|\Delta(s'|s)|d^\pi(s) \\ \nonumber
& = \quad \sum_{s,s'}\left|\sum_a P(s'|s,a)\left(\pi'(a|s)-\pi(a|s)\right)\right|d^{\pi}(s) \\ \nonumber
& \le \quad \sum_{s,a,s'}P(s'|s,a)|\pi'(a|s)-\pi(a|s)|d^{\pi}(s) \\ \nonumber
& = \quad \sum_{s,a}|\pi'(a|s)-\pi(a|s)|d^\pi(s) \\ \nonumber
& = \quad 2\underset{s\sim d^\pi}{\mathbb{E}}[D_{TV}(\pi'||\pi)[s]].
\end{align}
By taking \eqref{eq: bound for delta d pi} and \eqref{eq: bound for G 1} into \eqref{eq: bound for diff ds}, this lemma is proved. 
\end{proof}
The new policy improvement bound follows immediately.
\begin{lem}
\label{lem: lemma 9}
For any function $f:\mathcal{S}\mapsto \mathbb{R}$ and any policies $\pi'$ and $\pi$, define $\delta_f(s,a,s')\doteq R(s,a,s')+\gamma f(s')-f(s)$,
$$
\begin{gathered}
\epsilon_f^{\pi^{\prime}}\doteq\operatorname*{\underset{s}{\mathbf{max}}}|\underset{\substack{a \sim \pi^{\prime}\\ s^{\prime} \sim P}}{\mathbb{E}}[\delta_{f}(s,a,s^{\prime})]|, \\
L_{\pi,f}(\pi^{\prime})\doteq\underset{\substack{s \sim d^{\pi}\\a\sim\pi\\s'\sim P}}{\mathbb{E}}\left[\left(\frac{\pi^{\prime}(a|s)}{\pi(a|s)}-1\right)\delta_{f}(s,a,s^{\prime})\right],and \\
D_{\pi,f}^{\pm}(\pi^{\prime})\doteq\frac{L_{\pi,f}(\pi^{\prime})}{1-\gamma}\pm\frac{2\gamma\epsilon_{f}^{\pi^{\prime}}}{(1-\gamma)^{2}}\underset{s\sim d^{\pi}}{\mathbb{E}}[D_{T V}(\pi^{\prime}||\pi)[s]],
\end{gathered}
$$
where $D_{TV}(\pi'||\pi)[s] = \frac 12\sum_a|\pi'(a|s)-\pi(a|s)|$ is the total variational divergence between action distributions at $s$.
The following bounds hold:
$$
D_{\pi,f}^{+}(\pi^{\prime})\geq \mathcal{J}(\pi^{\prime})-\mathcal{J}(\pi)\geq D_{\pi,f}^{-}(\pi^{\prime}).
$$
Furthermore, the bounds are tight (when $\pi'$ = $\pi$, all three expressions are identically zero)
\end{lem}
\begin{proof}
    Begin with the bounds from \cref{lem: lemma 7} and bound the divergence $D_{TV}(d^{\pi'}||d^{\pi})$ by \cref{lem: lemma 8}.
\end{proof}
The choice of $f=V_{\pi}$ in \cref{lem: lemma 9} leads to following inequality: \\

For any policies $\pi', \pi$, with $\epsilon^{\pi'} \doteq \underset{s}{\mathbf{max}}|\underset{\substack{a\sim\pi'}}{\mathbb{E}}[A_{\pi}(s,a)]|$, the following bound holds:
$$
\begin{aligned}
&\mathcal{J}(\pi') - \mathcal{J}(\pi) \geq\dfrac{1}{1-\gamma} \underset{\substack{s\sim d^\pi\\a\sim\pi'}}{\mathbb{E}}\left[A_\pi(s,a)-\dfrac{2\gamma\epsilon^{\pi'}}{1-\gamma}D_{TV}(\pi'||\pi)[s]\right]
\end{aligned} 
$$
At this point, the \cref{lem: lower bound of mean} is proved.
\end{proof}

\subsection{Proof of \Cref{theo: absolute performance improvement}}
\label{proof: theo 1}

With \Cref{lem: bound of MV}, \Cref{lem: bound of VM},  and \Cref{lem: lower bound of mean}, we have the following surrogate function of lower probability bound $\mathcal{B}_k(\pi')$:
\begin{align}
\label{eq: bk lower bound}
    \mathcal{B}_k(\pi') \geq \mathcal{J}^l_{\pi', \pi} - k\left({MV}_{\pi',\pi} + {VM}_{\pi',\pi} \right)
\end{align}
where
\begin{align}
\nonumber
    &~~~~{MV}_{\pi',\pi} = \frac{\|\mu^\top\|_\infty}{1-\gamma^2}\underset{s}{\textbf{max}}\Bigg|\underset{\substack{\\a\sim\pi'\\s'\sim P}}{\mathbb{E}}\left[A_{\pi}(s,a,s')^2\right]-\underset{\substack{\\a\sim\pi\\s'\sim P}}{\mathbb{E}}\left[A_{\pi}(s,a,s')^2\right] + |H(s,a,s')|_{max}^2 \\ \nonumber
    &~~~~~~~~~~~~~~ + 2\underset{\substack{a \sim \pi' \\ s' \sim P}}{\mathbb{E}}\left[A_{\pi}(s,a,s')\right]\cdot|H(s,a,s')|_{max}\Bigg| + MV_{\pi} + \frac{2\gamma^2\|\mu^\top\|_\infty}{(1-\gamma^2)^2}\sqrt{\frac{1}{2}\mathcal{D}_{KL}^{max}(\pi' \| \pi)}\cdot\|\Omega_{\pi}\|_\infty\\ \nonumber
    &~~~~{VM}_{\pi',\pi} = \|\mu^\top\|_\infty\underset{s}{\textbf{max}}\bigg||\eta(s)|_{max}^2+2|V_{\pi}(s)|\cdot|\eta(s)|_{max}\bigg| + \underset{s_0 \sim \mu}{\mathbb{E}} [V_{\pi}^2(s_0)] \\\nonumber 
    &~~~~~~~~~~~~~~~~~~~~~~~~- \left(\mathbf{min}\left\{\mathbf{max}\left\{0,\ \mathcal{J}^l_{\pi^{\prime}, \pi}\right\}, \mathcal{J}^u_{\pi^{\prime}, \pi}\right\}\right)^2 \\ \nonumber
    &~~~~\mathcal{J}^l_{\pi', \pi} = \mathcal{J}(\pi) + \frac{1}{1-\gamma} \underset{\substack{s \sim d^\pi \\ a\sim {\pi'}}}{\mathbb{E}} \bigg[ A_\pi(s,a) - \frac{2\gamma \epsilon^{\pi'}}{1-\gamma} \sqrt{\frac 12 \mathcal{D}_{KL}({\pi'} \| \pi)[s]} \bigg]
\end{align}

We define $\mathcal{M}_k^j(\pi) = \mathcal{J}^l_{\pi, \pi_j} - k\left({MV}_{\pi,\pi_j} + {VM}_{\pi,\pi_j} \right)$, and it can be found that $\mathcal{B}_k(\pi_j) = \mathcal{M}_k^j({\pi_j})$. Then by \Cref{eq: bk lower bound}, we have $\mathcal{B}_k(\pi_{j+1})\geq \mathcal{M}_k^j(\pi_{j+1})$ and the following holds:
\begin{align}
    \mathcal{B}_k(\pi_{j+1}) - \mathcal{B}_k(\pi_{j}) \geq \mathcal{M}_k^j(\pi_{j+1}) - \mathcal{M}_k^j(\pi_{j})
\end{align}
Thus, by maximizing $\mathcal{M}_k^j$ at each iteration, we guarantee that the true lower probability bound of performance $\mathcal{B}_k$ is non-decreasing. So far \Cref{theo: absolute performance improvement} has been proved.

\begin{rmk}
    $\mathcal{M}_k^j(\pi)$ is the objective function in our optimization problem which we can guarantee its monotonic improvement theoretically. Thus the RHS greater than or equal to zero, which can lead to the monotonic improvement of lower probability bound $\mathcal{B}_k(\pi)$.
\end{rmk}

\clearpage

\subsection{Additional Results}
\label{sec: additional results}
\begin{lem}
\label{lem: b eq m}
    $\mathcal{B}_k(\pi_j) = \mathcal{M}_k^j(\pi_j)$, where $\mathcal{B}_k(\pi_j) = \mathcal{J}(\pi_j) - k\mathcal{V}(\pi_j)$ and $\mathcal{M}_k^j(\pi) = \mathcal{J}^l_{\pi, \pi_j} - k\left({MV}_{\pi,\pi_j} + {VM}_{\pi,\pi_j} \right)$.
\end{lem}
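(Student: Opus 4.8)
The plan is to substitute $\pi = \pi_j$ into the three surrogate ingredients of $\mathcal{M}_k^j$ and check that each collapses to the corresponding exact quantity, i.e. $\mathcal{J}^l_{\pi_j,\pi_j} = \mathcal{J}(\pi_j)$, $MV_{\pi_j,\pi_j} = MV_{\pi_j}$, and $VM_{\pi_j,\pi_j} = VM_{\pi_j}$. Once those three identities are established, the claim is immediate from the variance decomposition $\mathcal{V}(\pi_j) = MV_{\pi_j} + VM_{\pi_j}$ of \eqref{eq:variance interpretation}, since then $\mathcal{M}_k^j(\pi_j) = \mathcal{J}^l_{\pi_j,\pi_j} - k\big(MV_{\pi_j,\pi_j} + VM_{\pi_j,\pi_j}\big) = \mathcal{J}(\pi_j) - k\mathcal{V}(\pi_j) = \mathcal{B}_k(\pi_j)$.

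The mechanism behind all three identities is that every ``policy-discrepancy'' term appearing in \eqref{eq: value lower bound}, \eqref{eq: mv def}, \eqref{eq: vm def}, \eqref{eq: H def}, and \eqref{eq: eta def} vanishes when the two policies coincide. Concretely I would note: (i) $\mathcal{D}_{KL}(\pi_j\|\pi_j)[s] = 0$ for all $s$, hence $\mathcal{D}_{KL}^{max}(\pi_j\|\pi_j) = 0$; (ii) $\bar A_{\pi_j,\pi_j}(s) = \mathbb{E}_{a\sim\pi_j}[A_{\pi_j}(s,a)] = 0$ for all $s$ by definition of the advantage, hence $L(s,a,s') = 0$, and by \eqref{eq: H def} and \eqref{eq: eta def} also $|H(s,a,s')|_{max} = 0$ and $|\eta(s)|_{max} = 0$. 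Plugging (i)--(ii) into \eqref{eq: value lower bound} kills the entire correction term, so $\mathcal{J}^l_{\pi_j,\pi_j} = \mathcal{J}(\pi_j)$. In \eqref{eq: mv def}, the leading $\max_s$ term becomes $\max_s | \mathbb{E}_{a\sim\pi_j}[A_{\pi_j}(s,a,s')^2] - \mathbb{E}_{a\sim\pi_j}[A_{\pi_j}(s,a,s')^2] + 0 + 0 | = 0$ and the trailing $\sqrt{\frac{1}{2}\mathcal{D}_{KL}^{max}}$ term is zero, leaving $MV_{\pi_j,\pi_j} = MV_{\pi_j}$. In \eqref{eq: vm def} the $\max_s$ term vanishes, leaving $VM_{\pi_j,\pi_j} = \mathbb{E}_{s_0\sim\mu}[V_{\pi_j}^2(s_0)] - \mathbf{min}(\mathcal{J}(\pi_j))^2$.

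The one step needing a little care is the $\mathbf{min}(\mathcal{J}(\pi))^2$ term, defined as the minimum of $(\mathcal{J}(\pi))^2$ over the interval $[\mathcal{J}^l_{\pi,\pi_j}, \mathcal{J}^u_{\pi,\pi_j}]$ (equivalently, the clipped expression of \eqref{eq: lower bound of the second term}). At $\pi = \pi_j$ one has $\mathcal{J}^l_{\pi_j,\pi_j} = \mathcal{J}^u_{\pi_j,\pi_j} = \mathcal{J}(\pi_j)$ — both the lower and upper surrogates of \Cref{lem: lemma 7} are tight at the base policy, which is also visible directly from their formulas once the KL term is dropped — so the feasible interval degenerates to the single point $\{\mathcal{J}(\pi_j)\}$ and therefore $\mathbf{min}(\mathcal{J}(\pi_j))^2 = \mathcal{J}(\pi_j)^2$ (in the clipped form, $\min\{\max\{0,\mathcal{J}(\pi_j)\},\mathcal{J}(\pi_j)\} = \mathcal{J}(\pi_j)$ regardless of the sign of $\mathcal{J}(\pi_j)$). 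Substituting gives $VM_{\pi_j,\pi_j} = \mathbb{E}_{s_0\sim\mu}[V_{\pi_j}^2(s_0)] - \mathcal{J}(\pi_j)^2 = \mathbb{V}ar_{s_0\sim\mu}[V_{\pi_j}(s_0)] = VM_{\pi_j}$, completing the three identities and hence the lemma. I expect this degeneracy-of-the-interval observation to be the only nonroutine point; everything else is a direct substitution.
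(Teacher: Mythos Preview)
Your proposal is correct and follows essentially the same route as the paper's proof: both substitute $\pi=\pi_j$, use $\mathcal{D}_{KL}(\pi_j\|\pi_j)=0$ and $\bar A_{\pi_j,\pi_j}(s)=0$ to kill $|H|_{\max}$, $|\eta|_{\max}$, and all correction terms, and then collapse the degenerate interval $[\mathcal{J}^l_{\pi_j,\pi_j},\mathcal{J}^u_{\pi_j,\pi_j}]=\{\mathcal{J}(\pi_j)\}$ to recover $\mathcal{J}(\pi_j)^2$. The only cosmetic difference is that you verify $MV_{\pi_j,\pi_j}=MV_{\pi_j}$ and $VM_{\pi_j,\pi_j}=VM_{\pi_j}$ separately, whereas the paper sums them first and then identifies the total with $\mathcal{V}(\pi_j)$.
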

\begin{proof}
    To prove \Cref{lem: b eq m}, we will show that (i) $\mathcal{J}^l_{\pi_j, \pi_j} = \mathcal{J}(\pi_j)$, and (ii) $\left({MV}_{\pi,\pi_j} + {VM}_{\pi,\pi_j} \right) = \mathcal{V}(\pi_j)$.

    \paragraph{Expectation Part}
    For the same policy $\pi_j$, the following two conditions hold:
    \begin{align}
        \underset{\substack{s \sim d^{\pi_j}\\a\sim {\pi_j}}}{\mathbb{E}} \big[ A_{\pi_j}(s,a)\big] = 0 \label{eq: expect 0} \\ 
        \mathcal{D}_{KL}({\pi_j} \| \pi_j)[s] = 0 ~~. \label{eq: kl 0} 
    \end{align}
    With \eqref{eq: kl 0} and \eqref{eq: expect 0}, the following equality holds:
    \begin{align}
    \label{eq: lower eq}
        \mathcal{J}^l_{\pi_j, \pi_j} &= \mathcal{J}(\pi_j) + \frac{1}{1-\gamma} \underset{\substack{s \sim d^{\pi_j}\\a\sim {\pi_j}}}{\mathbb{E}} \bigg[ A_{\pi_j}(s,a) - \frac{2\gamma \epsilon^{\pi}}{1-\gamma} \sqrt{\frac 12 \mathcal{D}_{KL}({\pi_j} \| \pi_j)[s]} \bigg] \\ \nonumber 
        &= \mathcal{J}(\pi_j) ~~.
    \end{align}
    
    Similarly, 
    \begin{align}
    \label{eq: upper eq}
        \mathcal{J}^u_{\pi_j, \pi_j} &= \mathcal{J}(\pi_j) + \frac{1}{1-\gamma} \underset{\substack{s \sim d^{\pi_j} \\ a\sim {\pi_j}}}{\mathbb{E}} \bigg[ A_{\pi_j}(s,a) + \frac{2\gamma \epsilon^{\pi_j}}{1-\gamma} \sqrt{\frac 12 \mathcal{D}_{KL}({\pi_j} \| \pi_j)[s]} \bigg] \\ \nonumber 
       &=  \mathcal{J}(\pi_j)~~. 
    \end{align}

    \paragraph{Variance Part}
    For the same policy $\pi_j$, the following conditions hold: 
    \begin{align}
        \underset{\substack{\\a\sim\pi_j\\s'\sim P}}{\mathbb{E}}\left[A_{\pi_j}(s,a,s')^2\right]-\underset{\substack{\\a\sim\pi_j\\s'\sim P}}{\mathbb{E}}\left[A_{\pi_j}(s,a,s')^2\right] = 0  \label{eq: minus 0}\\ 
        \underset{\substack{a \sim \pi_j \\ s' \sim P}}{\mathbb{E}}\left[A_{\pi_j}(s,a,s')\right] = 0 \label{eq: a a o} \\
        \forall s, \bar A_{\pi_j,\pi_j}(s) = 0 ~~~.\label{eq: bar 0} 
    \end{align}

\eqref{eq: kl 0}, \eqref{eq: a a o}  and \eqref{eq: bar 0} indicate that:
\begin{align}
\label{eq: H 0}
    |H(s,a,s')|_{max} &= \left|\gamma\underset{\substack{s_0 = s' \\ \hat\tau \sim \pi_j}}{\mathbb{E}}\bigg[\sum_{t=0}^\infty \gamma^t \bar A_{\pi_j,\pi_j}(s_t)\bigg] - \underset{\substack{s_0 = s \\ \hat\tau \sim \pi_j}}{\mathbb{E}}\bigg[\sum_{t=0}^\infty \gamma^t\bar A_{\pi_j,\pi_j}(s_t)\bigg] \right| + \dfrac{2\gamma(1+\gamma)\epsilon}{(1-\gamma)^2}\mathcal{D}_{KL}^{max}(\pi_j||\pi_j) \\ \nonumber 
    &= 0  \\
    \label{eq: eta 0}
    |\eta(s)|_{max} &= \left| \underset{\substack{s_0 = s \\\hat\tau \sim \pi_j}}{\mathbb{E}}\bigg[\sum_{t=0}^\infty \gamma^t\bar A_{\pi_j,\pi_j}(s_t)\bigg] \right| + \frac{2\gamma \epsilon}{(1-\gamma)^2}\mathcal{D}_{KL}^{max}({\pi_j} \| \pi_j) \\ \nonumber 
    &= 0 ~~~.
\end{align}

With \eqref{eq: lower eq}, \eqref{eq: upper eq},  \eqref{eq: minus 0}, \eqref{eq: H 0} and \eqref{eq: eta 0}, we have the following condition hold:
\begin{align}
\label{eq: variance eq}
    &{MV}_{\pi_j,\pi_j} + {VM}_{\pi_j,\pi_j} \\ \nonumber 
    &= \frac{\|\mu^\top\|_\infty}{1-\gamma^2}\underset{s}{\textbf{max}}\Bigg|\underset{\substack{\\a\sim\pi_j\\s'\sim P}}{\mathbb{E}}\left[A_{\pi_j}(s,a,s')^2\right]-\underset{\substack{\\a\sim\pi_j\\s'\sim P}}{\mathbb{E}}\left[A_{\pi_j}(s,a,s')^2\right] + |H(s,a,s')|_{max}^2 \\ \nonumber
    & ~~~+ 2\underset{\substack{a \sim \pi_j \\ s' \sim P}}{\mathbb{E}}\left[A_{\pi_j}(s,a,s')\right]\cdot|H(s,a,s')|_{max}\Bigg| + MV_{\pi_j} + \frac{2\gamma^2\|\mu^\top\|_\infty}{(1-\gamma^2)^2}\sqrt{\frac{1}{2}\mathcal{D}_{KL}^{max}(\pi_j \| \pi_j)}\cdot\|\Omega_{\pi_j}\|_\infty\\ \nonumber 
    & ~~~+ \|\mu^\top\|_\infty\underset{s}{\textbf{max}}\bigg||\eta(s)|_{max}^2+2|V_{\pi_j}(s)|\cdot|\eta(s)|_{max}\bigg| 
    - \textbf{min} ~\left( \mathcal{J}(\pi) \right)^2 
    + \underset{s_0 \sim \mu}{\mathbb{E}} [V_{\pi_j}^2(s_0)]  \\ \nonumber 
    &= MV_{\pi_j}  - \textbf{min} ~\left( \mathcal{J}(\pi) \right)^2 + \underset{s_0 \sim \mu}{\mathbb{E}} [V_{\pi_j}^2(s_0)] \\ \nonumber 
    &= \underset{\substack{s_0 \sim \mu}}{\mathbb{E}}[\underset{\substack{{\hat \tau}\sim {\pi_j}}}{\mathbb{V}ar}[R_{\pi_j}(s_0)] - \minimizewrt{\mathcal{J}(\pi) \in [\mathcal{J}^l_{\pi_j, \pi_j}, \mathcal{J}^u_{\pi_j, \pi_j}]} \left( \mathcal{J}(\pi) \right)^2 + \underset{s_0 \sim \mu}{\mathbb{E}} [V_{\pi_j}^2(s_0)] \\ \nonumber 
    &= \underset{\substack{s_0 \sim \mu}}{\mathbb{E}}[\underset{\substack{{\hat \tau}\sim {\pi_j}}}{\mathbb{V}ar}[R_{\pi_j}(s_0)] - \minimizewrt{\mathcal{J}(\pi) \in [\mathcal{J}(\pi_j), \mathcal{J}(\pi_j)]} \left( \mathcal{J}(\pi) \right)^2 + \underset{s_0 \sim \mu}{\mathbb{E}} [V_{\pi_j}^2(s_0)] \\ \nonumber 
    &= \underset{\substack{s_0 \sim \mu}}{\mathbb{E}}[\underset{\substack{{\hat \tau}\sim {\pi_j}}}{\mathbb{V}ar}[R_{\pi_j}(s_0)]  + \underset{s_0 \sim \mu}{\mathbb{E}} [V_{\pi_j}^2(s_0)] -  \mathcal{J}(\pi_j)^2 \\ \nonumber 
    &= \mathcal{V}(\pi_j) 
\end{align}

\paragraph{Summarize}
With \eqref{eq: lower eq} and \eqref{eq: variance eq}, we have the following condition hold: 
\begin{align}
    \mathcal{B}_k(\pi_j) = \mathcal{J}(\pi_j) - k\mathcal{V}(\pi_j) = \mathcal{J}^l_{\pi_j, \pi_j} - k\left({MV}_{\pi_j,\pi_j} + {VM}_{\pi_j,\pi_j} \right) = \mathcal{M}_k^j(\pi_j)~~,
\end{align}
 which proves the Lemma.
\end{proof}

\newpage

\section{APO Pseudocode}
\label{append: apo}

\begin{algorithm}
\caption{Absolute Policy Optimization}\label{alg:apo_main}
\begin{algorithmic}
\STATE \textbf{Input:} Initial policy $\pi_0\in\Pi_\theta$.
\FOR{$j=0,1,2,\dots$}
\STATE Sample trajectory $\tau\sim\pi_j=\pi_{\theta_j}$
\STATE Estimate gradient $g \gets \left.\nabla_{\theta} 
 O_{\pi, \pi_j}\right\rvert_{\theta=\theta_j}$
    
\COMMENT{Define $O_{\pi, \pi_j}=\bigg(\frac{1}{1-\gamma} \underset{\substack{s \sim d^{\pi_j} \\ a\sim {\pi}}}{\mathbb{E}} \left[ A_{\pi_j}(s,a) \right]- k\left(\overline{MV}_{\pi, \pi_j}+\overline{VM}_{\pi, \pi_j}\right)\bigg)$}
\STATE Estimate Hessian $H \gets \left.\nabla^2_{\theta} \mathbb{E}_{s \sim \pi_j}[\mathcal{D}_{KL}(\pi \| \pi_j)[s]]\right\rvert_{\theta=\theta_j}$
\STATE Solve convex programming 
\COMMENT{\cite{achiam2017cpo}}
\begin{align*}
    \theta^*_{j+1} &= \argmax_\theta g^\top(\theta-\theta_j) \\
    &~~\text{s.t.} ~\frac{1}{2}(\theta-\theta_j)^\top H (\theta-\theta_j) \leq \delta 
\end{align*}
\STATE Get search direction $\Delta\theta^* \gets \theta^*_{j+1} - \theta_j$
\FOR[Line search]{$k=0,1,2,\dots$}
\STATE $\theta' \gets \theta_{j} + \xi^k\Delta\theta^*$ \COMMENT{$\xi\in(0,1)$ is the backtracking coefficient}
\IF[Objective]{
$\mathbb{E}_{s \sim \pi_j}[\mathcal{D}_{KL}(\pi_{\theta'} \| \pi_j)[s]]\leq \delta$ 
\textbf{and} \{Trust Region\}\\
$~~~~~~~~~~~~~~ O_{\pi_{\theta'}, \pi_j}\geq O_{\pi_{j}, \pi_j}$}
\STATE $\theta_{j+1} \gets \theta'$ \COMMENT{Update policy}
\STATE \textbf{break}
\ENDIF
\ENDFOR
\ENDFOR

\end{algorithmic}
\end{algorithm}

\clearpage
\section{Expeiment Details}

\begin{figure*}[h]
  \centering
  \begin{subfigure}[t]{1.0\linewidth}
  \centering
  \begin{subfigure}[t]{0.12\linewidth}
      \begin{subfigure}[t]{\linewidth}
        \centering
        \includegraphics[scale=0.115]{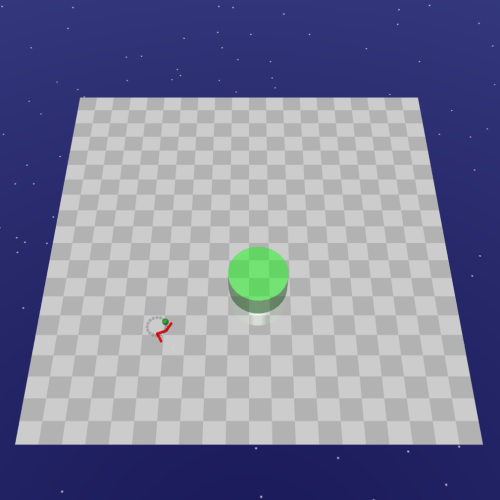}
      \end{subfigure}
      \caption{Goal}
      \label{fig: Goal3D1}
  \end{subfigure}
  \begin{subfigure}[t]{0.12\linewidth}
      \begin{subfigure}[t]{1\linewidth}
        \centering
        \includegraphics[scale=0.115]{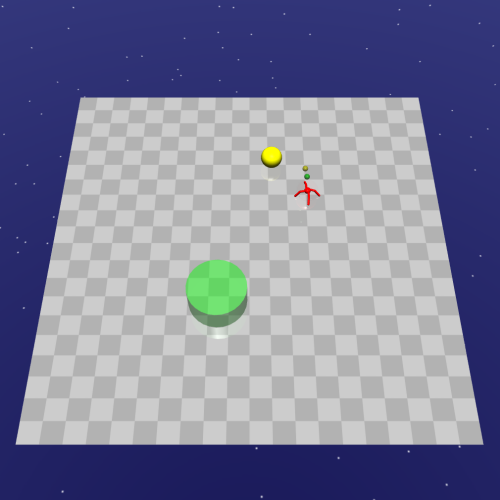}
      \end{subfigure}
      \caption{Push}
      \label{fig: PushBall1}
  \end{subfigure}
  \begin{subfigure}[t]{0.12\linewidth}
      \begin{subfigure}[t]{\linewidth}
        \centering
        \includegraphics[scale=0.115]{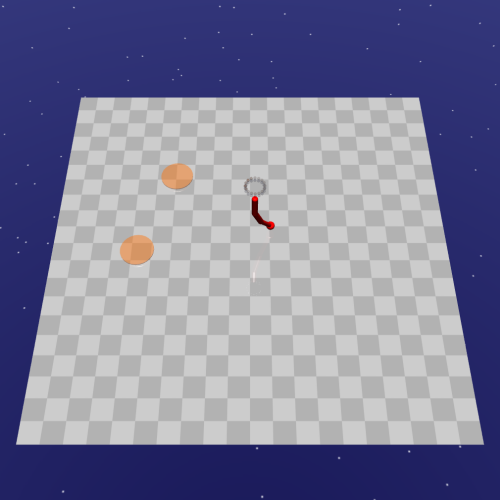}
      \end{subfigure}
      \caption{Chase}
      \label{fig: Chase}
  \end{subfigure}
  \begin{subfigure}[t]{0.12\linewidth}
      \begin{subfigure}[t]{\linewidth}
        \centering
        \includegraphics[scale=0.382]{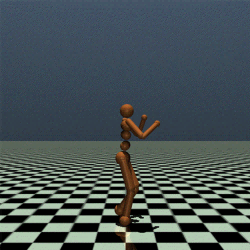}
      \end{subfigure}
      \caption{Humanoid}
      \label{fig: Humanoid}
  \end{subfigure}
  \begin{subfigure}[t]{0.12\linewidth}
      \begin{subfigure}[t]{\linewidth}
        \centering
        \includegraphics[scale=0.382]{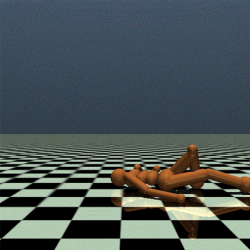}
      \end{subfigure}
      \caption{H.Standup}
      \label{fig: HumanoidStandup}
  \end{subfigure}
  \begin{subfigure}[t]{0.12\linewidth}
      \begin{subfigure}[t]{\linewidth}
        \centering
        \includegraphics[scale=0.39]{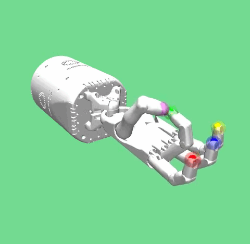}
      \end{subfigure}
      \caption{HandReach}
      \label{fig: HandReach}
  \end{subfigure}
  \begin{subfigure}[t]{0.12\linewidth}
      \begin{subfigure}[t]{\linewidth}
        \centering
        \includegraphics[scale=0.385]{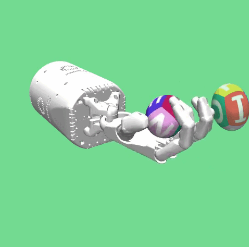}
      \end{subfigure}
      \caption{H.M.Egg}
      \label{fig: HandManipulateEgg}
  \end{subfigure}
  \begin{subfigure}[t]{0.12\linewidth}
      \begin{subfigure}[t]{\linewidth}
        \centering
        \includegraphics[scale=0.385]{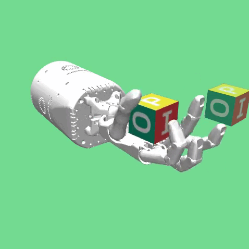}
      \end{subfigure}
      \caption{H.M.Block}
      \label{fig: HandManipulateBlock}
  \end{subfigure}
  \end{subfigure}
  \caption{Tasks of continuous experiments}
\end{figure*}

\begin{figure*}[h]
  \centering
  \captionsetup[subfigure]{}
  \begin{subfigure}[t]{0.119\linewidth}
        \centering
      \includegraphics[scale=0.085]{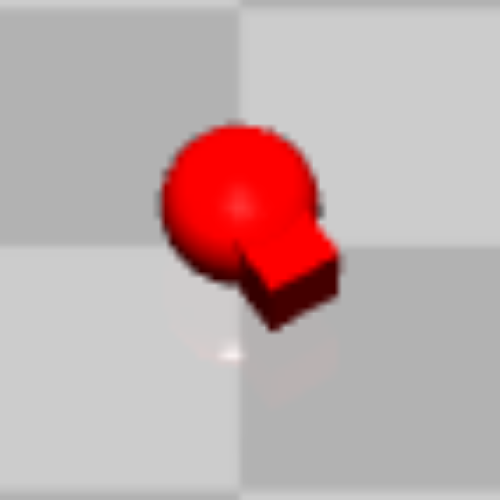}
      \caption{\footnotesize{Point}}
      \label{fig: Point}
  \end{subfigure}
  \begin{subfigure}[t]{0.119\linewidth}
        \centering
      \includegraphics[scale=0.085]{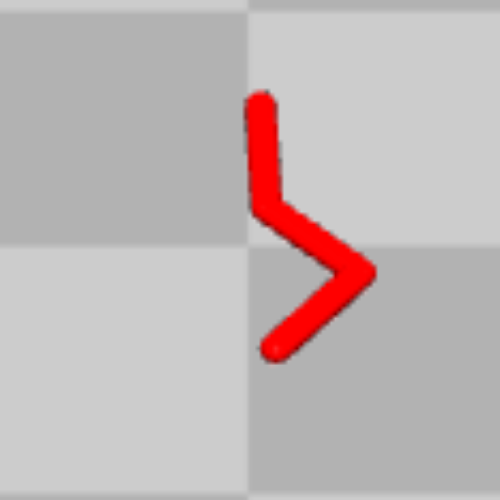}
      \caption{\footnotesize{Swimmer}}
      \label{fig: Swimmer}
  \end{subfigure}
  \begin{subfigure}[t]{0.119\linewidth}
        \centering
      \includegraphics[scale=0.085]{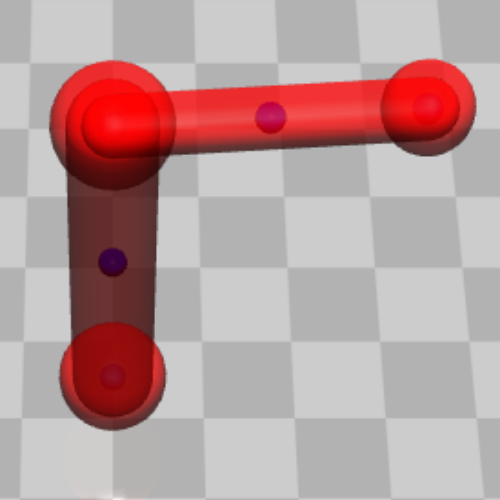}
      \caption{Arm3}
      \label{fig: Arm3}
  \end{subfigure}
  \begin{subfigure}[t]{0.119\linewidth}
        \centering
      \includegraphics[scale=0.085]{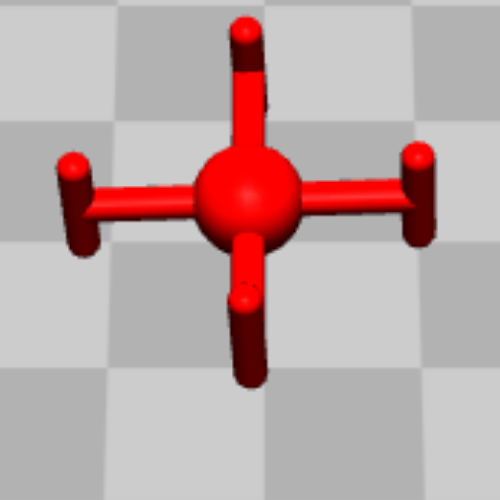}
      \caption{Drone}
      \label{fig: Drone}
  \end{subfigure}
  \begin{subfigure}[t]{0.119\linewidth}
        \centering
      \includegraphics[scale=0.085]{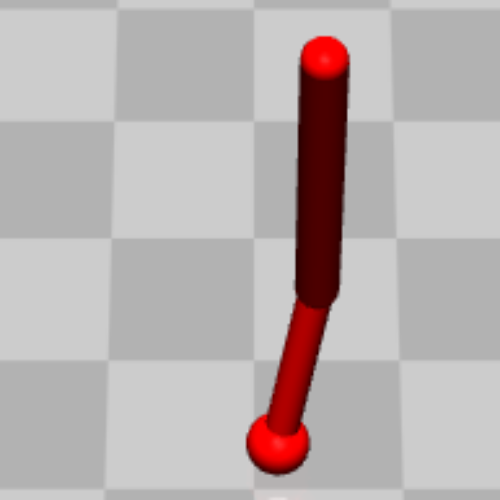}
      \caption{Hopper}
      \label{fig: Hopper}
  \end{subfigure}
  \begin{subfigure}[t]{0.119\linewidth}
        \centering
      \includegraphics[scale=0.085]{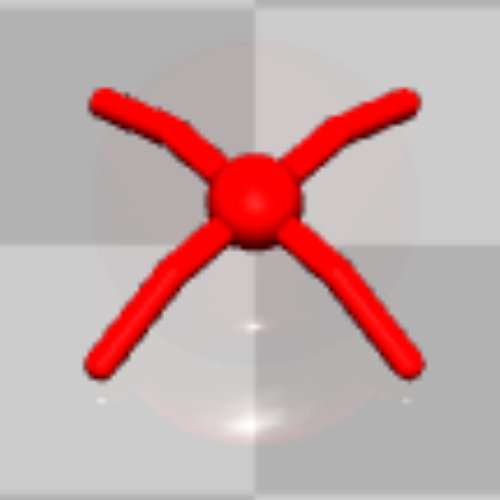}
      \caption{Ant}
      \label{fig: Ant}
  \end{subfigure}
  \begin{subfigure}[t]{0.119\linewidth}
        \centering
      \includegraphics[scale=0.085]{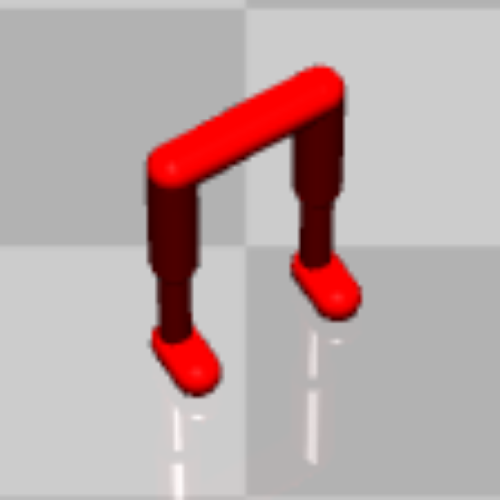}
      \caption{Walker}
      \label{fig: Walker}
  \end{subfigure}
\caption{Robots of continuous tasks benchmark GUARD.}
\label{fig: robots}
\vspace{-15pt}
\end{figure*}

\label{sec: experiment details}
\subsection{GUARD Environment Settings}
\label{appendix:Environment Settings}
\paragraph{Goal Task}
In the Goal task environments, the reward function is:
\begin{equation}\notag
\begin{split}
    & r(x_t) = d^{g}_{t-1} - d^{g}_{t} + \mathbf{1}[d^g_t < R^g]~,\\
\end{split}
\end{equation}
where $d^g_t$ is the distance from the robot to its closest goal and $R^g$ is the size (radius) of the goal. When a goal is achieved, the goal location is randomly reset to someplace new while keeping the rest of the layout the same. 

\paragraph{Push Task}
In the Push task environments, the reward function is
\begin{equation}\notag
\begin{split}
    & r(x_t) = d^{r}_{t-1} - d^{r}_{t} + d^{b}_{t-1} - d^{b}_t + \mathbf{1}[d^g_t < R^g]~,\\
\end{split}
\end{equation}
where $d^r$ and $d^b$ are the distance from the robot to its closest goal and the distance from the box to its closest goal, and $R^g$ is the size (radius) of the goal. The box size is 0.2 for all the Push task environments. Like the goal task, a new goal location is drawn each time a goal is achieved.

\paragraph{Chase Task}
In the Chase task environments, the reward function is
\begin{equation}\notag
\begin{split}
    & r(x_t) = d^{r}_{t-1} - d^{r}_{t} + \mathbf{1}[d^g_t < R^g]~,\\
\end{split}
\end{equation}
where $d^r$ is the distance from the robot to its closest goal and $R^g$ is the size (radius) of the goal. Those targets continuously move away from the robot at a slow speed. The dense reward component provides a bonus for minimizing the distance between the robot and the targets. The targets are constrained to a circular area.

The test suites of APO and PAPO continuous experiments are summarized in \Cref{tab: testing suites} and \Cref{tab: papo testing suites}, respectively.

\begin{table}[h]
\vskip 0.15in
\caption{The test suites environments of APO continuous experiments}
\begin{center}
\begin{tabular}{cc|ccc|ccc|cc}
\toprule
 \multicolumn{2}{c|}{\textbf{Task Settings}} & \multicolumn{3}{c|}{Moving Area} &\multicolumn{3}{c|}{Task} &\multicolumn{2}{c}{Dimension}\\
\cline{3-10}\\[-1.02em]
& & Ground & Aerial & \multicolumn{1}{c|}{Fixed}  & Goal & Push & Chase & Low & High\\
\cline{1-10}\\[-0.99em]
\multicolumn{1}{c|}{} & Arm3 ($\mathbb{R}^{3}$)   &  &  & \checkmark  & \checkmark&&&\checkmark & \cellcolor[HTML]{C0C0C0} \\
\multicolumn{1}{c|}{} & Drone ($\mathbb{R}^{4}$)   &  & \checkmark & & \checkmark&&&\checkmark & \cellcolor[HTML]{C0C0C0}\\
\multicolumn{1}{c|}{GUARD} & Point ($\mathbb{R}^{2}$)   &  \checkmark &  & & \checkmark&\checkmark&&\checkmark & \cellcolor[HTML]{C0C0C0} \\
\multicolumn{1}{c|}{} & Swimmer ($\mathbb{R}^{2}$)  & \checkmark &  & & \checkmark&\checkmark&&\checkmark & \cellcolor[HTML]{C0C0C0}\\
\multicolumn{1}{c|}{Robot} & Hopper ($\mathbb{R}^{5}$)    & \checkmark &  & & \checkmark&\checkmark&&\checkmark & \cellcolor[HTML]{C0C0C0}\\
\multicolumn{1}{c|}{} & Ant ($\mathbb{R}^{8}$)      & \checkmark &  & & &\checkmark&\checkmark & \cellcolor[HTML]{C0C0C0}&\checkmark\\
\multicolumn{1}{c|}{} & Walker ($\mathbb{R}^{10}$)   & \checkmark &  & & &\checkmark&\checkmark & \cellcolor[HTML]{C0C0C0}&\checkmark\\
\cline{1-10}\\[-0.99em]
\multicolumn{1}{c|}{Mujoco} & Humanoid ($\mathbb{R}^{17}$)  & \checkmark &  & &\cellcolor[HTML]{C0C0C0} &\cellcolor[HTML]{C0C0C0}&\cellcolor[HTML]{C0C0C0} & \cellcolor[HTML]{C0C0C0}&\checkmark\\
\multicolumn{1}{c|}{Robot} & HumanoidStandup ($\mathbb{R}^{17}$)  & \checkmark &  & & \cellcolor[HTML]{C0C0C0}&\cellcolor[HTML]{C0C0C0}&\cellcolor[HTML]{C0C0C0} & \cellcolor[HTML]{C0C0C0}&\checkmark\\
\bottomrule
\end{tabular}
\label{tab: testing suites}
\end{center}
\end{table}

\begin{table}[h]
\vskip 0.15in
\caption{The test suites environments of PAPO continuous experiments}
\begin{center}
\begin{tabular}{cc|ccc|cc|cc}
\toprule
 \multicolumn{2}{c|}{\textbf{Task Settings}} & \multicolumn{3}{c|}{Moving Area} &\multicolumn{2}{c|}{Task} &\multicolumn{2}{c}{Dimension}\\
\cline{3-9}\\[-1.02em]
& & Ground & Aerial & \multicolumn{1}{c|}{Fixed}  & Goal  & Chase & Low & High\\
\cline{1-9}\\[-0.99em]
\multicolumn{1}{c|}{} & Arm3 ($\mathbb{R}^{3}$)   &  &  & \checkmark  & \checkmark&&\checkmark & \cellcolor[HTML]{C0C0C0} \\
\multicolumn{1}{c|}{} & Drone ($\mathbb{R}^{4}$)   &  & \checkmark & & \checkmark&&\checkmark & \cellcolor[HTML]{C0C0C0}\\
\multicolumn{1}{c|}{GUARD} & Point ($\mathbb{R}^{2}$)   &  \checkmark &  & && \checkmark&\checkmark & \cellcolor[HTML]{C0C0C0} \\
\multicolumn{1}{c|}{} & Swimmer ($\mathbb{R}^{2}$)  & \checkmark &  & & &\checkmark&\checkmark & \cellcolor[HTML]{C0C0C0}\\
\multicolumn{1}{c|}{Robot} & Hopper ($\mathbb{R}^{5}$)    & \checkmark &  & & &\checkmark&\checkmark & \cellcolor[HTML]{C0C0C0}\\
\multicolumn{1}{c|}{} & Ant ($\mathbb{R}^{8}$)      & \checkmark &  & &\checkmark &\checkmark & \cellcolor[HTML]{C0C0C0}&\checkmark\\
\multicolumn{1}{c|}{} & Walker ($\mathbb{R}^{10}$)   & \checkmark &  & &\checkmark &\checkmark & \cellcolor[HTML]{C0C0C0}&\checkmark\\
\cline{1-9}\\[-0.99em]
\multicolumn{1}{c|}{Mujoco} & Humanoid ($\mathbb{R}^{17}$)  & \checkmark &  &  &\cellcolor[HTML]{C0C0C0}&\cellcolor[HTML]{C0C0C0} & \cellcolor[HTML]{C0C0C0}&\checkmark\\
\multicolumn{1}{c|}{Robot} & HumanoidStandup ($\mathbb{R}^{17}$)  & \checkmark &  & &\cellcolor[HTML]{C0C0C0}&\cellcolor[HTML]{C0C0C0} & \cellcolor[HTML]{C0C0C0}&\checkmark\\
\cline{1-9}\\[-0.99em]
\multicolumn{1}{c|}{Gymnasium} & HandReach ($\mathbb{R}^{24}$)  &  &  &\checkmark &\cellcolor[HTML]{C0C0C0}&\cellcolor[HTML]{C0C0C0} & \cellcolor[HTML]{C0C0C0}&\checkmark\\
\multicolumn{1}{c|}{} & HandManipulateEgg ($\mathbb{R}^{24}$)  &  &  &\checkmark &\cellcolor[HTML]{C0C0C0}&\cellcolor[HTML]{C0C0C0} & \cellcolor[HTML]{C0C0C0}&\checkmark\\
\multicolumn{1}{c|}{Robot} & HandManipulateBlock ($\mathbb{R}^{24}$)  &  &  &\checkmark &\cellcolor[HTML]{C0C0C0}&\cellcolor[HTML]{C0C0C0} & \cellcolor[HTML]{C0C0C0}&\checkmark\\
\bottomrule
\end{tabular}
\label{tab: papo testing suites}
\end{center}
\end{table}

\paragraph{State Space}
The internal state spaces describe the state of the robots, which can be obtained from standard robot sensors (accelerometer, gyroscope, magnetometer, velocimeter, joint position sensor, joint velocity sensor and touch sensor). The details of the internal state spaces of the robots in our test suites are summarized in \Cref{tab:internal_state_space}.

\begin{table}
\vskip 0.15in
\caption{The internal state space components of different test suites environments.}
\begin{center}
\begin{tabular}{c|ccccccc}
\toprule
\textbf{Internal State Space} & Point  & Swimmer & Walker & Ant & Drone & Hopper & Arm3 \\
\hline
Accelerometer ($\mathbb{R}^3$) & \checkmark & \checkmark & \checkmark & \checkmark & \checkmark &\checkmark &\checkmark \\
Gyroscope ($\mathbb{R}^3$) & \checkmark & \checkmark & \checkmark & \checkmark & \checkmark &\checkmark &\checkmark \\
Magnetometer ($\mathbb{R}^3$) & \checkmark & \checkmark & \checkmark & \checkmark & \checkmark &\checkmark &\checkmark \\
Velocimeter ($\mathbb{R}^{3}$) & \checkmark & \checkmark & \checkmark & \checkmark & \checkmark &\checkmark &\checkmark \\
Joint position sensor ($\mathbb{R}^{n}$) & ${n=0}$ & ${n=2}$ & ${n=10}$ & ${n=8}$ & ${n=0}$ &${n=6}$ &${n=3}$ \\
Joint velocity sensor ($\mathbb{R}^{n}$)  & ${n=0}$ & ${n=2}$ & ${n=10}$ & ${n=8}$ & ${n=0}$ &${n=6}$ &${n=3}$ \\
Touch sensor ($\mathbb{R}^{n}$) & ${n=0}$ & ${n=4}$ & ${n=2}$ & ${n=8}$ & ${n=0}$ &${n=1}$ &${n=1}$ \\
\bottomrule
\end{tabular}
\label{tab:internal_state_space}
\end{center}
\end{table}

\paragraph{Control Space}
For all the experiments, the control space of all robots are continuous, and linearly scaled to [-1, +1].

\subsection{Policy Settings}
\label{appendix:Policy Settings}
The hyper-parameters used in our experiments are listed in \Cref{tab:policy_setting} as default.

Our experiments use separate multi-layer perceptrons with ${tanh}$ activations for the policy network and value network. Each network consists of two hidden layers of size (64,64). Policy networks and value networks are trained using $Adam$ optimizer. Policy networks are trained with a learning rate of 1e-3 while value networks are trained with 3e-4. 

We apply an on-policy framework in our experiments. During each epoch the agent interacts $B$ times with the environment and then performs a policy update based on the experience collected from the current epoch. The maximum length of the trajectory is set to 1000. The steps in each epoch are set to 30000. The total epoch number $N$ is set to 200 in continuous tasks and 500 in atari tasks as default. 

The policy update step is based on the scheme of TRPO, which performs up to 100 steps of backtracking with a coefficient of 0.8 for line searching.

For all experiments, we use a discount factor of $\gamma = 0.99$, an advantage discount factor $\lambda =0.97$, and a KL-divergence step size of $\delta_{KL} = 0.02$.

Other unique hyper-parameters for each algorithm follow the original paper to attain best performance. 

Each model is trained on a server with a 48-core Intel(R) Xeon(R) Silver 4214 CPU @ 2.2.GHz, Nvidia RTX A4000 GPU with 16GB memory, and Ubuntu 20.04.

\begin{table}
\vskip 0.15in
\caption{Important hyper-parameters of different algorithms in our experiments}
\begin{center}
\resizebox{\textwidth}{!}{%
\begin{tabular}{lr|cccccccc}
\toprule
\textbf{Policy Parameter} & & A2C & TRPO & PPO & APO & PAPO& ESPO& $\alpha$-PPO& V-MPO\\
\hline\\[-1.0em]
Epochs in continuous tasks & $N_1$ & 200  & 200 & 200 & 200 & 200& 200 & 200 & 200 \\
Epochs in discrete tasks & $N_2$ & 500  & 500 & 500 & 500 & - & - & - & - \\
Steps per epoch & $B$& 30000 & 30000 & 30000 & 30000 & 30000 & 30000 & 30000 & 30000\\
Maximum length of trajectory & $L$ & 1000 & 1000 & 1000 & 1000 & 1000 & 1000 & 1000 & 1000\\

Discount factor  &  $\gamma$ & 0.99 & 0.99 & 0.99 & 0.99 & 0.99& 0.99 & 0.99 & 0.99\\
Advantage discount factor  & $\lambda$ & 0.97 & 0.97 & 0.97 & 0.97 &0.97& 0.97 & 0.97 &0.97\\
backtracking steps & & - &100 &- &100 & -& -& -& -\\
backtracking coefficient & & - &0.8 &- &0.8 & -& -& -& -\\
Target KL & $\delta_{KL}$& - & 0.02 & 0.02 & 0.02 & 0.02 & - & 0.02 & 0.02\\
Stopping Threshold & $\delta$& - & - & - & - & - & 0.25 & - & -\\
Clip Ratio & $\epsilon$ & - & - & 0.2 & - & 0.2& -& 0.2& -\\
Probability factor & $k$ & - & - & - & 7 & 7 & - & - & -\\
Policy network hidden layers & & (64, 64) & (64, 64) & (64, 64) & (64, 64) & (64, 64) & (64, 64) & (64, 64) & (64, 64) \\
Policy network iteration & & 80 & - & 80 & - & 80 & 80 & 80 & 80 \\
Policy network optimizer & & Adam & - & Adam & - & Adam & Adam & Adam & Adam \\
Policy learning rate & & 3e-4 & - & 3e-4 & - & 3e-4 & 3e-4 & 3e-4 & 3e-4 \\
Value network hidden layers & & (64, 64) & (64, 64) & (64, 64) & (64, 64) & (64, 64) & (64, 64) & (64, 64) & (64, 64)\\
Value network iteration & & 80 & 80 & 80 & 80 & 80 & 80 & 80 & 80\\
Value network optimizer & & Adam & Adam & Adam & Adam & Adam & Adam & Adam & Adam\\
Value learning rate & & 1e-3 & 1e-3 & 1e-3 & 1e-3 & 1e-3 & 1e-3 & 1e-3 & 1e-3\\
\bottomrule
\end{tabular}
}
\label{tab:policy_setting}
\end{center}
\end{table}

\subsection{Normalized Score Settings}
\label{appendix:Score Settings}
We intend to use 100\% as the TRPO baseline measure. Thus the score we used is slightly changed from the normalization algorithm proposed by \cite{vanhasselt2015deep} with the form as follow:
\begin{align}
\nonumber
    {\Delta}_1 &\doteq score_{agent}-score_{random} \\ \nonumber
    {\Delta}_2 &\doteq score_{TRPO}-score_{random} \\ \nonumber
    score_{normalized} &= \frac{\Delta_2}{\Delta_1} ~ if ~ \Delta_1 < 0 ~ and ~ \Delta_2 <0 ~ else ~ \frac{\Delta_1}{\Delta_2}
\end{align}
The difference is that we take the inverse of the metrics used in \cite{vanhasselt2015deep} when $\Delta_1 < 0 ~ and ~ \Delta_2 <0$. For further explanation, we need to address the positive and negative cases of $\Delta_1$ and $\Delta_2$:
\begin{itemize}
    \item $\Delta_1 > 0 ~ and ~ \Delta_2 > 0~~$ In this case, $\frac{\Delta_1}{\Delta_2}$ can effectively demonstrate the ability of algorithms.
    \item $\Delta_1 < 0 ~ and ~ \Delta_2 < 0~~$ Practically in this case, assuming $score_{agent}>score_{TRPO}$, $score_{normalized}$ should be greater than 1 to demonstrate that agent is more capable than baseline TRPO. However, we will get a decimal if we obey the original score algorithms in \cite{vanhasselt2015deep} which is incorrect. Thus we take the inverse of it in this situation.
    \item $\Delta_1 < 0 ~ and ~ \Delta_2 > 0~~$ In this case we will get a negative number which is reasonable to show the negative effects in terms of reward enhancement.
    \item $\Delta_1 > 0 ~ and ~ \Delta_2 < 0~~$ The changed normalization algorithm is still incorrect in this situation. However, we have not encountered such cases in all of our atari game statistics.
\end{itemize}

\section{Total Experiment results}
\label{sec:total experiments}

\begin{figure*}[h]
    \centering

    \begin{subfigure}[t]{0.24\textwidth}
        \begin{subfigure}[t]{1.00\textwidth}
            \raisebox{-\height}{\includegraphics[width=\textwidth]{fig/continuous/Goal_Point_Reward_Performance.png}}
        \end{subfigure}
    \hfill
        \begin{subfigure}[t]{1.00\textwidth}
            \raisebox{-\height}{\includegraphics[width=\textwidth]{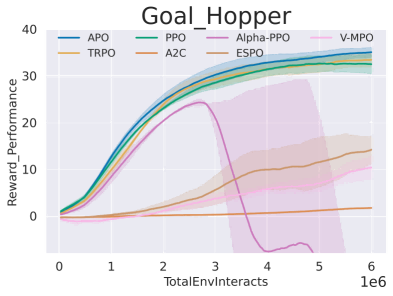}}
        \end{subfigure}
    \hfill
        \begin{subfigure}[t]{1.00\textwidth}
            \raisebox{-\height}{\includegraphics[width=\textwidth]{fig/continuous/Push_Ant_Reward_Performance.png}}
        \end{subfigure}
    \end{subfigure}
    \hfill
    \begin{subfigure}[t]{0.24\textwidth}
        \begin{subfigure}[t]{1.00\textwidth}
            \raisebox{-\height}{\includegraphics[width=\textwidth]{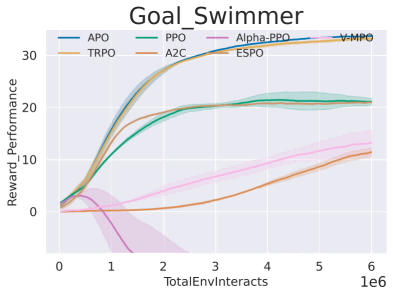}}
        \end{subfigure}
    \hfill
        \begin{subfigure}[t]{1.00\textwidth}
            \raisebox{-\height}{\includegraphics[width=\textwidth]{fig/continuous/Push_Point_Reward_Performance.png}}
        \end{subfigure}
    \hfill
        \begin{subfigure}[t]{1.00\textwidth}
            \raisebox{-\height}{\includegraphics[width=\textwidth]{fig/continuous/Push_Walker_Reward_Performance.png}}
        \end{subfigure}
    \end{subfigure}
    \hfill
    \begin{subfigure}[t]{0.24\textwidth}
        \begin{subfigure}[t]{1.00\textwidth}
            \raisebox{-\height}{\includegraphics[width=\textwidth]{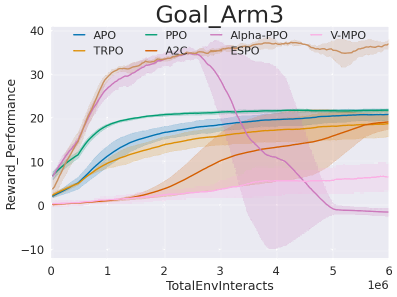}}
        \end{subfigure}
    \hfill
        \begin{subfigure}[t]{1.00\textwidth}
            \raisebox{-\height}{\includegraphics[width=\textwidth]{fig/continuous/Push_Swimmer_Reward_Performance.png}}
        \end{subfigure}
    \hfill
        \begin{subfigure}[t]{1.00\textwidth}
            \raisebox{-\height}{\includegraphics[width=\textwidth]{fig/continuous/Chase_Ant_Reward_Performance.png}}
        \end{subfigure}
    \end{subfigure}
    \hfill
    \begin{subfigure}[t]{0.24\textwidth}
        \begin{subfigure}[t]{1.00\textwidth}
            \raisebox{-\height}{\includegraphics[width=\textwidth]{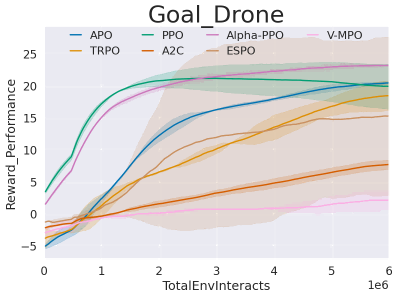}}
        \end{subfigure}
    \hfill
        \begin{subfigure}[t]{1.00\textwidth}
            \raisebox{-\height}{\includegraphics[width=\textwidth]{fig/continuous/Push_Hopper_Reward_Performance.png}}
        \end{subfigure}
    \hfill
        \begin{subfigure}[t]{1.00\textwidth}
            \raisebox{-\height}{\includegraphics[width=\textwidth]{fig/continuous/Chase_Walker_Reward_Performance.png}}
        \end{subfigure}
    \end{subfigure}
    \caption{All continuous GUARD tasks expected performance learning curves for APO test suites.}
    \label{fig: all guard figures}
\end{figure*}

\begin{figure*}[h]
    \centering
    \begin{subfigure}[t]{0.24\textwidth}
        \begin{subfigure}[t]{1.00\textwidth}
            \raisebox{-\height}{\includegraphics[width=\textwidth]{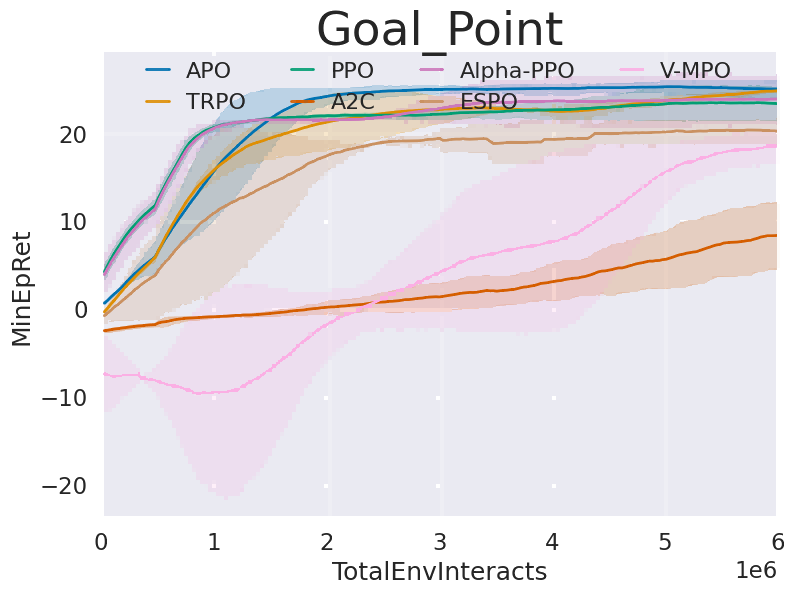}}
        \end{subfigure}
    \hfill
        \begin{subfigure}[t]{1.00\textwidth}
            \raisebox{-\height}{\includegraphics[width=\textwidth]{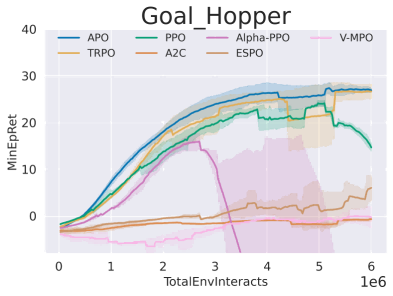}}
        \end{subfigure}
    \hfill
        \begin{subfigure}[t]{1.00\textwidth}
            \raisebox{-\height}{\includegraphics[width=\textwidth]{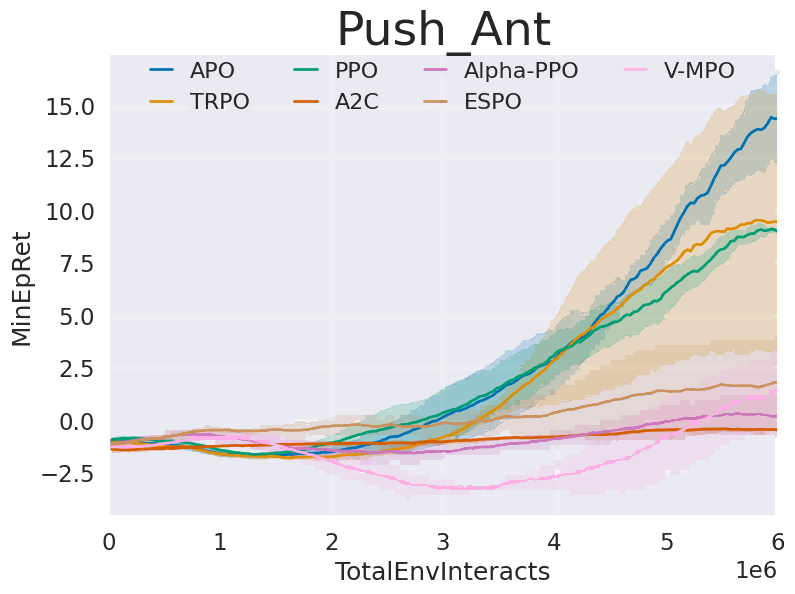}}
        \end{subfigure}
    \end{subfigure}
    \hfill
    \begin{subfigure}[t]{0.24\textwidth}
        \begin{subfigure}[t]{1.00\textwidth}
            \raisebox{-\height}{\includegraphics[width=\textwidth]{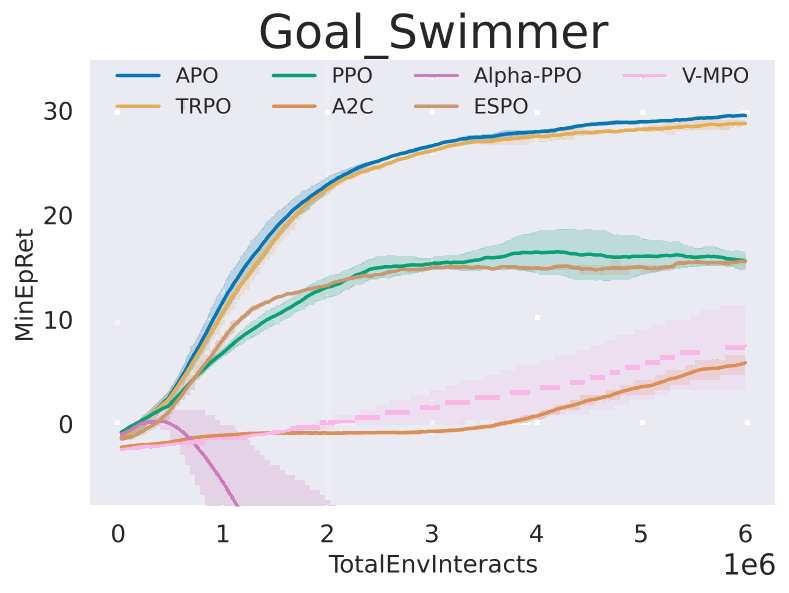}}
        \end{subfigure}
    \hfill
        \begin{subfigure}[t]{1.00\textwidth}
            \raisebox{-\height}{\includegraphics[width=\textwidth]{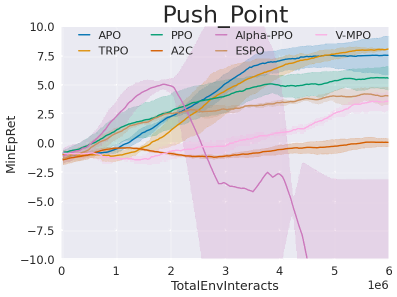}}
        \end{subfigure}
    \hfill
        \begin{subfigure}[t]{1.00\textwidth}
            \raisebox{-\height}{\includegraphics[width=\textwidth]{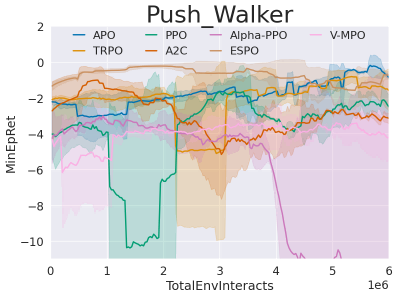}}
        \end{subfigure}
    \end{subfigure}
    \hfill
    \begin{subfigure}[t]{0.24\textwidth}
        \begin{subfigure}[t]{1.00\textwidth}
            \raisebox{-\height}{\includegraphics[width=\textwidth]{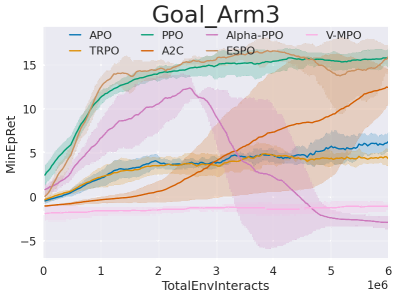}}
        \end{subfigure}
    \hfill
        \begin{subfigure}[t]{1.00\textwidth}
            \raisebox{-\height}{\includegraphics[width=\textwidth]{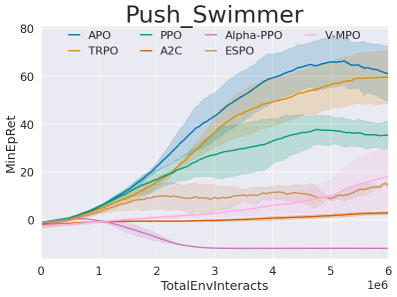}}
        \end{subfigure}
    \hfill
        \begin{subfigure}[t]{1.00\textwidth}
            \raisebox{-\height}{\includegraphics[width=\textwidth]{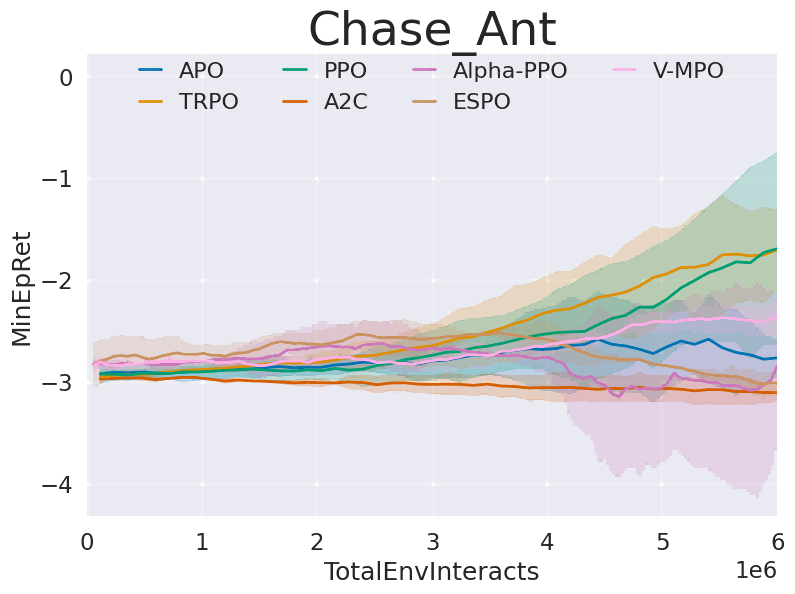}}
        \end{subfigure}
    \end{subfigure}
    \hfill
    \begin{subfigure}[t]{0.24\textwidth}
        \begin{subfigure}[t]{1.00\textwidth}
            \raisebox{-\height}{\includegraphics[width=\textwidth]{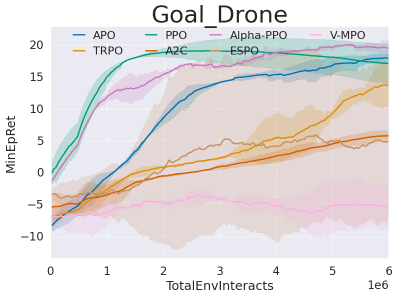}}
        \end{subfigure}
    \hfill
        \begin{subfigure}[t]{1.00\textwidth}
            \raisebox{-\height}{\includegraphics[width=\textwidth]{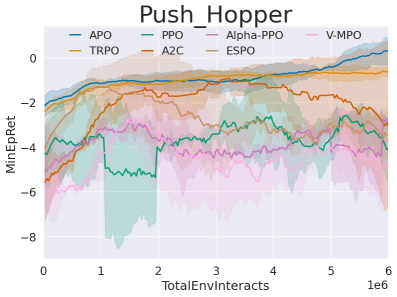}}
        \end{subfigure}
    \hfill
        \begin{subfigure}[t]{1.00\textwidth}
            \raisebox{-\height}{\includegraphics[width=\textwidth]{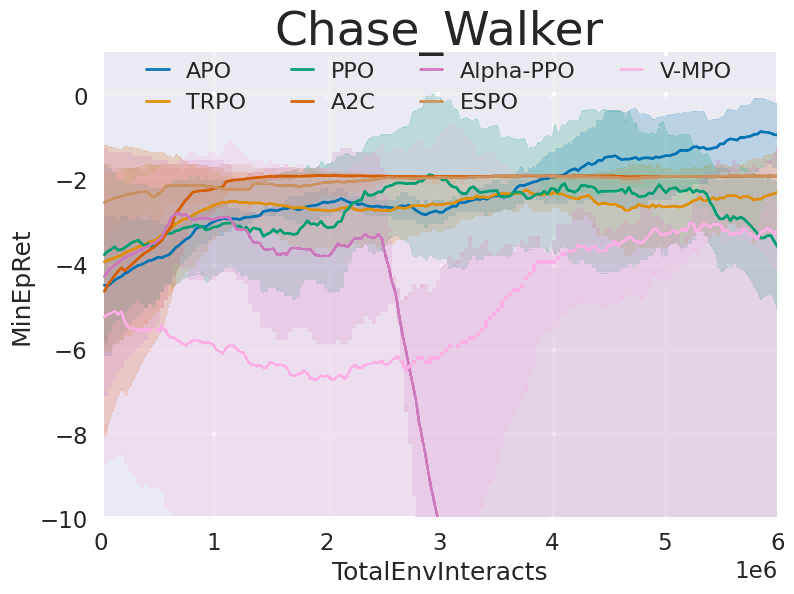}}
        \end{subfigure}
    \end{subfigure}

    \caption{All continuous GUARD tasks worst performance learning curves for APO test suites.}
    \label{fig: all guard min figures}
\end{figure*}

\begin{figure*}[h]
    \centering

    \begin{subfigure}[t]{0.24\textwidth}
        \begin{subfigure}[t]{1.00\textwidth}
            \raisebox{-\height}{\includegraphics[width=\textwidth]{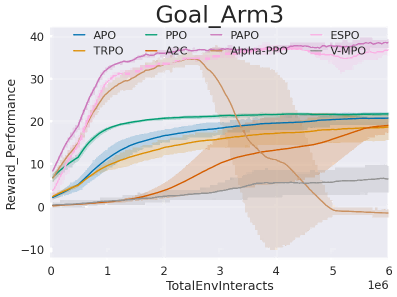}}
        \end{subfigure}
    \hfill
        \begin{subfigure}[t]{1.00\textwidth}
            \raisebox{-\height}{\includegraphics[width=\textwidth]{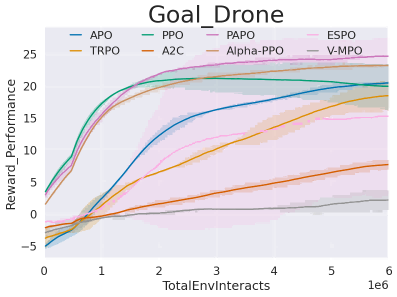}}
        \end{subfigure}
    \hfill
        \begin{subfigure}[t]{1.00\textwidth}
            \raisebox{-\height}{\includegraphics[width=\textwidth]{fig/papo/Goal_Ant_Reward_Performance.png}}
        \end{subfigure}
    \end{subfigure}
    \hfill
    \begin{subfigure}[t]{0.24\textwidth}
        \begin{subfigure}[t]{1.00\textwidth}
            \raisebox{-\height}{\includegraphics[width=\textwidth]{fig/papo/Goal_Walker_Reward_Performance.png}}
        \end{subfigure}
    \hfill
        \begin{subfigure}[t]{1.00\textwidth}
            \raisebox{-\height}{\includegraphics[width=\textwidth]{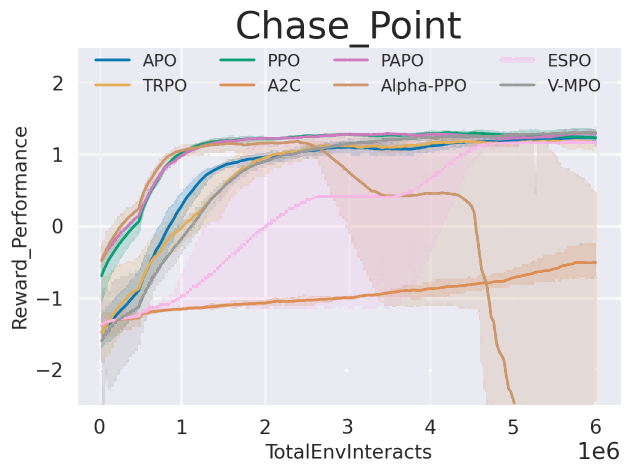}}
        \end{subfigure}
    \end{subfigure}
    \hfill
    \begin{subfigure}[t]{0.24\textwidth}
        \begin{subfigure}[t]{1.00\textwidth}
            \raisebox{-\height}{\includegraphics[width=\textwidth]{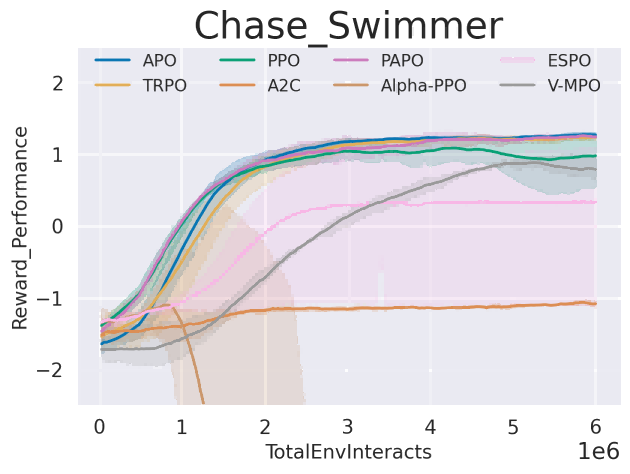}}
        \end{subfigure}
    \hfill
        \begin{subfigure}[t]{1.00\textwidth}
            \raisebox{-\height}{\includegraphics[width=\textwidth]{fig/papo/Chase_Hopper_Reward_Performance.png}}
        \end{subfigure}
    \end{subfigure}
    \hfill
    \begin{subfigure}[t]{0.24\textwidth}
        \begin{subfigure}[t]{1.00\textwidth}
            \raisebox{-\height}{\includegraphics[width=\textwidth]{fig/papo/Chase_Ant_Reward_Performance.png}}
        \end{subfigure}
    \hfill
        \begin{subfigure}[t]{1.00\textwidth}
            \raisebox{-\height}{\includegraphics[width=\textwidth]{fig/papo/Chase_Walker_Reward_Performance.png}}
        \end{subfigure}
    \end{subfigure}
    \caption{All continuous GUARD tasks expected performance learning curves for PAPO test suites.}
    \label{fig: all guard figures papo}
\end{figure*}

\begin{figure*}[h]
    \centering

    \begin{subfigure}[t]{0.24\textwidth}
        \begin{subfigure}[t]{1.00\textwidth}
            \raisebox{-\height}{\includegraphics[width=\textwidth]{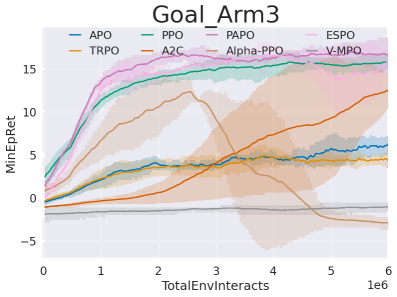}}
        \end{subfigure}
    \hfill
        \begin{subfigure}[t]{1.00\textwidth}
            \raisebox{-\height}{\includegraphics[width=\textwidth]{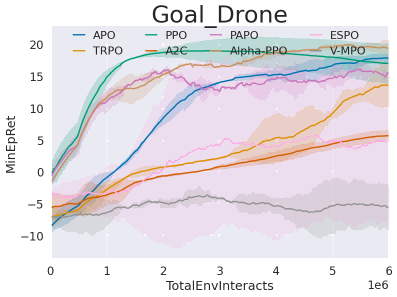}}
        \end{subfigure}
    \hfill
        \begin{subfigure}[t]{1.00\textwidth}
            \raisebox{-\height}{\includegraphics[width=\textwidth]{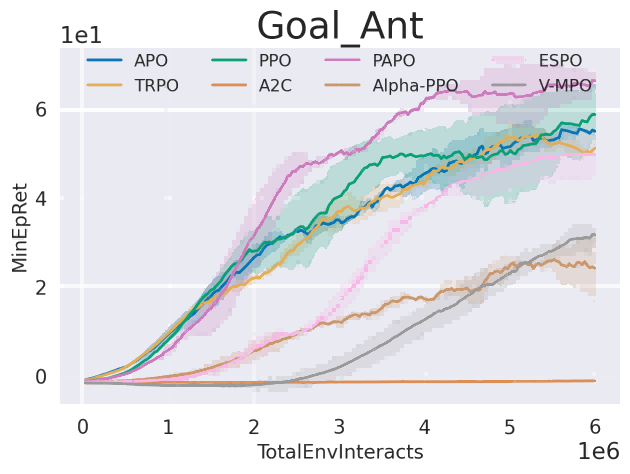}}
        \end{subfigure}
    \end{subfigure}
    \hfill
    \begin{subfigure}[t]{0.24\textwidth}
        \begin{subfigure}[t]{1.00\textwidth}
            \raisebox{-\height}{\includegraphics[width=\textwidth]{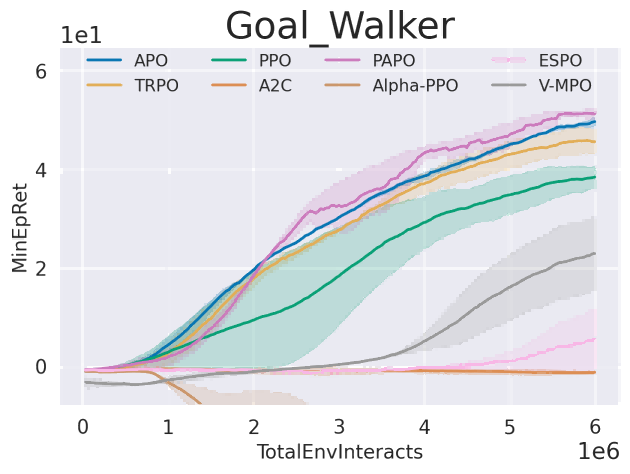}}
        \end{subfigure}
    \hfill
        \begin{subfigure}[t]{1.00\textwidth}
            \raisebox{-\height}{\includegraphics[width=\textwidth]{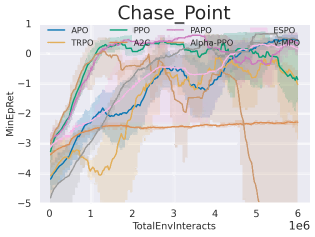}}
        \end{subfigure}
    \end{subfigure}
    \hfill
    \begin{subfigure}[t]{0.24\textwidth}
        \begin{subfigure}[t]{1.00\textwidth}
            \raisebox{-\height}{\includegraphics[width=\textwidth]{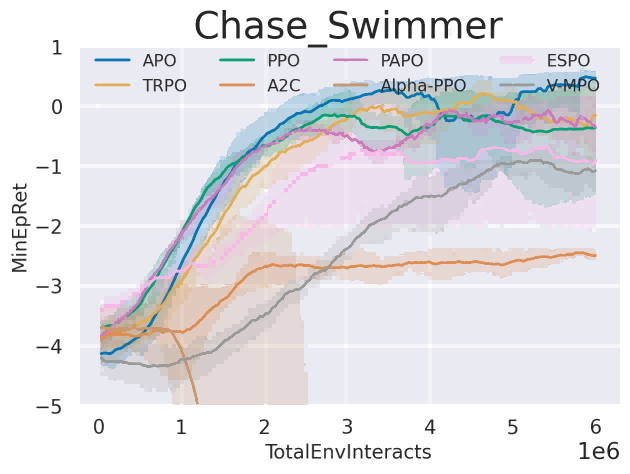}}
        \end{subfigure}
    \hfill
        \begin{subfigure}[t]{1.00\textwidth}
            \raisebox{-\height}{\includegraphics[width=\textwidth]{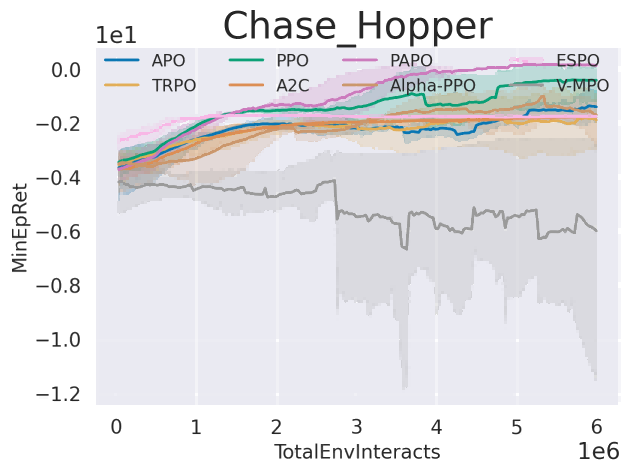}}
        \end{subfigure}
    \end{subfigure}
    \hfill
    \begin{subfigure}[t]{0.24\textwidth}
        \begin{subfigure}[t]{1.00\textwidth}
            \raisebox{-\height}{\includegraphics[width=\textwidth]{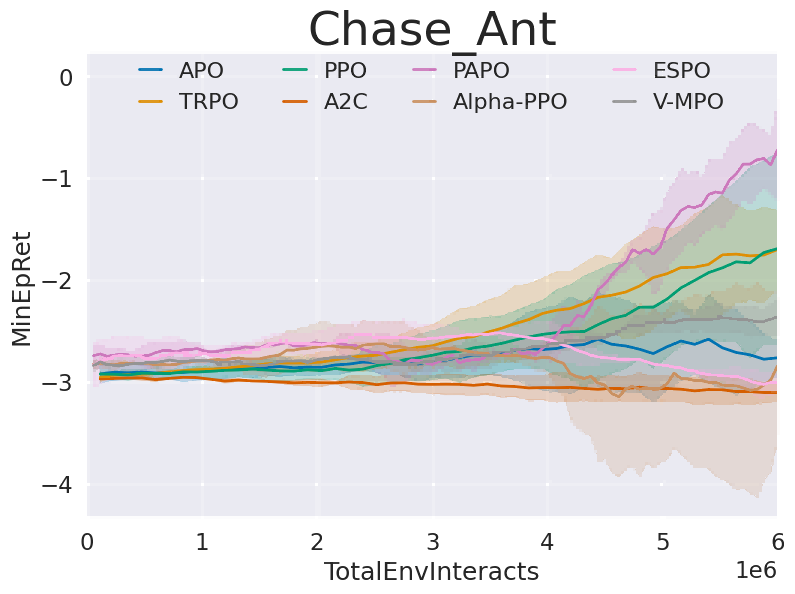}}
        \end{subfigure}
    \hfill
        \begin{subfigure}[t]{1.00\textwidth}
            \raisebox{-\height}{\includegraphics[width=\textwidth]{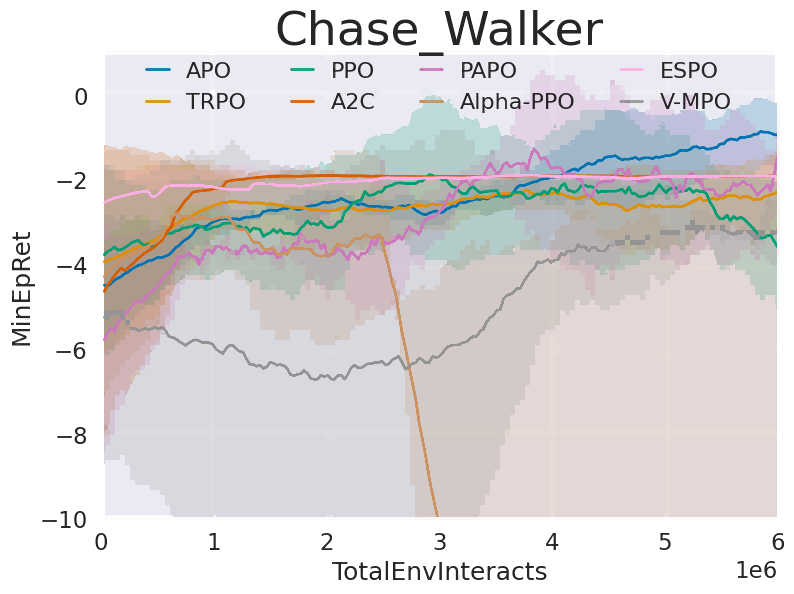}}
        \end{subfigure}
    \end{subfigure}
    \caption{All continuous GUARD tasks worst performance learning curves for PAPO test suites.}
    \label{fig: all guard min figures papo}
\end{figure*}

\begin{figure}[t]
    \vspace{-40pt}
    \begin{subfigure}[t]{0.204\textwidth}
    \begin{subfigure}[t]{1.000\textwidth}
        \raisebox{-\height}{\includegraphics[width=\textwidth]{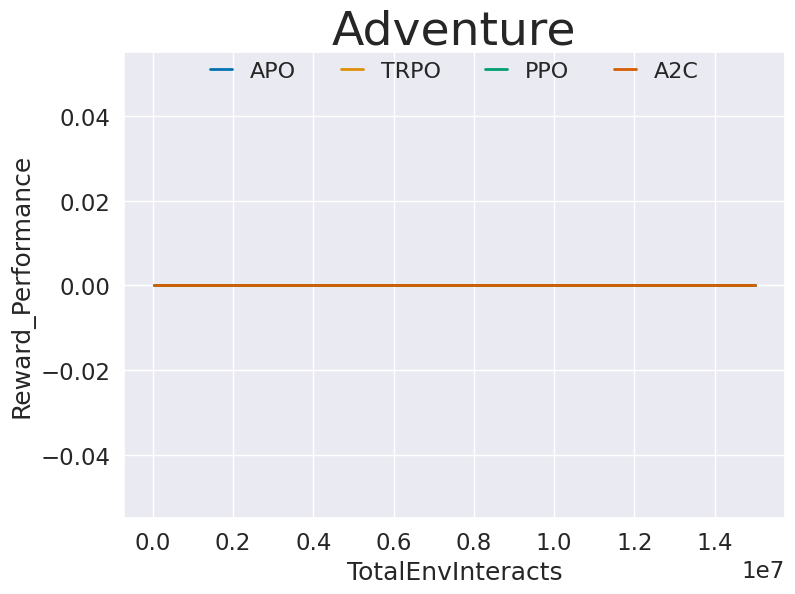}}
    \end{subfigure}
    \hfill
    \begin{subfigure}[t]{1.000\textwidth}
        \raisebox{-\height}{\includegraphics[width=\textwidth]{fig/atari/AirRaid_Reward_Performance.png}}
    \end{subfigure}
    \hfill
    \begin{subfigure}[t]{1.000\textwidth}
        \raisebox{-\height}{\includegraphics[width=\textwidth]{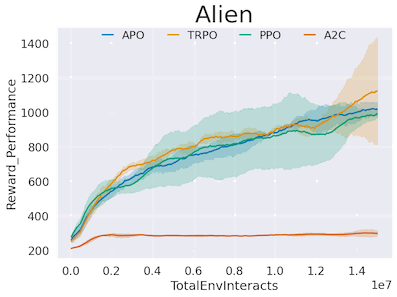}}
    \end{subfigure}
    \hfill
    \begin{subfigure}[t]{1.000\textwidth}
        \raisebox{-\height}{\includegraphics[width=\textwidth]{fig/atari/Amidar_Reward_Performance.png}}
    \end{subfigure}
    \hfill
    \begin{subfigure}[t]{1.000\textwidth}
        \raisebox{-\height}{\includegraphics[width=\textwidth]{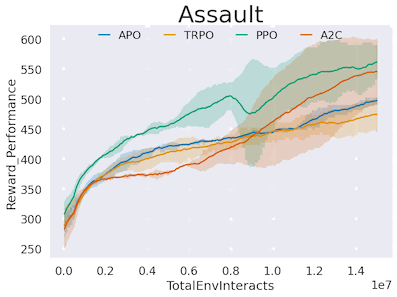}}
    \end{subfigure}
    \hfill
    \begin{subfigure}[t]{1.000\textwidth}
        \raisebox{-\height}{\includegraphics[width=\textwidth]{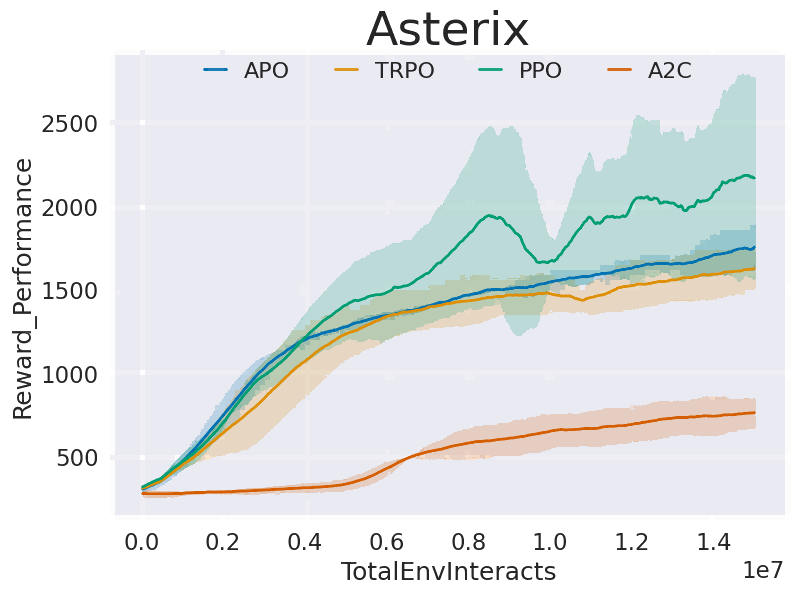}}
    \end{subfigure}
    \hfill
    \begin{subfigure}[t]{1.000\textwidth}
        \raisebox{-\height}{\includegraphics[width=\textwidth]{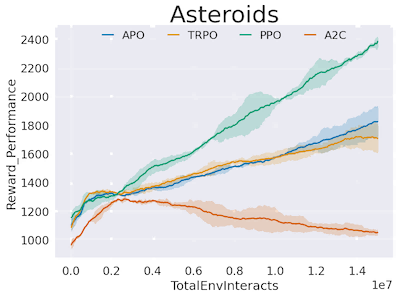}}
    \end{subfigure}
    \hfill
    \begin{subfigure}[t]{1.000\textwidth}
        \raisebox{-\height}{\includegraphics[width=\textwidth]{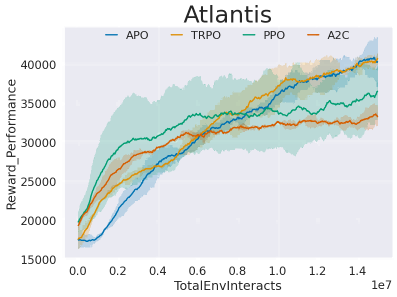}}
    \end{subfigure}
    \end{subfigure}
    \hfill
    \begin{subfigure}[t]{0.204\textwidth}
    \begin{subfigure}[t]{1.000\textwidth}
        \raisebox{-\height}{\includegraphics[width=\textwidth]{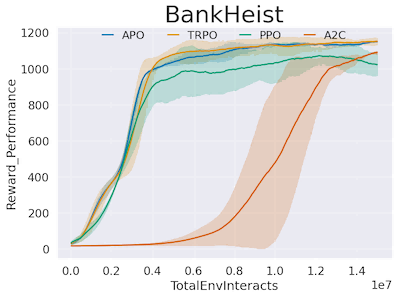}}
    \end{subfigure}
    \hfill
    \begin{subfigure}[t]{1.000\textwidth}
        \raisebox{-\height}{\includegraphics[width=\textwidth]{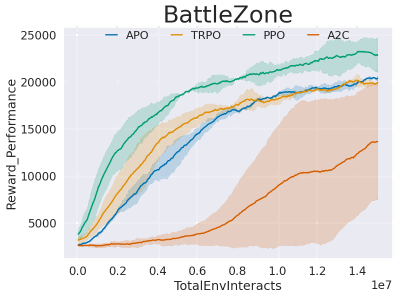}}
    \end{subfigure}
    \hfill
    \begin{subfigure}[t]{1.000\textwidth}
        \raisebox{-\height}{\includegraphics[width=\textwidth]{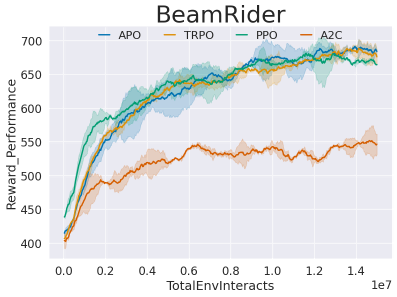}}
    \end{subfigure}
    \hfill
    \begin{subfigure}[t]{1.000\textwidth}
        \raisebox{-\height}{\includegraphics[width=\textwidth]{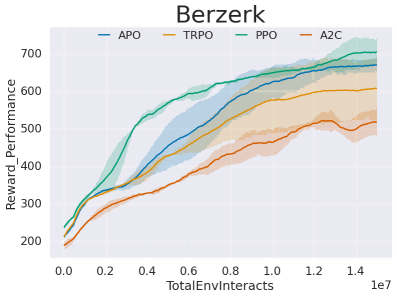}}
    \end{subfigure}
    \hfill
    \begin{subfigure}[t]{1.000\textwidth}
        \raisebox{-\height}{\includegraphics[width=\textwidth]{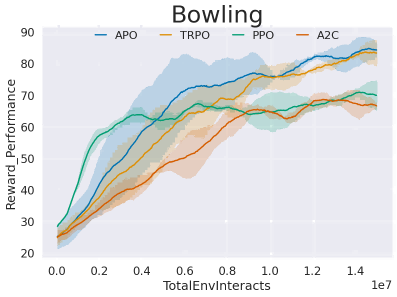}}
    \end{subfigure}
    \hfill
    \begin{subfigure}[t]{1.000\textwidth}
        \raisebox{-\height}{\includegraphics[width=\textwidth]{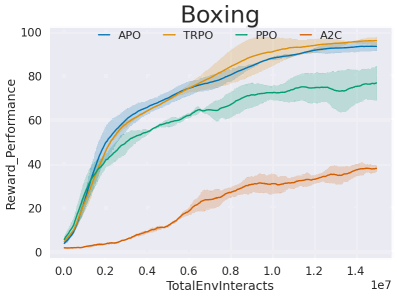}}
    \end{subfigure}
    \hfill
    \begin{subfigure}[t]
    {1.000\textwidth}
        \raisebox{-\height}{\includegraphics[width=\textwidth]{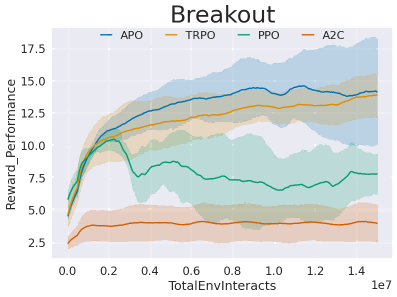}}
    \end{subfigure}
    \hfill
    \begin{subfigure}[t]{1.000\textwidth}
        \raisebox{-\height}{\includegraphics[width=\textwidth]{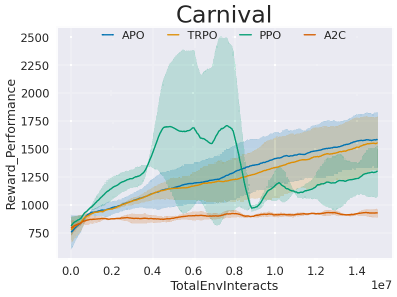}}
    \end{subfigure}
    \end{subfigure}
    \hfill
    \begin{subfigure}[t]{0.204\textwidth}
    \begin{subfigure}[t]{1.000\textwidth}
        \raisebox{-\height}{\includegraphics[width=\textwidth]{fig/atari/Centipede_Reward_Performance.png}}
    \end{subfigure}
    \hfill
    \begin{subfigure}[t]{1.000\textwidth}
        \raisebox{-\height}{\includegraphics[width=\textwidth]{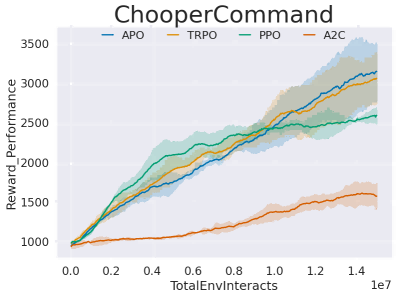}}
    \end{subfigure}
    \hfill
    \begin{subfigure}[t]{1.000\textwidth}
        \raisebox{-\height}{\includegraphics[width=\textwidth]{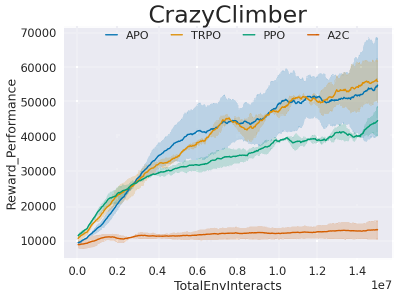}}
    \end{subfigure}
    \hfill
    \begin{subfigure}[t]{1.000\textwidth}
        \raisebox{-\height}{\includegraphics[width=\textwidth]{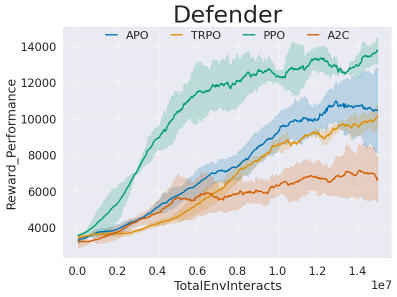}}
    \end{subfigure}
    \hfill
    \begin{subfigure}[t]{1.000\textwidth}
        \raisebox{-\height}{\includegraphics[width=\textwidth]{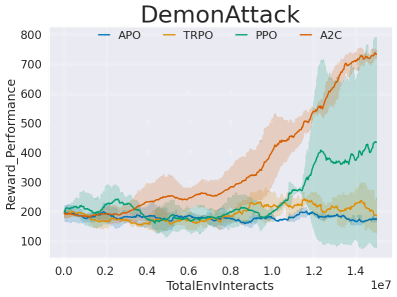}}
    \end{subfigure}
    \hfill
    \begin{subfigure}[t]{1.000\textwidth}
        \raisebox{-\height}{\includegraphics[width=\textwidth]{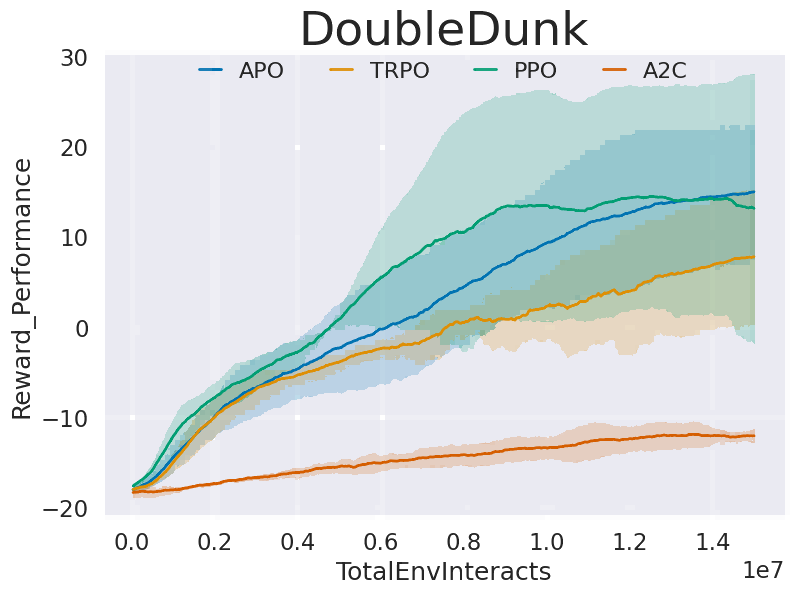}}
    \end{subfigure}
    \hfill
    \begin{subfigure}[t]{1.000\textwidth}
        \raisebox{-\height}{\includegraphics[width=\textwidth]{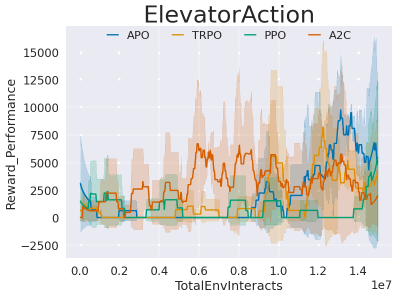}}
    \end{subfigure}
    \hfill
    \begin{subfigure}[t]{1.000\textwidth}
        \raisebox{-\height}{\includegraphics[width=\textwidth]{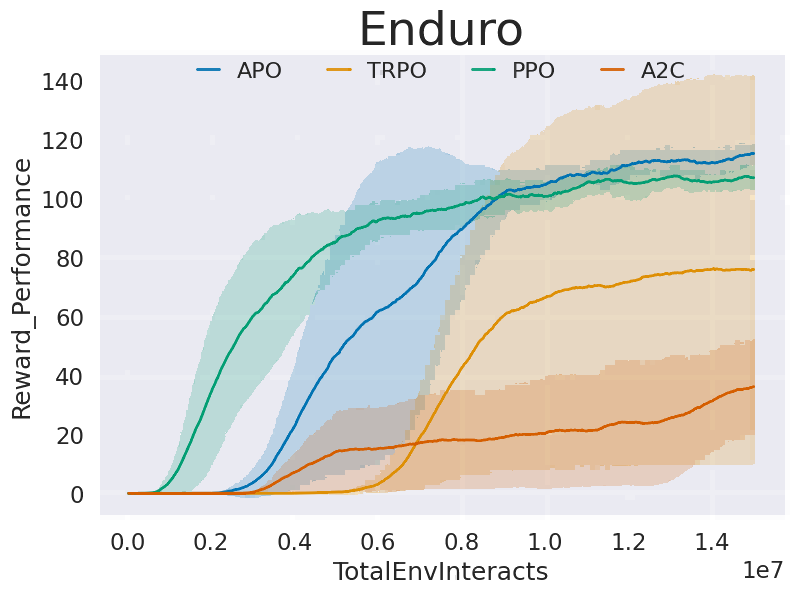}}
    \end{subfigure}
    \end{subfigure}
    \hfill
    \begin{subfigure}[t]{0.204\textwidth}
    \begin{subfigure}[t]{1.000\textwidth}
        \raisebox{-\height}{\includegraphics[width=\textwidth]{fig/atari/FishingDerby_Reward_Performance.png}}
    \end{subfigure}
    \hfill
    \begin{subfigure}[t]{1.000\textwidth}
        \raisebox{-\height}{\includegraphics[width=\textwidth]{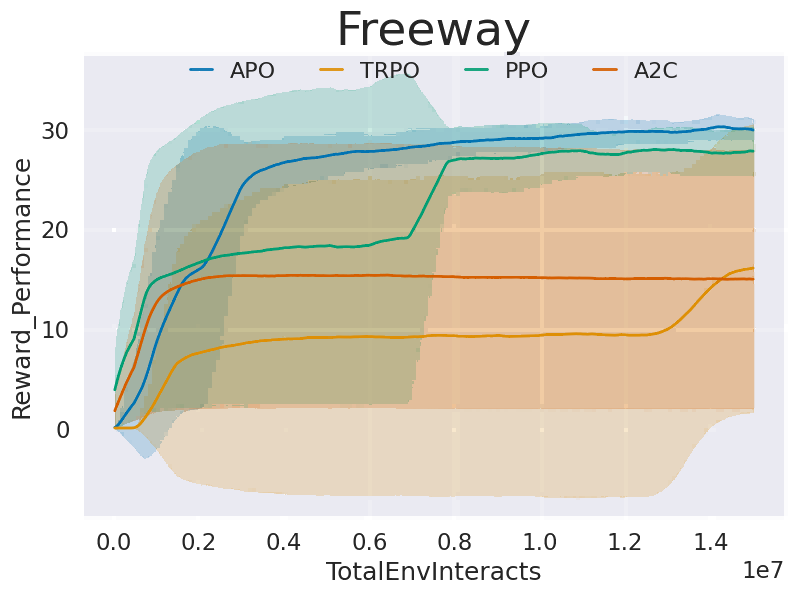}}
    \end{subfigure}
    \hfill
    \begin{subfigure}[t]{1.000\textwidth}
        \raisebox{-\height}{\includegraphics[width=\textwidth]{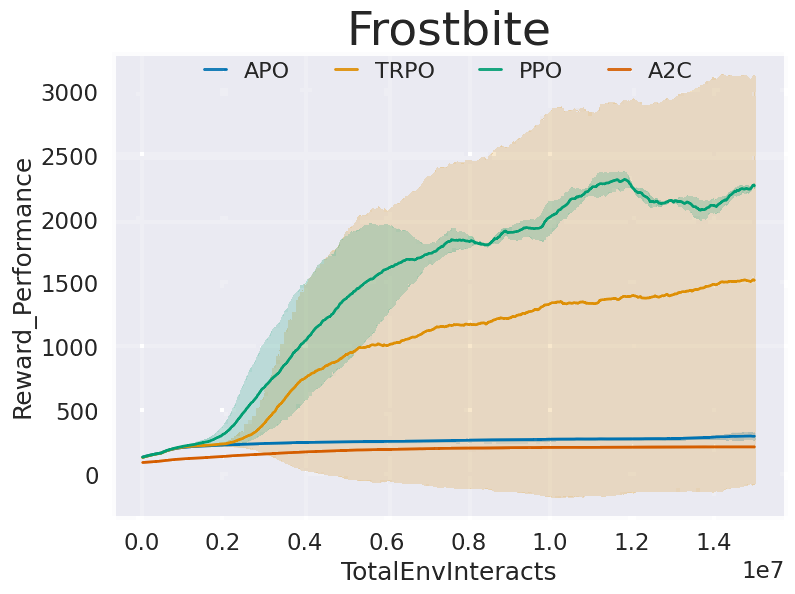}}
    \end{subfigure}
    \hfill
    \begin{subfigure}[t]{1.000\textwidth}
        \raisebox{-\height}{\includegraphics[width=\textwidth]{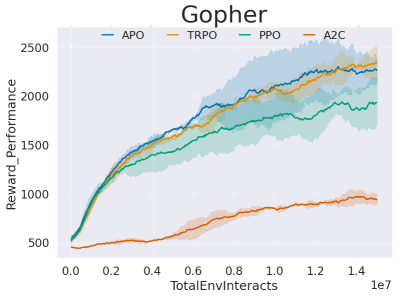}}
    \end{subfigure}
    \hfill
    \begin{subfigure}[t]{1.000\textwidth}
        \raisebox{-\height}{\includegraphics[width=\textwidth]{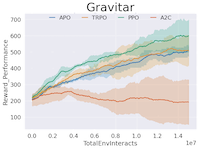}}
    \end{subfigure}
    \hfill
    \begin{subfigure}[t]{1.000\textwidth}
        \raisebox{-\height}{\includegraphics[width=\textwidth]{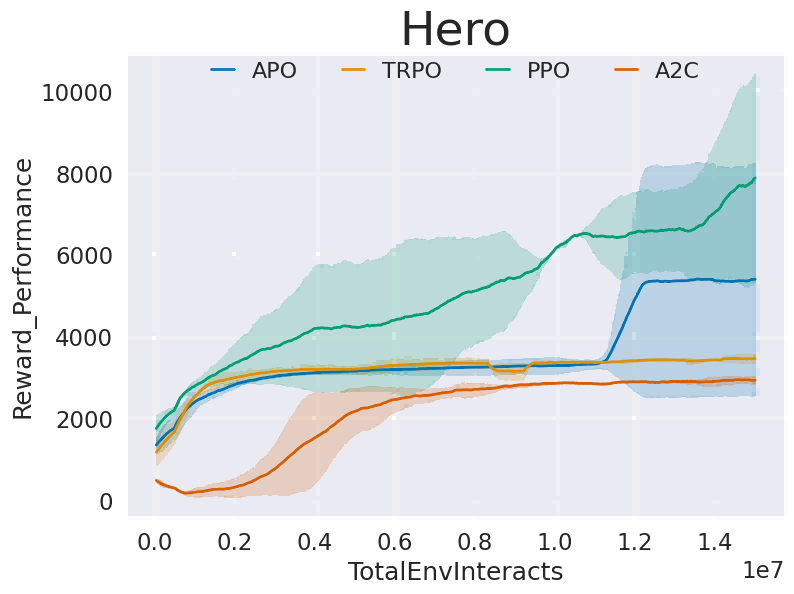}}
    \end{subfigure}
    \hfill
    \begin{subfigure}[t]{1.000\textwidth}
        \raisebox{-\height}{\includegraphics[width=\textwidth]{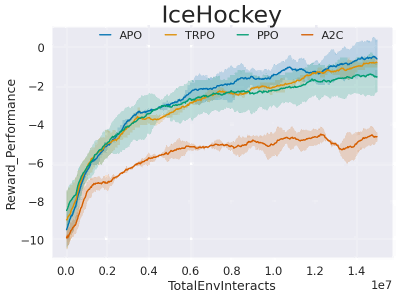}}
    \end{subfigure}
    \hfill
    \begin{subfigure}[t]{1.000\textwidth}
        \raisebox{-\height}{\includegraphics[width=\textwidth]{fig/atari/Jamesbond_Reward_Performance.png}}
    \end{subfigure}
    \end{subfigure}
    \caption{Comparison of expected performance on Atari Game No.1 - No. 32}
    \label{fig: all atari figures 1}
\end{figure}

\begin{figure}[t]
    \vspace{-40pt}
    \begin{subfigure}[t]{0.204\textwidth}
    \begin{subfigure}[t]{1.000\textwidth}
        \raisebox{-\height}{\includegraphics[width=\textwidth]{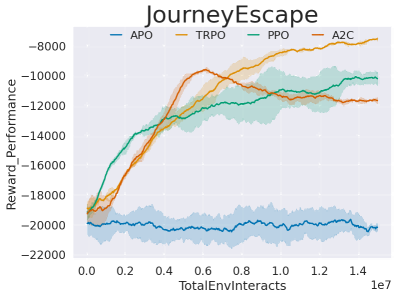}}
    \end{subfigure}
    \hfill
    \begin{subfigure}[t]{1.000\textwidth}
        \raisebox{-\height}{\includegraphics[width=\textwidth]{fig/atari/Kangaroo_Reward_Performance.png}}
    \end{subfigure}
    \hfill
    \begin{subfigure}[t]{1.000\textwidth}
        \raisebox{-\height}{\includegraphics[width=\textwidth]{fig/atari/Krull_Reward_Performance.png}}
    \end{subfigure}
    \hfill
    \begin{subfigure}[t]{1.000\textwidth}
        \raisebox{-\height}{\includegraphics[width=\textwidth]{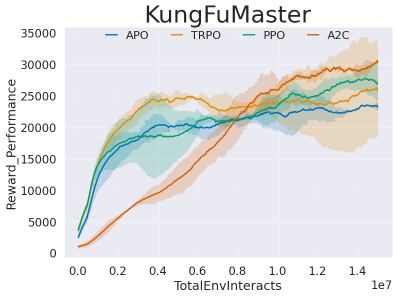}}
    \end{subfigure}
    \hfill
    \begin{subfigure}[t]{1.000\textwidth}
        \raisebox{-\height}{\includegraphics[width=\textwidth]{fig/atari/MontezumaRevenge_Reward_Performance.png}}
    \end{subfigure}
    \hfill
    \begin{subfigure}[t]{1.000\textwidth}
        \raisebox{-\height}{\includegraphics[width=\textwidth]{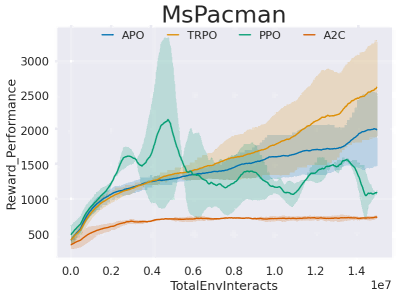}}
    \end{subfigure}
    \hfill
    \begin{subfigure}[t]{1.000\textwidth}
        \raisebox{-\height}{\includegraphics[width=\textwidth]{fig/atari/NameThisGame_Reward_Performance.png}}
    \end{subfigure}
    \hfill
    \begin{subfigure}[t]{1.000\textwidth}
        \raisebox{-\height}{\includegraphics[width=\textwidth]{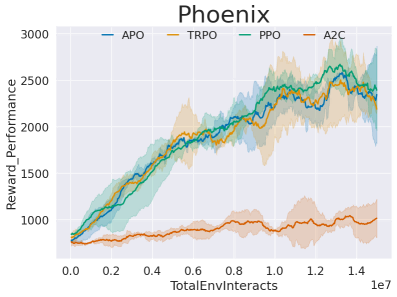}}
    \end{subfigure}
    \end{subfigure}
    \hfill
    \begin{subfigure}[t]{0.204\textwidth}
    \begin{subfigure}[t]{1.000\textwidth}
        \raisebox{-\height}{\includegraphics[width=\textwidth]{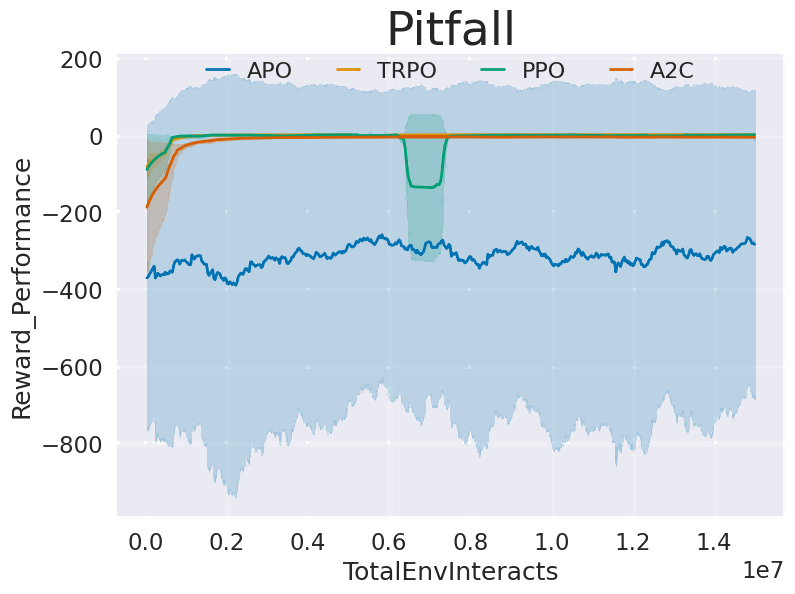}}
    \end{subfigure}
    \hfill
    \begin{subfigure}[t]{1.000\textwidth}
        \raisebox{-\height}{\includegraphics[width=\textwidth]{fig/atari/Pong_Reward_Performance.png}}
    \end{subfigure}
    \hfill
    \begin{subfigure}[t]{1.000\textwidth}
        \raisebox{-\height}{\includegraphics[width=\textwidth]{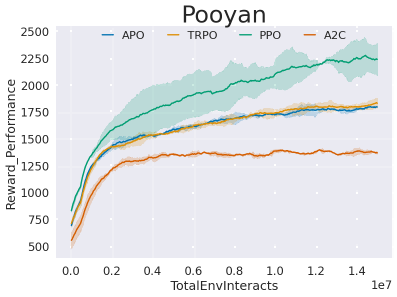}}
    \end{subfigure}
    \hfill
    \begin{subfigure}[t]{1.000\textwidth}
        \raisebox{-\height}{\includegraphics[width=\textwidth]{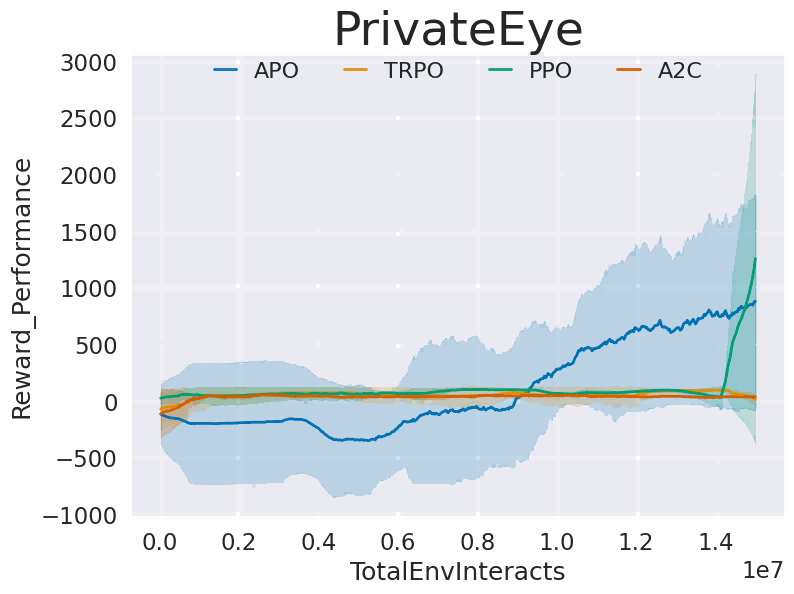}}
    \end{subfigure}
    \hfill
    \begin{subfigure}[t]{1.000\textwidth}
        \raisebox{-\height}{\includegraphics[width=\textwidth]{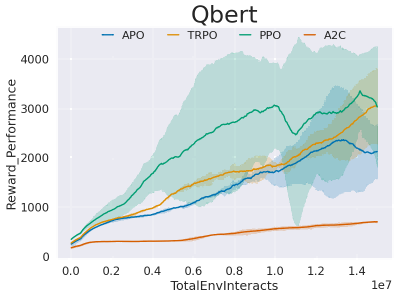}}
    \end{subfigure}
    \hfill
    \begin{subfigure}[t]{1.000\textwidth}
        \raisebox{-\height}{\includegraphics[width=\textwidth]{fig/atari/Riverraid_Reward_Performance.png}}
    \end{subfigure}
    \hfill
    \begin{subfigure}[t]{1.000\textwidth}
        \raisebox{-\height}{\includegraphics[width=\textwidth]{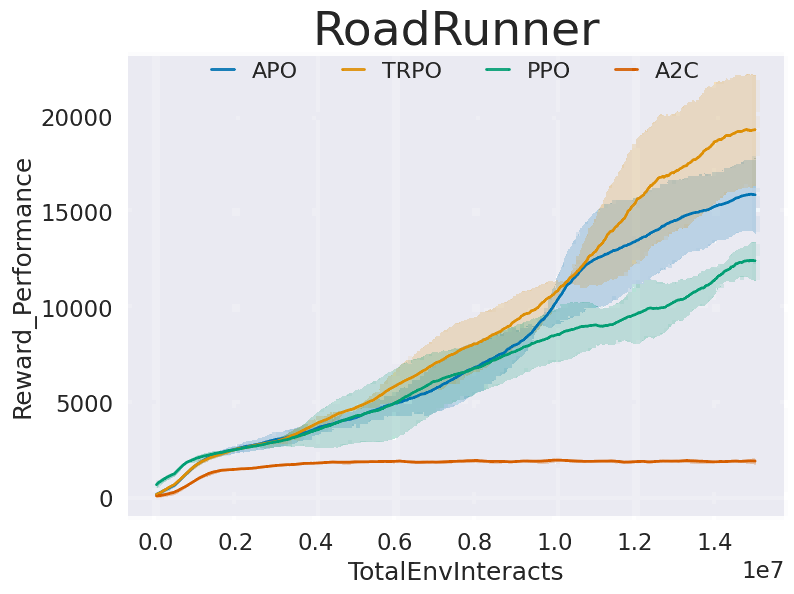}}
    \end{subfigure}
    \hfill
    \begin{subfigure}[t]{1.000\textwidth}
        \raisebox{-\height}{\includegraphics[width=\textwidth]{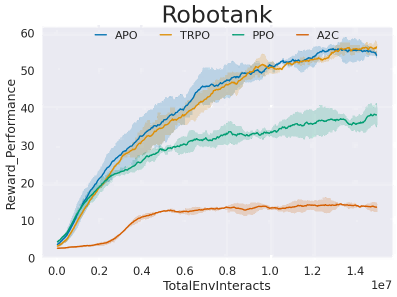}}
    \end{subfigure}
    \end{subfigure}
    \hfill
    \begin{subfigure}[t]{0.204\textwidth}
    \begin{subfigure}[t]{1.000\textwidth}
        \raisebox{-\height}{\includegraphics[width=\textwidth]{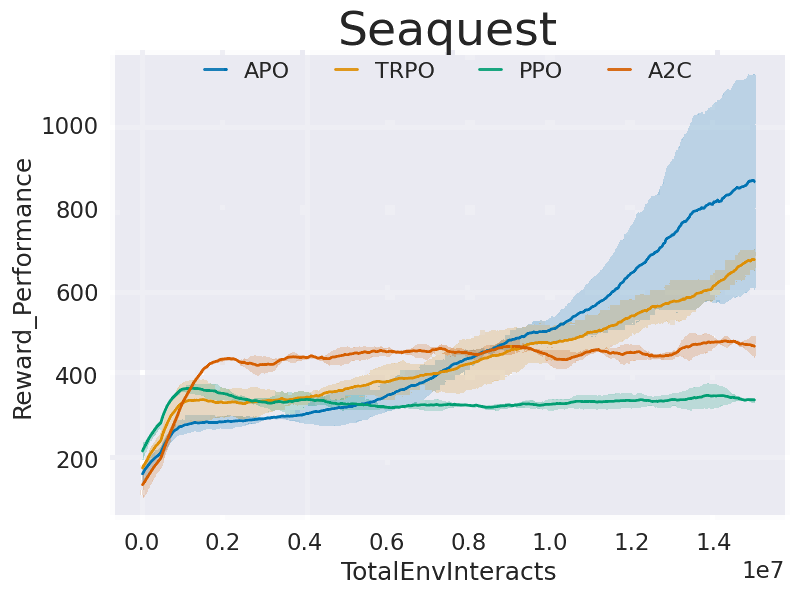}}
    \end{subfigure}
    \hfill
    \begin{subfigure}[t]{1.000\textwidth}
        \raisebox{-\height}{\includegraphics[width=\textwidth]{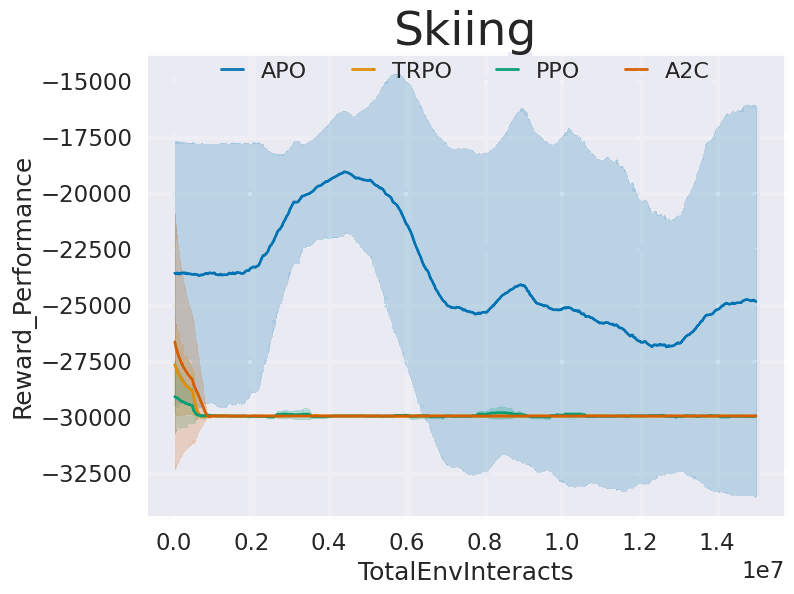}}
    \end{subfigure}
    \hfill
    \begin{subfigure}[t]{1.000\textwidth}
        \raisebox{-\height}{\includegraphics[width=\textwidth]{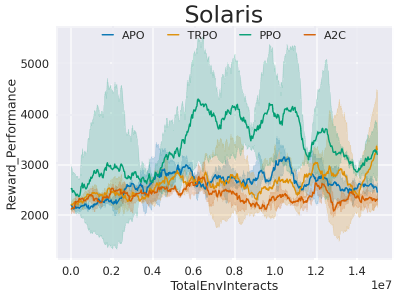}}
    \end{subfigure}
    \hfill
    \begin{subfigure}[t]{1.000\textwidth}
        \raisebox{-\height}{\includegraphics[width=\textwidth]{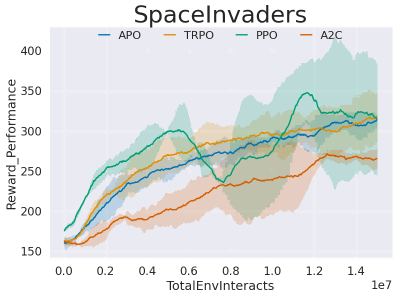}}
    \end{subfigure}
    \hfill
    \begin{subfigure}[t]{1.000\textwidth}
        \raisebox{-\height}{\includegraphics[width=\textwidth]{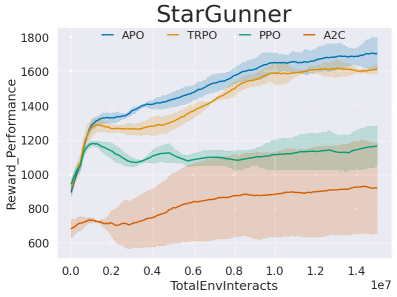}}
    \end{subfigure}
    \hfill
    \begin{subfigure}[t]{1.000\textwidth}
        \raisebox{-\height}{\includegraphics[width=\textwidth]{fig/atari/Tennis_Reward_Performance.png}}
    \end{subfigure}
    \hfill
    \begin{subfigure}[t]{1.000\textwidth}
        \raisebox{-\height}{\includegraphics[width=\textwidth]{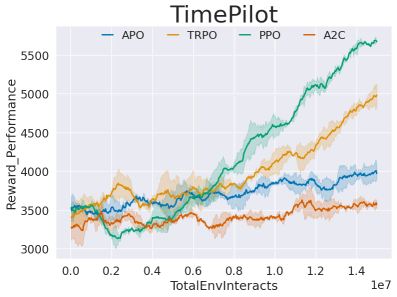}}
    \end{subfigure}
    
    \end{subfigure}
    \hfill
    \begin{subfigure}[t]{0.204\textwidth}
    \begin{subfigure}[t]{1.000\textwidth}
        \raisebox{-\height}{\includegraphics[width=\textwidth]{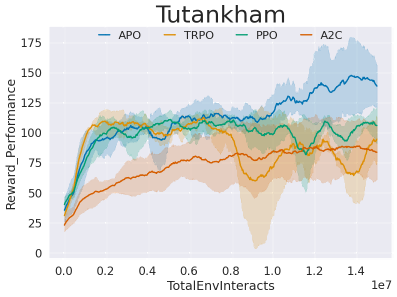}}
    \end{subfigure}
    \hfill
    \begin{subfigure}[t]{1.000\textwidth}
        \raisebox{-\height}{\includegraphics[width=\textwidth]{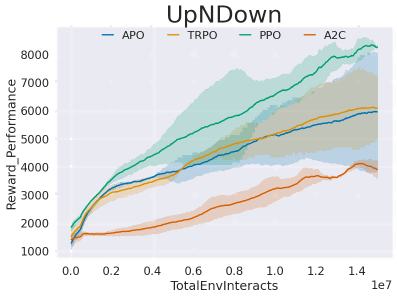}}
    \end{subfigure}
    \hfill
    \begin{subfigure}[t]{1.000\textwidth}
        \raisebox{-\height}{\includegraphics[width=\textwidth]{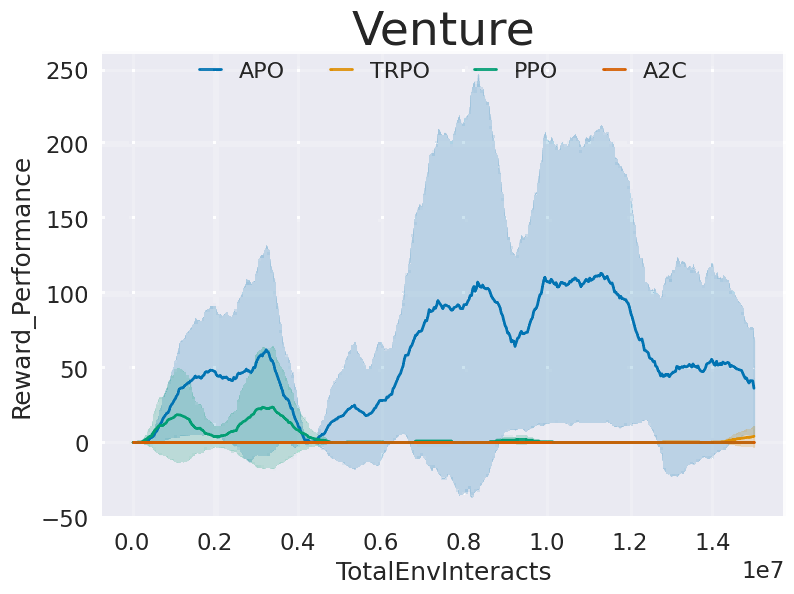}}
    \end{subfigure}
    \hfill
    \begin{subfigure}[t]{1.000\textwidth}
        \raisebox{-\height}{\includegraphics[width=\textwidth]{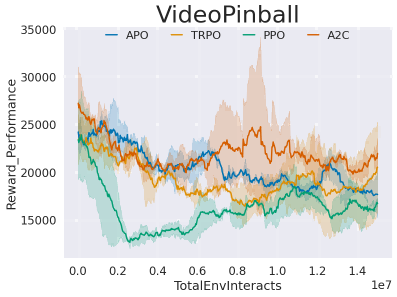}}
    \end{subfigure}
    \hfill
    \begin{subfigure}[t]{1.000\textwidth}
        \raisebox{-\height}{\includegraphics[width=\textwidth]{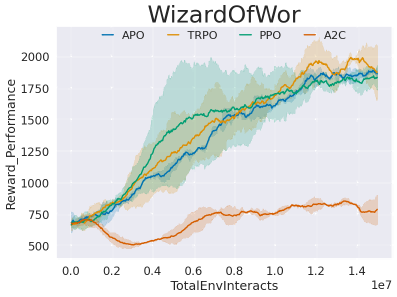}}
    \end{subfigure}
    \hfill
    \begin{subfigure}[t]{1.000\textwidth}
        \raisebox{-\height}{\includegraphics[width=\textwidth]{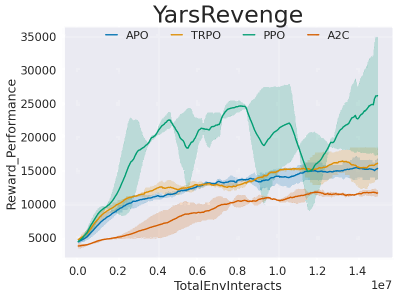}}
    \end{subfigure}
    \hfill
    \begin{subfigure}[t]{1.000\textwidth}
        \raisebox{-\height}{\includegraphics[width=\textwidth]{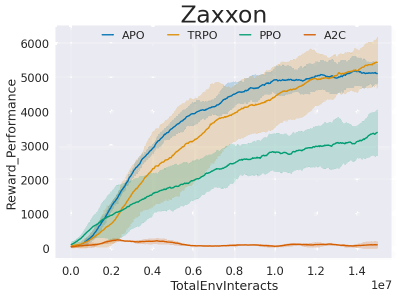}}
    \end{subfigure}
    \end{subfigure}
    \caption{Comparison of expected performance on Atari Game No.33 - No. 62}
    \label{fig: all atari figures 2}
\end{figure}

\begin{figure}[t]
    \vspace{-40pt}
    \begin{subfigure}[t]{0.204\textwidth}
    \begin{subfigure}[t]{1.000\textwidth}
        \raisebox{-\height}{\includegraphics[width=\textwidth]{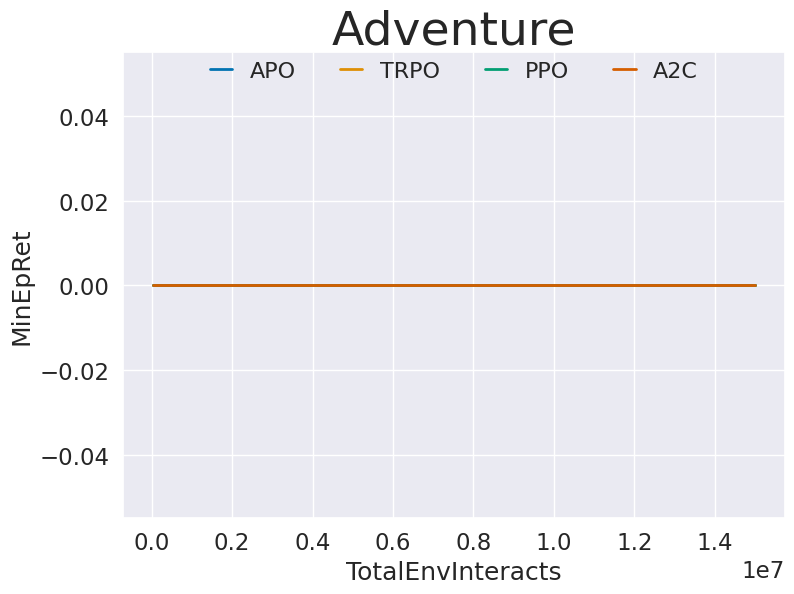}}
    \end{subfigure}
    \hfill
    \begin{subfigure}[t]{1.000\textwidth}
        \raisebox{-\height}{\includegraphics[width=\textwidth]{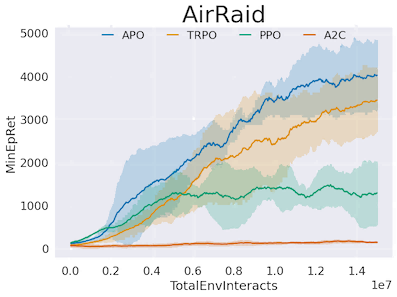}}
    \end{subfigure}
    \hfill
    \begin{subfigure}[t]{1.000\textwidth}
        \raisebox{-\height}{\includegraphics[width=\textwidth]{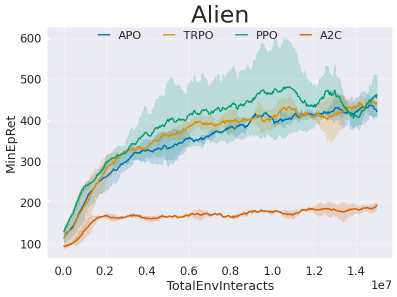}}
    \end{subfigure}
    \hfill
    \begin{subfigure}[t]{1.000\textwidth}
        \raisebox{-\height}{\includegraphics[width=\textwidth]{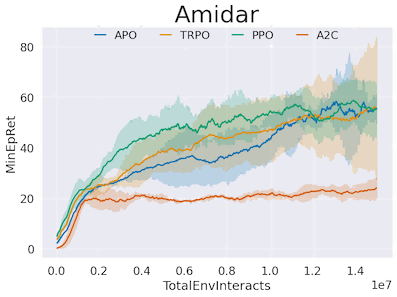}}
    \end{subfigure}
    \hfill
    \begin{subfigure}[t]{1.000\textwidth}
        \raisebox{-\height}{\includegraphics[width=\textwidth]{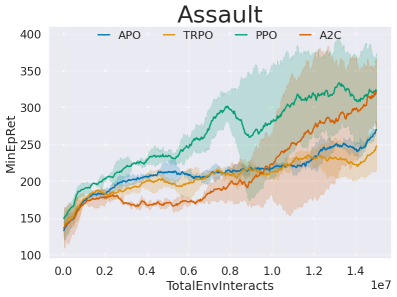}}
    \end{subfigure}
    \hfill
    \begin{subfigure}[t]{1.000\textwidth}
        \raisebox{-\height}{\includegraphics[width=\textwidth]{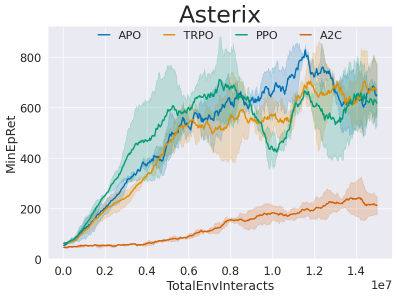}}
    \end{subfigure}
    \hfill
    \begin{subfigure}[t]{1.000\textwidth}
        \raisebox{-\height}{\includegraphics[width=\textwidth]{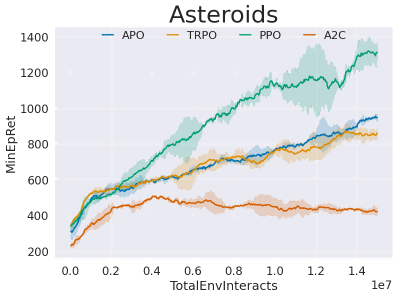}}
    \end{subfigure}
    \hfill
    \begin{subfigure}[t]{1.000\textwidth}
        \raisebox{-\height}{\includegraphics[width=\textwidth]{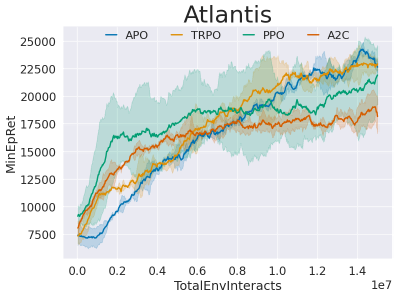}}
    \end{subfigure}
    \end{subfigure}
    \hfill
    \begin{subfigure}[t]{0.204\textwidth}
    \begin{subfigure}[t]{1.000\textwidth}
        \raisebox{-\height}{\includegraphics[width=\textwidth]{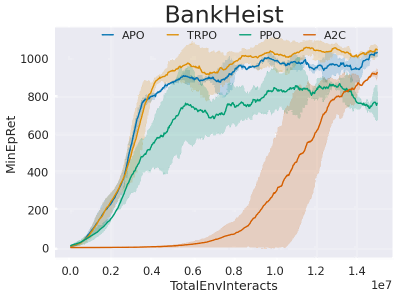}}
    \end{subfigure}
    \hfill
    \begin{subfigure}[t]{1.000\textwidth}
        \raisebox{-\height}{\includegraphics[width=\textwidth]{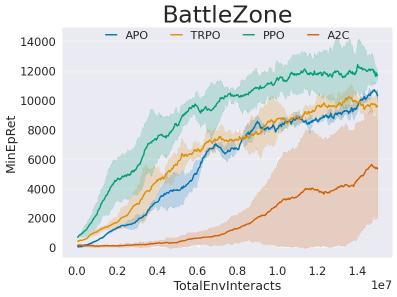}}
    \end{subfigure}
    \hfill
    \begin{subfigure}[t]{1.000\textwidth}
        \raisebox{-\height}{\includegraphics[width=\textwidth]{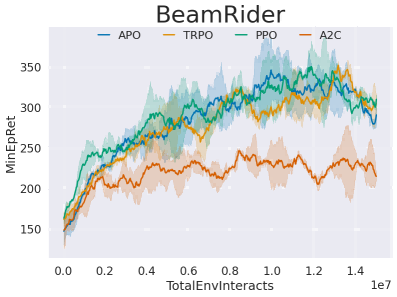}}
    \end{subfigure}
    \hfill
    \begin{subfigure}[t]{1.000\textwidth}
        \raisebox{-\height}{\includegraphics[width=\textwidth]{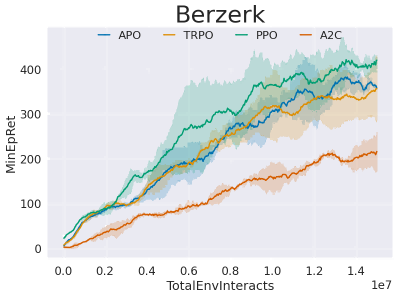}}
    \end{subfigure}
    \hfill
    \begin{subfigure}[t]{1.000\textwidth}
        \raisebox{-\height}{\includegraphics[width=\textwidth]{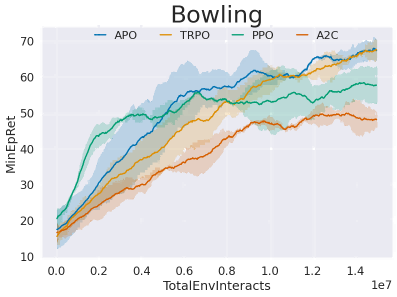}}
    \end{subfigure}
    \hfill
    \begin{subfigure}[t]{1.000\textwidth}
        \raisebox{-\height}{\includegraphics[width=\textwidth]{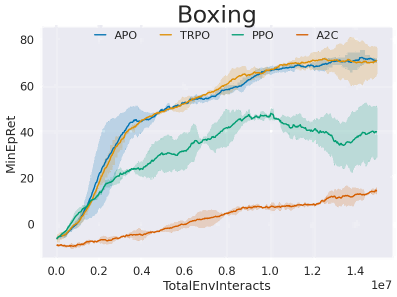}}
    \end{subfigure}
    \hfill
    \begin{subfigure}[t]
    {1.000\textwidth}
        \raisebox{-\height}{\includegraphics[width=\textwidth]{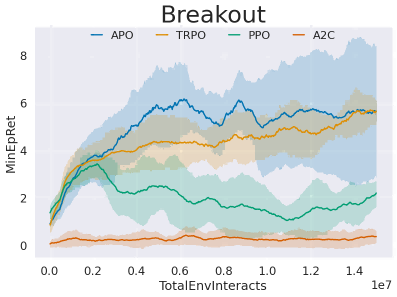}}
    \end{subfigure}
    \hfill
    \begin{subfigure}[t]{1.000\textwidth}
        \raisebox{-\height}{\includegraphics[width=\textwidth]{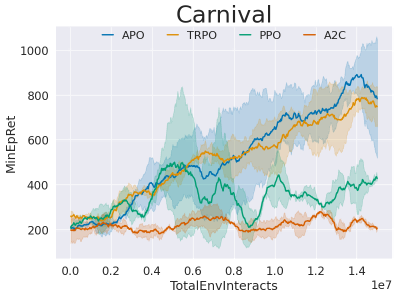}}
    \end{subfigure}
    \end{subfigure}
    \hfill
    \begin{subfigure}[t]{0.204\textwidth}
    \begin{subfigure}[t]{1.000\textwidth}
        \raisebox{-\height}{\includegraphics[width=\textwidth]{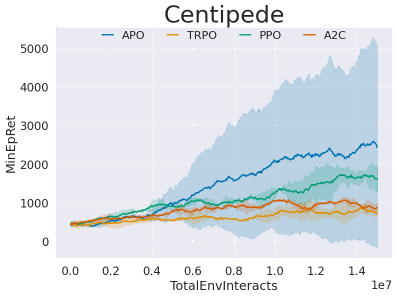}}
    \end{subfigure}
    \hfill
    \begin{subfigure}[t]{1.000\textwidth}
        \raisebox{-\height}{\includegraphics[width=\textwidth]{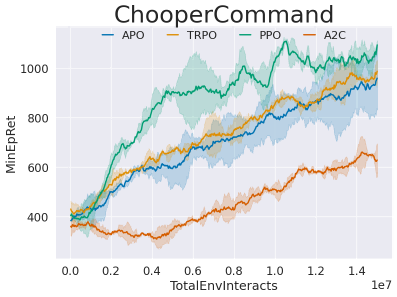}}
    \end{subfigure}
    \hfill
    \begin{subfigure}[t]{1.000\textwidth}
        \raisebox{-\height}{\includegraphics[width=\textwidth]{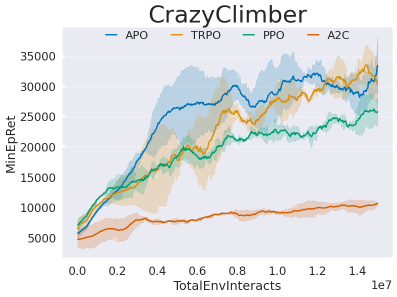}}
    \end{subfigure}
    \hfill
    \begin{subfigure}[t]{1.000\textwidth}
        \raisebox{-\height}{\includegraphics[width=\textwidth]{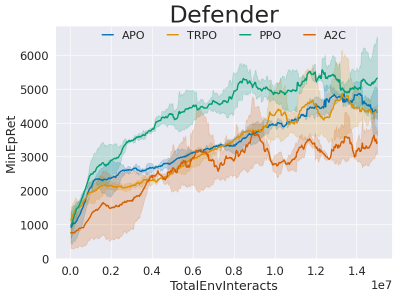}}
    \end{subfigure}
    \hfill
    \begin{subfigure}[t]{1.000\textwidth}
        \raisebox{-\height}{\includegraphics[width=\textwidth]{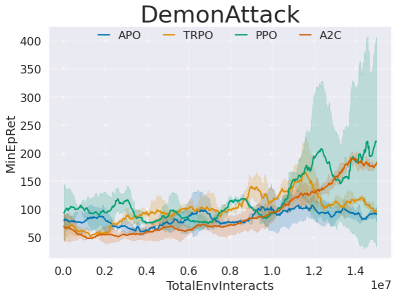}}
    \end{subfigure}
    \hfill
    \begin{subfigure}[t]{1.000\textwidth}
        \raisebox{-\height}{\includegraphics[width=\textwidth]{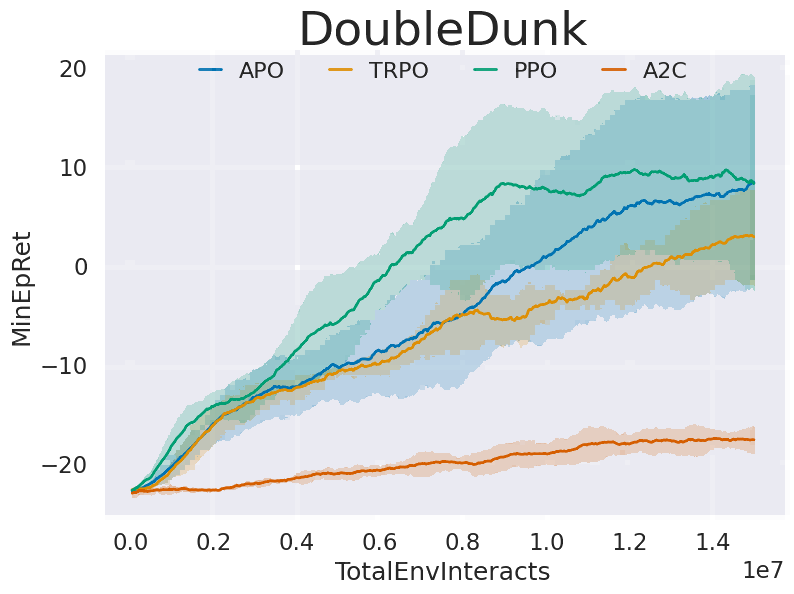}}
    \end{subfigure}
    \hfill
    \begin{subfigure}[t]{1.000\textwidth}
        \raisebox{-\height}{\includegraphics[width=\textwidth]{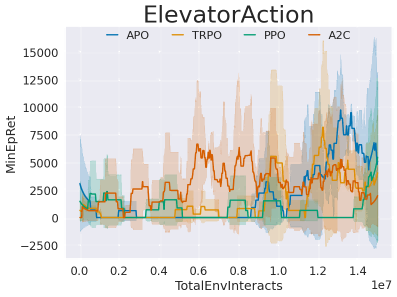}}
    \end{subfigure}
    \hfill
    \begin{subfigure}[t]{1.000\textwidth}
        \raisebox{-\height}{\includegraphics[width=\textwidth]{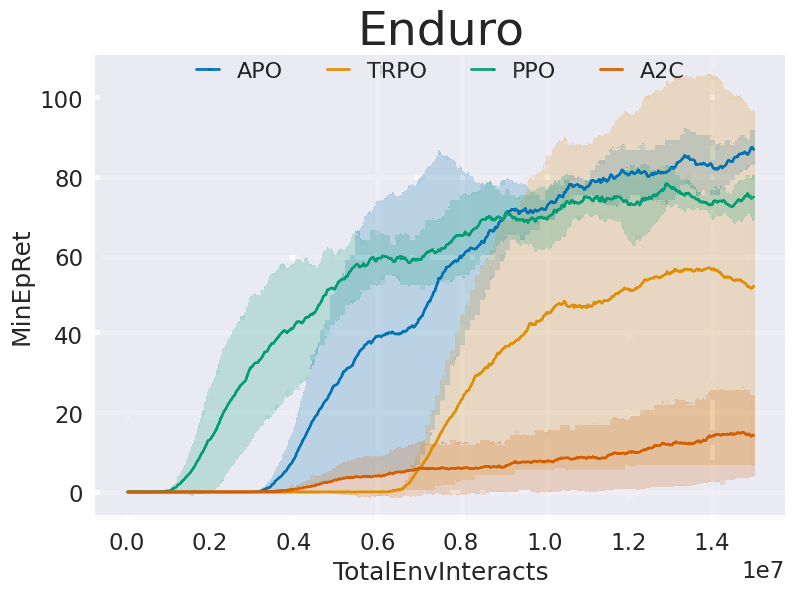}}
    \end{subfigure}
    \end{subfigure}
    \hfill
    \begin{subfigure}[t]{0.204\textwidth}
    \begin{subfigure}[t]{1.000\textwidth}
        \raisebox{-\height}{\includegraphics[width=\textwidth]{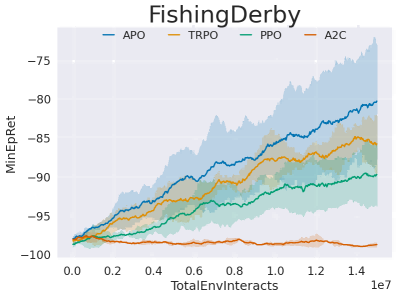}}
    \end{subfigure}
    \hfill
    \begin{subfigure}[t]{1.000\textwidth}
        \raisebox{-\height}{\includegraphics[width=\textwidth]{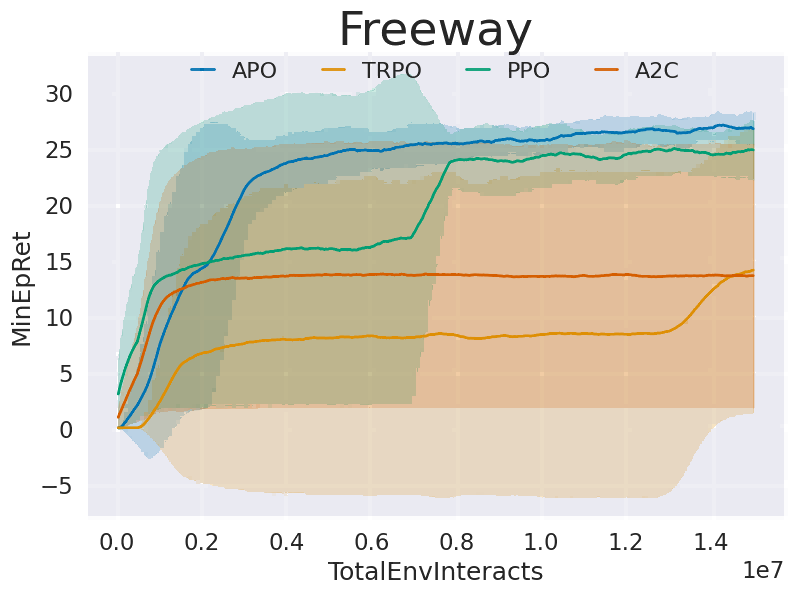}}
    \end{subfigure}
    \hfill
    \begin{subfigure}[t]{1.000\textwidth}
        \raisebox{-\height}{\includegraphics[width=\textwidth]{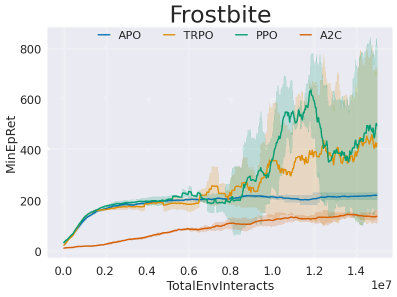}}
    \end{subfigure}
    \hfill
    \begin{subfigure}[t]{1.000\textwidth}
        \raisebox{-\height}{\includegraphics[width=\textwidth]{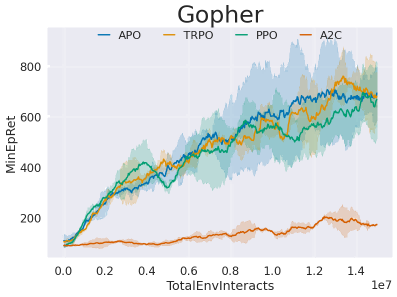}}
    \end{subfigure}
    \hfill
    \begin{subfigure}[t]{1.000\textwidth}
        \raisebox{-\height}{\includegraphics[width=\textwidth]{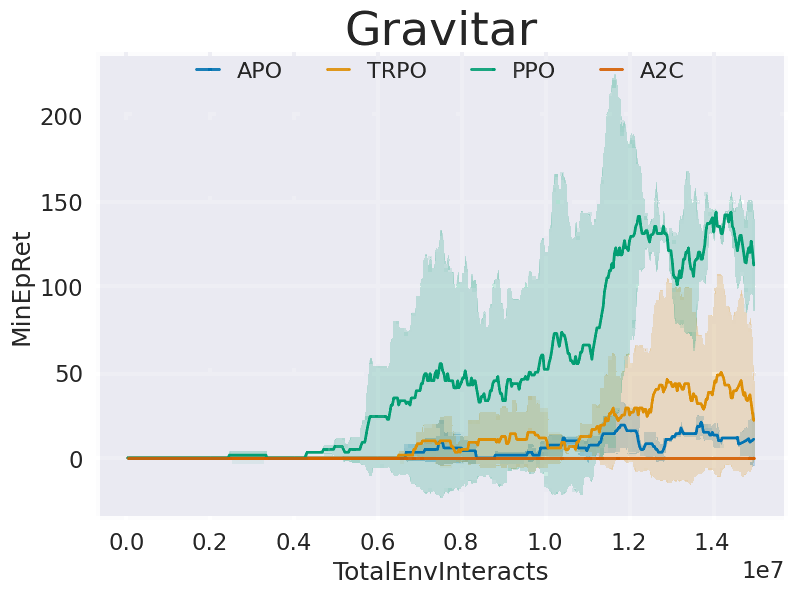}}
    \end{subfigure}
    \hfill
    \begin{subfigure}[t]{1.000\textwidth}
        \raisebox{-\height}{\includegraphics[width=\textwidth]{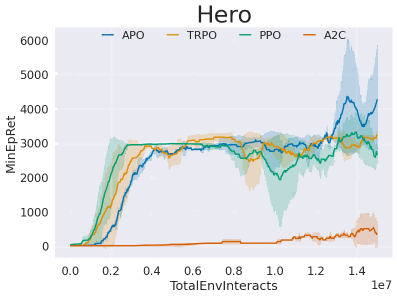}}
    \end{subfigure}
    \hfill
    \begin{subfigure}[t]{1.000\textwidth}
        \raisebox{-\height}{\includegraphics[width=\textwidth]{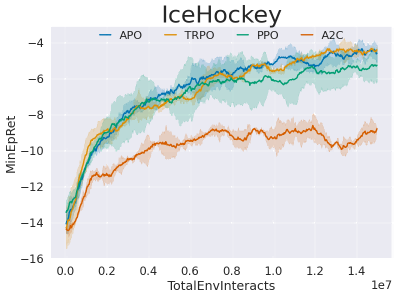}}
    \end{subfigure}
    \hfill
    \begin{subfigure}[t]{1.000\textwidth}
        \raisebox{-\height}{\includegraphics[width=\textwidth]{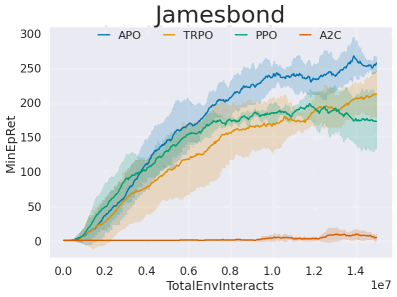}}
    \end{subfigure}
    \end{subfigure}
    \caption{Comparison of worst performance on Atari Game No.1 - No. 32}
    \label{fig: all min atari figures 1}
\end{figure}

\begin{figure}[t]
    \vspace{-40pt}
    \begin{subfigure}[t]{0.204\textwidth}
    \begin{subfigure}[t]{1.000\textwidth}
        \raisebox{-\height}{\includegraphics[width=\textwidth]{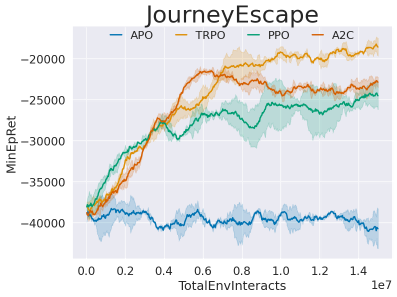}}
    \end{subfigure}
    \hfill
    \begin{subfigure}[t]{1.000\textwidth}
        \raisebox{-\height}{\includegraphics[width=\textwidth]{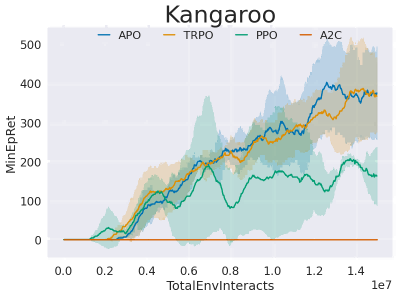}}
    \end{subfigure}
    \hfill
    \begin{subfigure}[t]{1.000\textwidth}
        \raisebox{-\height}{\includegraphics[width=\textwidth]{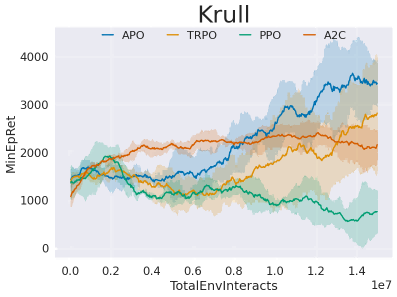}}
    \end{subfigure}
    \hfill
    \begin{subfigure}[t]{1.000\textwidth}
        \raisebox{-\height}{\includegraphics[width=\textwidth]{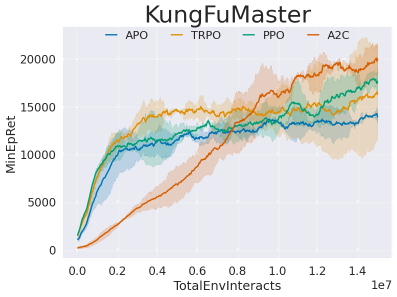}}
    \end{subfigure}
    \hfill
    \begin{subfigure}[t]{1.000\textwidth}
        \raisebox{-\height}{\includegraphics[width=\textwidth]{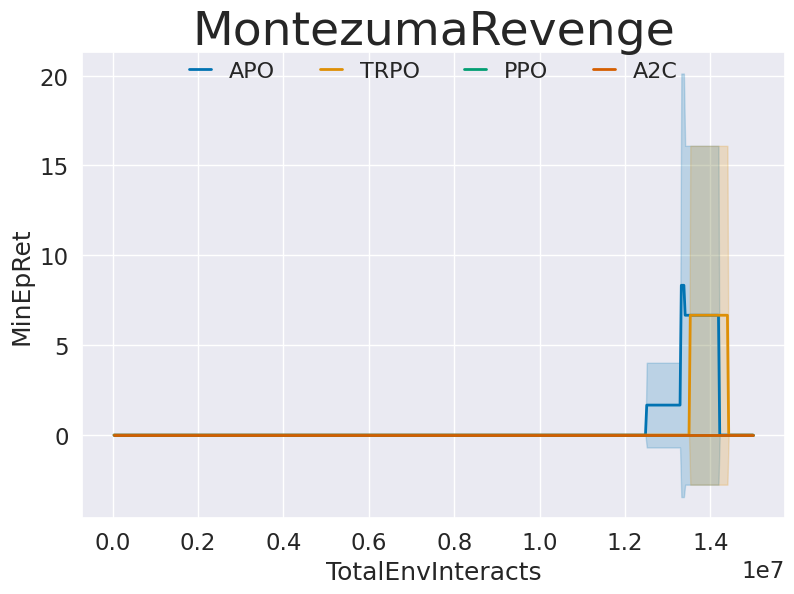}}
    \end{subfigure}
    \hfill
    \begin{subfigure}[t]{1.000\textwidth}
        \raisebox{-\height}{\includegraphics[width=\textwidth]{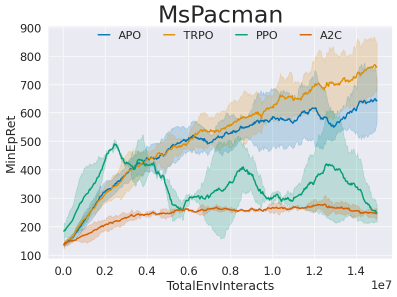}}
    \end{subfigure}
    \hfill
    \begin{subfigure}[t]{1.000\textwidth}
        \raisebox{-\height}{\includegraphics[width=\textwidth]{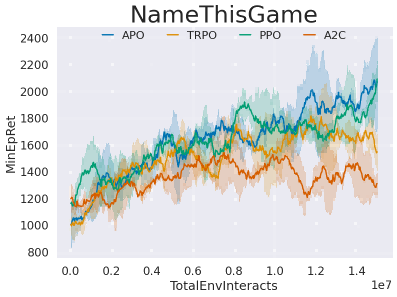}}
    \end{subfigure}
    \hfill
    \begin{subfigure}[t]{1.000\textwidth}
        \raisebox{-\height}{\includegraphics[width=\textwidth]{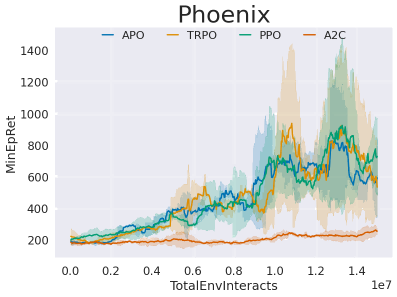}}
    \end{subfigure}
    \end{subfigure}
    \hfill
    \begin{subfigure}[t]{0.204\textwidth}
    \begin{subfigure}[t]{1.000\textwidth}
        \raisebox{-\height}{\includegraphics[width=\textwidth]{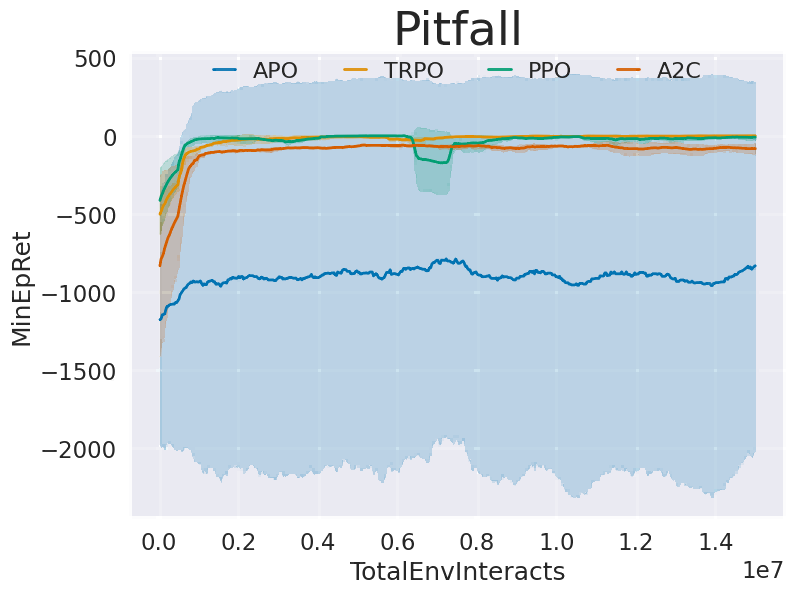}}
    \end{subfigure}
    \hfill
    \begin{subfigure}[t]{1.000\textwidth}
        \raisebox{-\height}{\includegraphics[width=\textwidth]{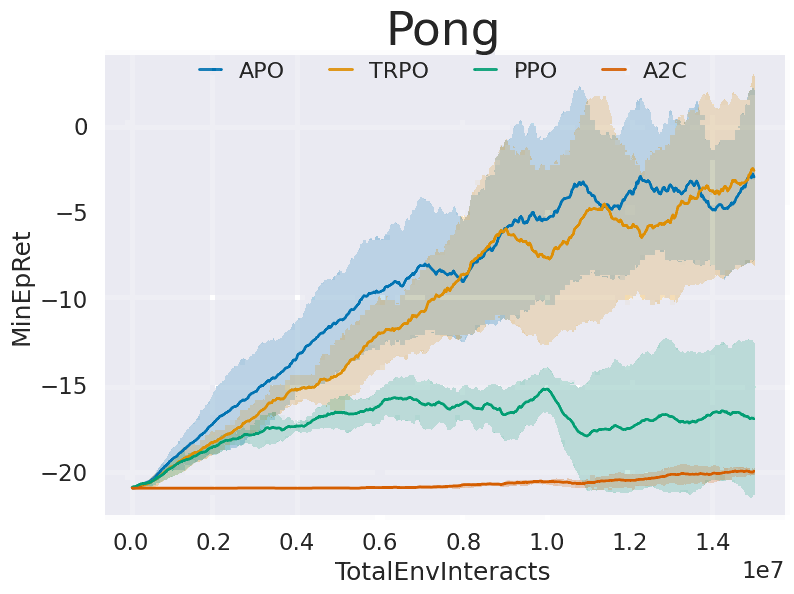}}
    \end{subfigure}
    \hfill
    \begin{subfigure}[t]{1.000\textwidth}
        \raisebox{-\height}{\includegraphics[width=\textwidth]{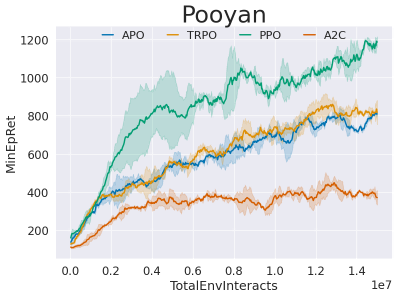}}
    \end{subfigure}
    \hfill
    \begin{subfigure}[t]{1.000\textwidth}
        \raisebox{-\height}{\includegraphics[width=\textwidth]{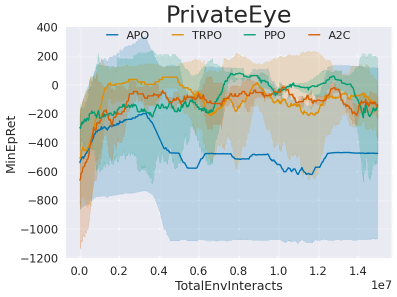}}
    \end{subfigure}
    \hfill
    \begin{subfigure}[t]{1.000\textwidth}
        \raisebox{-\height}{\includegraphics[width=\textwidth]{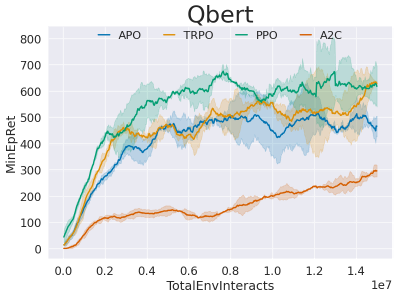}}
    \end{subfigure}
    \hfill
    \begin{subfigure}[t]{1.000\textwidth}
        \raisebox{-\height}{\includegraphics[width=\textwidth]{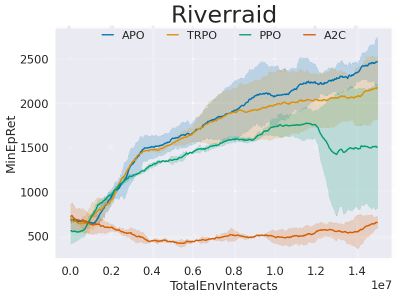}}
    \end{subfigure}
    \hfill
    \begin{subfigure}[t]{1.000\textwidth}
        \raisebox{-\height}{\includegraphics[width=\textwidth]{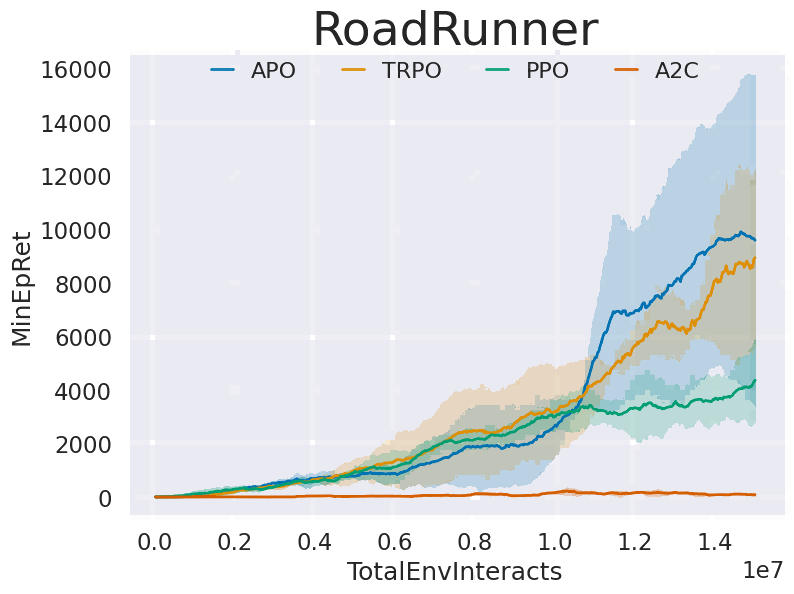}}
    \end{subfigure}
    \hfill
    \begin{subfigure}[t]{1.000\textwidth}
        \raisebox{-\height}{\includegraphics[width=\textwidth]{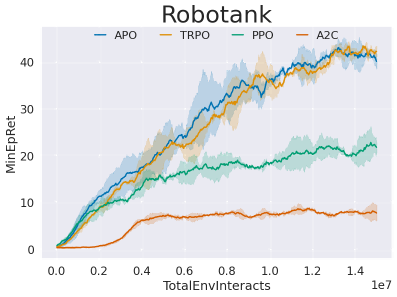}}
    \end{subfigure}
    \end{subfigure}
    \hfill
    \begin{subfigure}[t]{0.204\textwidth}
    \begin{subfigure}[t]{1.000\textwidth}
        \raisebox{-\height}{\includegraphics[width=\textwidth]{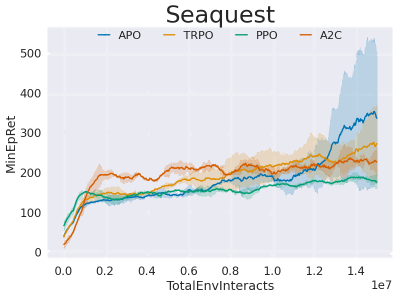}}
    \end{subfigure}
    \hfill
    \begin{subfigure}[t]{1.000\textwidth}
        \raisebox{-\height}{\includegraphics[width=\textwidth]{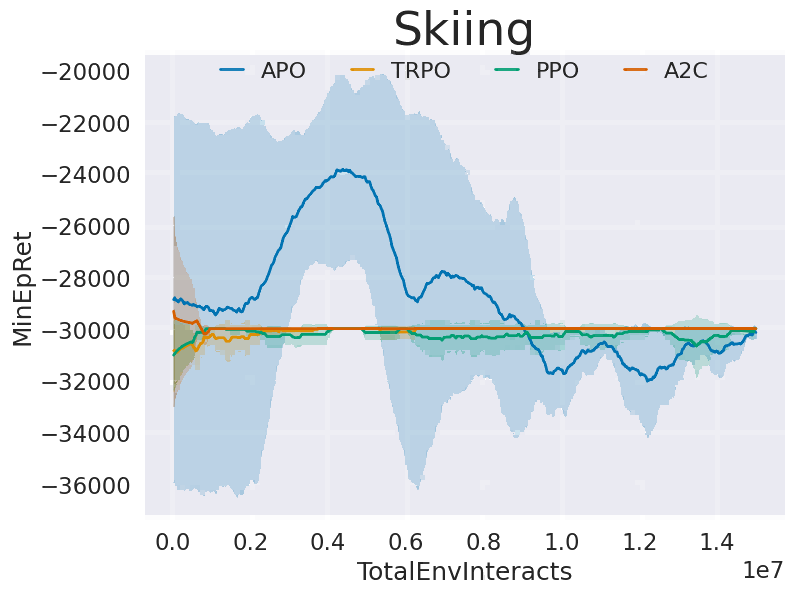}}
    \end{subfigure}
    \hfill
    \begin{subfigure}[t]{1.000\textwidth}
        \raisebox{-\height}{\includegraphics[width=\textwidth]{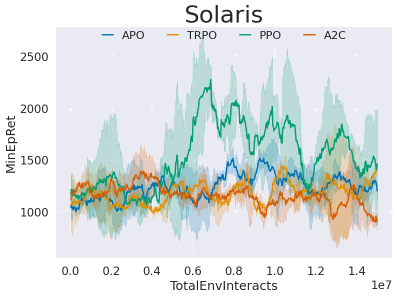}}
    \end{subfigure}
    \hfill
    \begin{subfigure}[t]{1.000\textwidth}
        \raisebox{-\height}{\includegraphics[width=\textwidth]{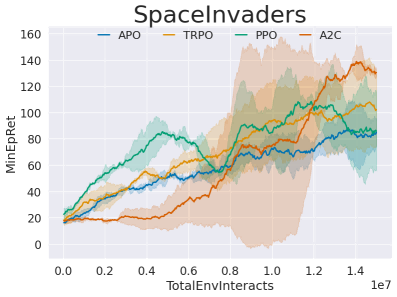}}
    \end{subfigure}
    \hfill
    \begin{subfigure}[t]{1.000\textwidth}
        \raisebox{-\height}{\includegraphics[width=\textwidth]{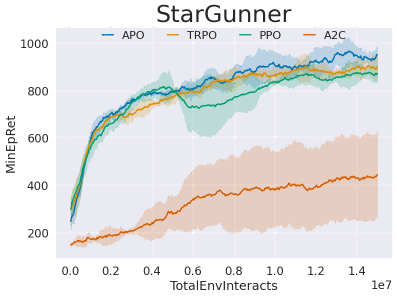}}
    \end{subfigure}
    \hfill
    \begin{subfigure}[t]{1.000\textwidth}
        \raisebox{-\height}{\includegraphics[width=\textwidth]{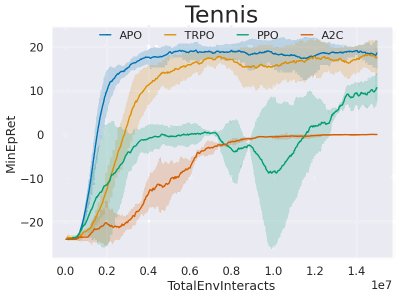}}
    \end{subfigure}
    \hfill
    \begin{subfigure}[t]{1.000\textwidth}
        \raisebox{-\height}{\includegraphics[width=\textwidth]{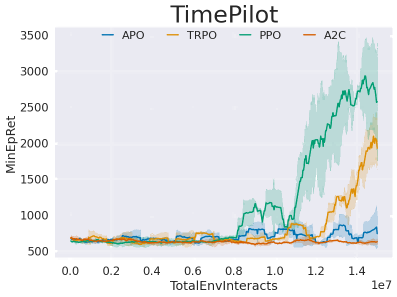}}
    \end{subfigure}
    
    \end{subfigure}
    \hfill
    \begin{subfigure}[t]{0.204\textwidth}
    \begin{subfigure}[t]{1.000\textwidth}
        \raisebox{-\height}{\includegraphics[width=\textwidth]{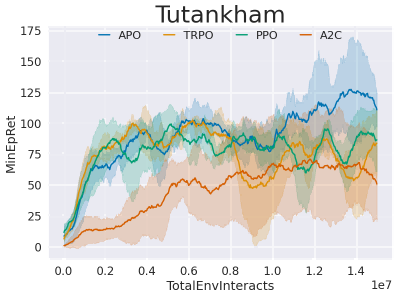}}
    \end{subfigure}
    \hfill
    \begin{subfigure}[t]{1.000\textwidth}
        \raisebox{-\height}{\includegraphics[width=\textwidth]{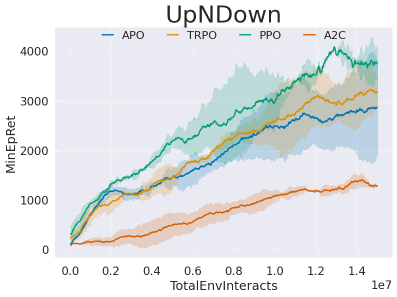}}
    \end{subfigure}
    \hfill
    \begin{subfigure}[t]{1.000\textwidth}
        \raisebox{-\height}{\includegraphics[width=\textwidth]{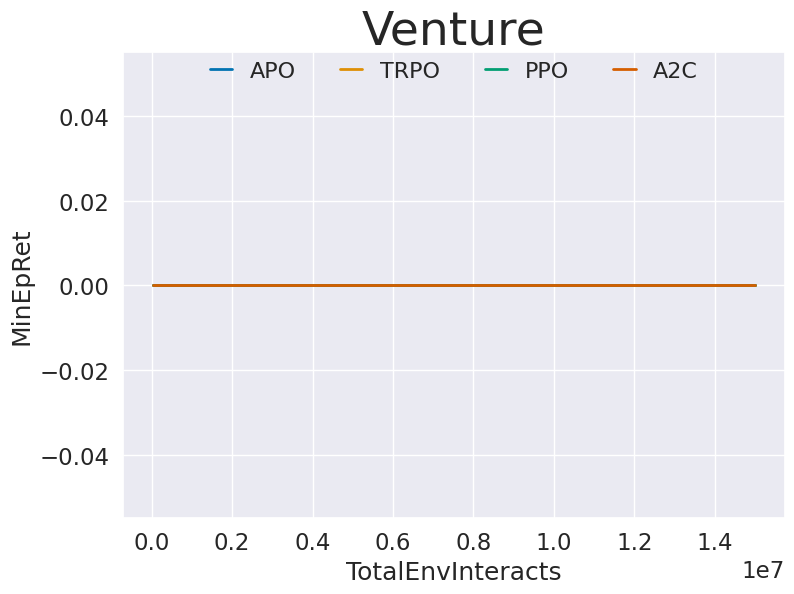}}
    \end{subfigure}
    \hfill
    \begin{subfigure}[t]{1.000\textwidth}
        \raisebox{-\height}{\includegraphics[width=\textwidth]{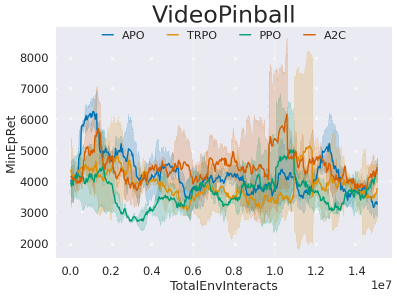}}
    \end{subfigure}
    \hfill
    \begin{subfigure}[t]{1.000\textwidth}
        \raisebox{-\height}{\includegraphics[width=\textwidth]{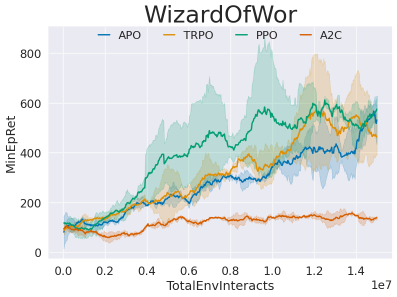}}
    \end{subfigure}
    \hfill
    \begin{subfigure}[t]{1.000\textwidth}
        \raisebox{-\height}{\includegraphics[width=\textwidth]{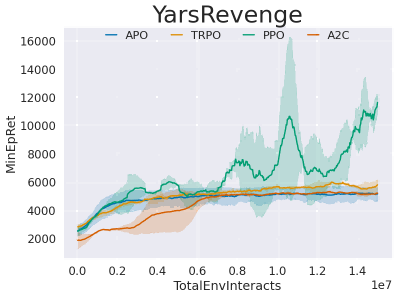}}
    \end{subfigure}
    \hfill
    \begin{subfigure}[t]{1.000\textwidth}
        \raisebox{-\height}{\includegraphics[width=\textwidth]{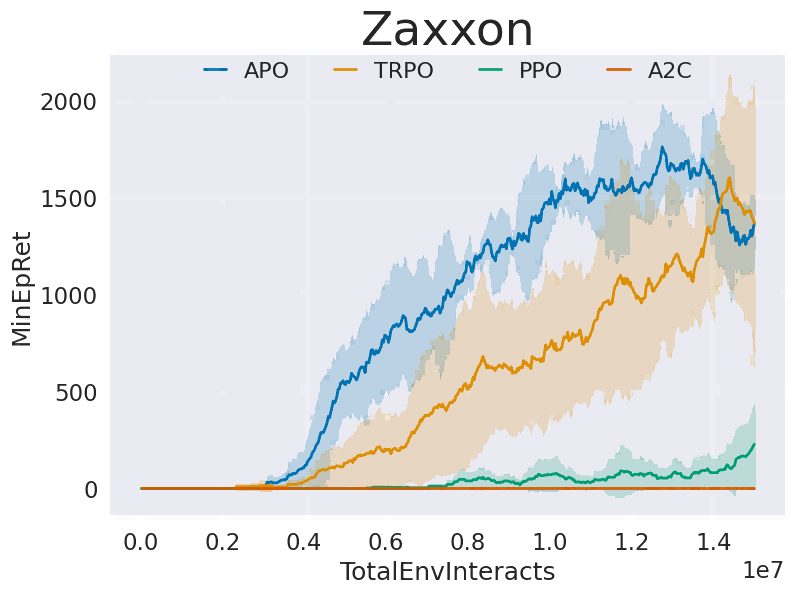}}
    \end{subfigure}
    \end{subfigure}
    \caption{Comparison of worst performance on Atari Game No.33 - No. 62}
    \label{fig: all min atari figures 2}
\end{figure}

\clearpage

\end{document}